\documentclass[11pt]{article}
\usepackage[left=1in, right=1in, top=1in]{geometry}
\usepackage{xargs}
\usepackage[numbers]{natbib}
\usepackage{fancyhdr}
\usepackage{setspace}
\usepackage{lastpage}
\usepackage{upgreek}
\usepackage{amsmath,mathtools,amsthm,amsfonts,amssymb}	
\usepackage{amssymb,dsfont,bbm}	
\usepackage{xcolor}  
\usepackage{bm}
\usepackage{enumitem}
\usepackage{mathrsfs}

\usepackage{algorithm}
\usepackage{algorithmic}

\setcitestyle{number}

\usepackage[colorlinks=true,breaklinks=true,bookmarks=true,urlcolor=blue,citecolor=blue,linkcolor=blue,bookmarksopen=false,draft=false]{hyperref}

\usepackage{aliascnt}
\usepackage{cleveref}

\usepackage{microtype}
\usepackage{graphicx}
\usepackage{subcaption}
\usepackage{booktabs} 
\usepackage{times}

\usepackage{mathtools}
\usepackage{xargs}
\usepackage{bbm}
\usepackage{enumitem}

\usepackage{hyperref}
\mathtoolsset{showonlyrefs}

\usepackage[textsize=tiny,textwidth=2.0cm]{todonotes}
\newcommand{\cA}{\mathcal{A}}

\newcommand{\cN}{\mathcal{N}}

\newcommand{\cS}{\mathcal{S}}

\newcommand{\bR}{\mathbb{R}}

\newcommand{\ust}{^{\star}}

\newcommand{\eps}{\epsilon}

\newcommand{\br}[1]{\left(#1\right)}

\newcommand{\mycomment}[1]{}

\newtheorem{assumA}{\textbf{A}\hspace{-1pt}}
\Crefname{assumA}{\textbf{A}\hspace{-1pt}}{\textbf{A}\hspace{-1pt}}
\crefname{assumA}{\textbf{A}}{\textbf{A}}

\Crefname{assumM}{\textbf{M}\hspace{-1pt}}{\textbf{M}\hspace{-1pt}}
\crefname{assumM}{\textbf{M}}{\textbf{M}}

\newtheorem{theorem}{Theorem}
\crefname{theorem}{theorem}{Theorems}
\Crefname{Theorem}{Theorem}{Theorems}

\newtheorem{lemma}{Lemma}
\crefname{lemma}{lemma}{lemmas}
\Crefname{Lemma}{Lemma}{Lemmas}

\crefname{corollary}{corollary}{corollaries}
\Crefname{Corollary}{Corollary}{Corollaries}

\newaliascnt{proposition}{theorem}
\newtheorem{proposition}[proposition]{Proposition}
\aliascntresetthe{proposition}
\crefname{proposition}{proposition}{propositions}
\Crefname{Proposition}{Proposition}{Propositions}

\newaliascnt{definition}{theorem}

\aliascntresetthe{definition}
\crefname{definition}{definition}{definitions}
\Crefname{Definition}{Definition}{Definitions}

\newaliascnt{definitionProposition}{theorem}

\aliascntresetthe{definitionProposition}
\crefname{Proposition and Definition}{Proposition and Definition}{Proposition and Definition}
\Crefname{Proposition and Definition}{Proposition and Definition}{Proposition and Definition}

\newtheorem{remark}{Remark}
\crefname{remark}{remark}{remarks}
\Crefname{Remark}{Remark}{Remarks}

\crefname{example}{example}{examples}
\Crefname{Example}{Example}{Examples}

\crefname{figure}{figure}{figures}
\Crefname{Figure}{Figure}{Figures}

\newcommand{\E}{\mathsf{E}}
\newcommand{\Pb}{\mathsf{P}}

\newcommand{\eqsp}{\;}

\newcommand{\e}{\mathbf{e}}
\newcommand{\Lam}{\mathbf{\Lambda}}
\newcommand{\bMKQ}{\mathbf{P}}
\newcommand{\MKQ}{\mathrm{P}}

\newcommand{\totMKQ}{\bar{\mathrm{P}}}
\newcommand{\beps}{\boldsymbol{\varepsilon}}
\def\nset{\ensuremath{\mathbb{N}}}
\newcommand{\pois}[2]{\boldsymbol{g}^{#1}_{#2}}
\def\taumix{t_{\operatorname{mix}}}
\def\tvdist{\mathsf{d}_{\operatorname{tv}}}

\def\nset{\ensuremath{\mathbb{N}}}
\def\rset{\mathbb{R}}
\def\Xset{\mathsf{X}}
\def\Xsigma{\mathcal{X}}
\newcommand{\rmd}{\mathrm{d}}
\newcommand{\Sigmabf}{\boldsymbol{\Sigma}}

\newcommand{\wR}{\widehat{\mathrm{R}}}
\newcommand{\Cxi}{\operatorname{C}_{\xi}}
\newcommand{\Com}{\operatorname{C}_{\omega}}
\newcommand{\Cmoment}{\operatorname{C}_{\mathrm{M}}}

\newcommand{\moment}[2]{\mathrm{M}_{#1}^{#2}}
\newcommand{\gprod}[2]{\mathrm{g}_{#1:#2}}
\newcommand{\Rlast}{\mathrm{R}^{\operatorname{last}}}
\newcommand{\Rpr}{\mathrm{R}^{\operatorname{pr}}}
\newcommand{\dconv}{\mathbf{d}_{\mathrm{C}}}

\def\PE{\mathsf{E}}
\usepackage{times}

\begin{document}
\title{On Gaussian approximation for entropy-regularized Q-learning with function approximation}

\author{A. Rubtsov~\footnote{HSE University, Moscow, Russia,  \texttt{asrubtsov@hse.ru}},  R. Singh~\footnote{Mohamed Bin Zayed University of AI, UAE, \texttt{rahulsingh0188@gmail.com}},  E. Moulines~\footnote{Mohamed Bin Zayed University of AI, UAE, \texttt{eric.moulines@mbzuai.ac.ae}}
\footnote{EPITA, France, \texttt{eric.moulines@epita.fr}},  A. Naumov~\footnote{HSE University \& Steklov Mathematical Institute of Russian Academy of Sciences, Moscow, Russia,  \texttt{anaumov@hse.ru}.},
S. Samsonov~\footnote{HSE University, Moscow, Russia,  \texttt{svsamsonov@hse.ru}.}}

\maketitle

\begin{abstract}
In this paper, we derive rates of convergence in the high-dimensional central limit theorem for Polyak--Ruppert averaged iterates generated by entropy-regularized asynchronous Q-learning with linear function approximation and a polynomial stepsize $k^{-\omega}$, $\omega \in (1/2,1)$. Assuming that the sequence of observed triples $(s_k,a_k,s_{k+1})_{k \geq 0}$ forms a uniformly geometrically ergodic Markov chain, and under suitable regularity conditions for the projected soft Bellman equation, we establish a Gaussian approximation bound in the convex distance with rate of order $n^{-1/4}$, up to polylogarithmic factors in $n$, where $n$ is the number of samples used by the algorithm. To obtain this result, we combine a linearization of the soft Bellman recursion with a Gaussian approximation for the leading martingale term. Finally, we derive high-order moment bounds for the algorithm's last iterate, which might be of independent interest.
\end{abstract}

\section{Introduction}
Q-learning \cite{watkins1992qlearning} is one of the foundational model-free algorithms in reinforcement learning (RL), which proved its practical utility in various settings \cite{sutton2018reinforcement,mnih2015human}. At the same time, statistical analysis of Q-learning remains a challenging and delicate problem \cite{jin2018q,wainwright2019variance,qu2020finite}.

A substantial literature now explains when Q-learning converges and how fast its estimation error decays. Classical asymptotic analyses have been complemented by modern nonasymptotic rate and sample-complexity results in both synchronous and asynchronous settings~\cite{tsitsiklis1994asynchronous,qu2020finite,li2024minimax}. Most existing concentration analyses quantify only the magnitude of the error $\|Q_k - Q\ust\|$ through expectation or high-probability bounds. Yet for statistical tasks such as uncertainty quantification, one needs a finite-time central limit theorem (CLT), namely a bound that quantifies how accurately a normalized Q-learning error is approximated by a Gaussian distribution.

The question of the quality of Gaussian approximation for stochastic approximation algorithms recently gained a lot of attention \cite{shao2022berry, samsonov2024gaussian}, in particular for the Polyak--Ruppert averaged \cite{ruppert1988efficient,polyak1992acceleration} iterations. This technique is known to improve statistical efficiency while stabilizing and often accelerating stochastic approximation algorithms. For tabular $Q$-learning, the averaged iterates $\bar{Q}_n = \sum_{i=1}^{n}Q_i \slash n$ have been shown to be asymptotically normal, i.e. 
\[
\textstyle 
\sqrt{n} \br{\bar{Q}_n - Q\ust} \xrightarrow{d} \cN\br{0,\Sigma_{\infty}}\,,
\]
where $\Sigma_{\infty}$ is the limiting covariance~\cite{li2023statistical}. Recently, ~\cite{rubtsov2026gaussian} derived a Berry-Esseen theorem for tabular Q-learning which quantifies the rate of convergence to Gaussianity over the class of hyper-rectangles with convergence rate $O\br{\log(|\cS||\cA|n)n^{-1\slash 6}}$. At the same time, the latter result is accompanied with instance-dependent factors which unavoidably scale at least linearly with $|\cS||\cA|$. Such dependence may be tolerable in small tabular problems, but it becomes restrictive in the large-scale settings. A common technique to overcome this problem is to resort to function approximation techniques \cite{bertsekas1996neuro}.
\par 
Another strong assumption arising in most of the analysis of vanilla $Q$-learning is the uniqueness of the optimal policy $\pi^{\star}$, see e.g. \cite{li2023statistical} and the discussion after \Cref{thm:clt} in this work. This assumption is naturally satisfied in the setting of entropy-regularized $Q$-learning \cite{haarnoja2017reinforcement,haarnoja2018soft}. The latter algorithm has shown practical gains and, at the same time, the entropy regularization is a theoretically attractive feature which prevents the optimal policy from becoming degenerate. At the same time, statistical analysis of this algorithm remains limited with only an asymptotic CLT available \cite{li2023statistical}. This motivates the central question of the present paper: 
\vspace{-2mm}
\begin{quote}
\itshape
Can one establish a Gaussian approximation bound for entropy-regularized $Q$-learning with linear function approximation whose finite-sample guarantees depend on intrinsic feature dimension rather than on the tabular cardinalities $S$ and $A$?
\end{quote}
\vspace{-2mm}
\textbf{Our main contribution} is an affirmative answer to this question. Precisely, we derive a Gaussian approximation for entropy-regularized $Q$-learning with linear function approximation in convex distance (see formal definition in \Cref{subsec:gaus_approx}, equation \eqref{eq:berry-esseen}) with the rate $\tilde{O}(n^{-1/4})$\footnote{We use the notation \(\tilde{O}\) to denote bounds up to polylogarithmic factors.} up to polylogarithmic factors in $n$. The quality of Gaussian approximation is determined by the sample size $n$ and feature dimensionality $d$ rather than the size of the ambient space. We show that the entropy-regularized problem admits faster convergence rate compared to the existing results for vanilla $Q$-learning \cite{rubtsov2026gaussian}. 
\vspace{-2mm}
\subsection{Related works}
\vspace{-2mm}
Asymptotic convergence of Q-learning and SARSA algorithms with linear function approximation under appropriate assumptions was shown in~\cite{melo2008analysis}. More recently,~\cite{zou2019finite} performs a finite-time analysis of Q-learning and SARSA under the same set of assumptions and shows that the mean square error decays as $\tilde{O}\br{1/n}$, while~\cite{chen2022finite} derives similar results under a different set of assumptions. \cite{xie2022statistical} establishes a functional CLT (FCLT) for SA with constant step-sizes ($Q$-learning with fixed step-size is one application of this), and constructs confidence intervals for parameters via random scaling. \cite{li2023statistical} studies Q-learning for MDPs under the generative assumption, and derives asymptotic CLT for entropy-regularized Q-learning. 
\par 
The mentioned Gaussian approximation results, despite their statistical significance, remain fully asymptotic. Available non-asymptotic guarantees primarily arise for stochastic gradient descent (SGD) algorithm \cite{shao2022berry,sheshukova2025gaussian} and linear stochastic approximation (LSA)  \cite{samsonov2024gaussian,wu2024statistical} with independent noise. The latter LSA results were generalized to LSA with Markov noise under the standard uniform geometric ergodicity assumptions in \cite{wu2025uncertainty,samsonov2026statistical}. Non-asymptotic CLT for Q-learning has been recently developed in \cite{liu2025central} in the Wasserstein
distance at a rate of $O\big(\sqrt{SA} n^{-1\slash 6}\big)$. The latter polynomial dependence was improved in a recent work \cite{rubtsov2026gaussian}, where $\tilde{O}\br{n^{-1\slash 6}}$ convergence rate has been achieved for the class of hyper-rectangles. Detailed discussion of these two results is provided after \Cref{thm:clt}.

\vspace{-2mm}
\paragraph{Notations.} For a Markov kernel $\MKQ$ on $(\Xset,\Xsigma)$, and a measurable function $f: \Xset \to \rset$, we set $\MKQ f(x) = \int_{\Xset} f(y) \MKQ(x,\rmd y)$. 
Define total variation distance $\tvdist(\mu, \nu)$ for probability measures $\mu, \nu$: $
    \tvdist(\mu, \nu) = \tfrac{1}{2}\sup |\mu(f) - \nu(f)|$, where supremum is taken over all functions with $\|f\|_{\infty} \le 1$. The notations $\|\cdot\|$ and $\|\cdot\|_{\infty}$ refer to the spectral norm and the $\ell_\infty$ norm,
respectively. We use $\lesssim$ for inequalities that hold up to an
absolute, problem-independent constant, and \(\lesssim_{\mathrm{pr}}\) to denote
inequalities that hold up to a problem-dependent constant. We write
\(a\asymp b\) if both \(a\lesssim b\) and \(b\lesssim a\) hold.

\section{Main results}

\subsection{Q-learning with function approximation}

We begin this section by specifying the set of assumptions that will be used to derive moment bounds and a
non-asymptotic central limit theorem for Q-learning iterates.
Consider an infinite-horizon discounted Markov decision process (DMDP) defined by the tuple 
\(\mathcal{M} = (\cS, \cA, \MKQ, r, \gamma)\), where \(\cS\) denotes the state space, \(\cA\) denotes the action space, 
\(\MKQ : \cS \times \cA \to \Delta(\cS)\) represents the controlled transition probabilities, and \(\gamma \in (0,1)\) is the discount factor. 
Throughout, we impose the following regularity condition.

\begin{assumA}
\label{assum:regularity}
Assume that both \(\cS\) and \(\cA\) are finite sets with cardinalities \(S\) and \(A\), respectively. 
Moreover, the immediate reward function \(r(s,a)\) is deterministic and bounded within the interval \([0,1]\).
\end{assumA}

A stationary Markov \textit{policy} $\pi$ samples the action as a function of the current state, i.e. $\pi:\cS\to\Delta(\cA)$. Given an initial state $s_0\sim \nu$, a policy $\pi$, and the transition kernel $\MKQ$,
the process evolves as $a_t\sim \pi(\cdot\mid s_t)$ and $s_{t+1}\sim \MKQ(\cdot\mid s_t,a_t)$.~The entropy-regularized discounted
return is defined as 
\[
J_ \lambda^\pi
=
\sum_{t=0}^{\infty}\gamma^t
\big(
r(s_t,a_t)
+
 \lambda \mathcal{H}\big(\pi(\cdot\mid s_t)\big)
\big)\eqsp,
\]
where $\lambda>0$ is the temperature parameter and $\mathcal{H}\big(\pi(\cdot\mid s)\big) =
-\sum_{a\in\mathcal{A}}\pi(a\mid s)\log \pi(a\mid s)$ is the Shannon entropy of the action probabilities in state $s$. Adding the entropy term tends to improve exploration and often makes learning more robust to estimation errors~\cite{ziebart2008maximum,haarnoja2017reinforcement,haarnoja2018soft}.~The corresponding entropy-regularized value and action-value functions are
\[
V_ \lambda^\pi(s)
=
\E\big[J_ \lambda^\pi\mid s_0=s\big]\eqsp,\qquad Q_\lambda^\pi(s,a)
=
\E\big[J_\lambda^\pi\mid s_0=s,\ a_0=a\big]\eqsp .
\]
The goal is to learn the optimal
entropy-regularized action-value function
\[
Q_ \lambda^\star(s,a)
=
\sup_{\pi} Q_ \lambda^\pi(s,a)
\]
in the setting where the transition kernel $\MKQ$ is unknown. The Q-learning algorithm iteratively updates an approximation of the optimal action–value function 
based on observed data. Assume that the agent only has access to a single  trajectory \(\{s_t,a_t,r_t\}_{t=0}^{T}\) generated under a fixed \textit{behavior policy} \(\pi_b\). 
Formally, the data are generated according to
\[
a_t \sim \pi_b(\cdot \mid s_t), 
\quad r_t = r(s_t,a_t), 
\quad s_{t+1} \sim \MKQ(\cdot \mid s_t,a_t).
\]
We further identify $\MKQ$ with a stochastic matrix of shape $\rset^{SA \times S}$. For any stationary policy \(\pi\), we define the corresponding transition matrix by
\begin{equation}
    \MKQ^{\pi}\bigl((s,a),(s',a')\bigr)
    = \MKQ(s' \mid s,a)\,\pi(a' \mid s') \eqsp.
\end{equation}
The behavior policy induces an important sequence of observations \(  z_k = (s_k, a_k, s_{k+1}) \) on the space  
\(\mathcal{Z} = \mathcal{S} \times \mathcal{A} \times \mathcal{S} \).
For this chain, we denote by $\mu$ its stationary distribution, assuming that it exists and is unique. The associated Markov kernel writes as 
\[
\bar{\MKQ}((s_2,a_2,s_2') | (s_1,a_1,s_1')) = \mathbf{1}\{s_2 = s_1'\} \pi_b(a_2 | s_2) \MKQ(s_2' \mid s_2, a_2) \eqsp. 
\]
Sufficient exploration of the
transition dynamic is ensured by the following assumption.
\begin{assumA}
\label{assum:UGE}
The Markov kernel \(\bar{\mathrm{\MKQ}}\) is uniformly 
geometrically ergodic, that is, there exists \(\taumix \in \nset\) such that for all \(t \in \nset\) and \(z\in \mathcal{Z}\),
\begin{equation}
 d_{\operatorname{tv}}\big( \bar{\MKQ}^t(\,\cdot \mid z), {\mu} \bigr)
   \;\leq\; \left(1/4\right)^{\lfloor t / \taumix \rfloor}\, .
\end{equation}
In addition, the Markov chain is initialized from stationarity, i.e., \(z_0\sim\mu\).
\end{assumA}
\Cref{assum:UGE} is standard in finite-time analysis, both for last-iterate convergence \cite{chen2022finite,li2024minimax} and for CLT results \cite{rubtsov2026gaussian,liu2025central}. We note that the assumption \(z_0\sim\mu\) is introduced only for clarity of exposition. The same arguments extend without difficulty to arbitrary initial distributions.

We study Q-learning with linear function approximation where the Q-function can be represented by a linear combination of a set
of \(d\) feature functions \(\phi_1, \ldots, \phi_d\), where $\phi_i : \cS \times \cA \to \bR$. 
For a given \(\theta \in \rset^d\), we approximate the action-value function as \(Q_\theta(s,a) = \sum_{k=1}^d\phi_k(s,a)\theta_k\). With
the feature matrix \(\Phi=(\phi_1, \ldots, \phi_d)\in\rset^{SA\times d}\) we have \(Q_\theta = \Phi\theta\). For a temperature parameter \( \lambda>0\), 
the soft value function is defined as follows~\cite{ziebart2008maximum,haarnoja2017reinforcement,haarnoja2018soft}, 
\[
V_\theta^ \lambda(s)
:=
 \lambda \log \sum_{a\in\mathcal A}
\exp\big({Q_\theta(s,a)}/\lambda\big)\eqsp.
\]
We define the soft-greedy policy induced by \(Q_{\theta}\) as
\[
\pi_\theta^\lambda(a\mid s)
=
\frac{
\exp\bigl(Q_{\theta}(s,a)/\lambda\bigr)
}{
\sum_{a'\in\mathcal A}
\exp\bigl(Q_{\theta}(s,a')/\lambda\bigr)
}\eqsp .
\]

The goal is to find a solution to the projected Bellman equation
\begin{align}
\label{eq:projected_bellman}
\Pi_\mu \big( r + \gamma\MKQ V^{\lambda}_{\theta^\star} - Q_{\theta^\star} \big) = 0,
\end{align}
where $\Pi_\mu$ denotes the orthogonal projection onto $\mathrm{span}(\Phi)$
with respect to the inner product induced by measure \(\mu\).
Asynchronous entropy-regularized Q-learning with function approximation is
formalized as follows. The agent initializes \(\theta_0\) arbitrarily. At time
\(t\), the agent observes the transition tuple
\((s_t,a_t,r_t,s_{t+1})\) and updates \(\theta_t\) according to
\begin{align}\label{eq:q_learning_updates}
\theta_{t+1}
=
\theta_t
+
\alpha_t\,\phi(s_t,a_t)
\Big(
r(s_t,a_t)
+
\gamma V_{\theta_t}^\lambda(s_{t+1})
-
\phi(s_t,a_t)^\top\theta_t
\Big)\eqsp,
\end{align}
where \(\{\alpha_t\}_{t\ge 0}\) is a step-size sequence. 
The complete algorithm is described in Algorithm \ref{Q learn alg}. 
\begin{algorithm}
    \centering
    \caption{Q-learning with function approximation}\label{Q learn alg}
    \begin{algorithmic}\label{algo:Q-learn}
        \STATE { \textbf{Input parameters:} learning rates $\{\alpha_t\}$, number of iterations $T$.}
        \STATE \textbf{Initialization:} $\theta_0 = 0$.
        \FOR{$t = 0,1,\ldots,T-1$}
          \STATE Draw action $a_{t} \sim \pi_b(\,\cdot \mid s_{t})$
    \STATE Draw next state $s_{t+1} \sim P(\,\cdot \mid s_{t}, a_{t})$
    \STATE Update $\theta_{t+1}$ according to \eqref{eq:q_learning_updates}
        \ENDFOR
    \end{algorithmic}
\end{algorithm}

In this work, we consider polynomially decaying step sizes \(\{\alpha_t\}\). While the choice \(\omega = 1\) yields optimal mean-squared error rates for the last iterate, it is suboptimal for Gaussian approximation of Polyak--Ruppert averages. In particular, as shown in \Cref{thm:clt}, the optimal convergence rate in Gaussian approximation is achieved for \(\omega = 3/4\). This motivates restricting attention to polynomial step sizes with \(\omega \in (1/2,1)\). 
\begin{assumA}
\label{assum:steps}
Step sizes $\{\alpha_{t}\}_{t \in \nset}$ have a form $\alpha_t = \frac{c_0}{(t+k_0)^{\omega}}$, where $\omega\in (\frac{1}{2}, 1)$ and the initialization parameter $k_0$ satisfies $k_0^{1-\omega}\geq 32/(\kappa c_0)$. Moreover,  
\begin{equation}
\label{eq:k_0_lower_bound}
\alpha_0\lesssim  \frac{\kappa}{ p^2C_2\vee1}\eqsp,
\end{equation}
where \(\kappa\) is defined in \Cref{assum:features}, \(\Cxi\) in \Cref{lem:xi_t_bound}, and \(C_2\) in \eqref{eq:C_i_definitions}.
\end{assumA}
\begin{remark}
We note that \(\alpha_0\) scales as \(p^{-2}\). In \Cref{thm:clt}, we set the parameter $p$ of order \(\log(dn)\). Hence, the parameter $k_0$ will depend on the total number of iterations to be performed. The same effect appears in the single-timescale SA algorithms \cite{durmus2022finite,wu2024statistical} and two-timescale linear SA algorithms \cite{butyrin2026gaussian}. This effect is unavoidable at least in the setting of the constant step-size algorithms, see \cite[Theorem~1]{durmus2021tight}. 
\end{remark}
The main difficulty of Q-learning with function approximation is that the algorithm may diverge in the absence of additional assumptions linking the behavior policy \(\pi_b\) and the feature map \(\phi\); see, e.g., \cite{baird1995residual}. To address this issue, we introduce the covariance matrices
\begin{align}
\Sigma_{\phi}^{\pi_b}
=
\mathbb{E}_{\mu}
\bigl[
\phi(s,a)\phi(s,a)^\top
\bigr]\eqsp,\quad \Sigma_{\phi}^{\theta}
=
\mathbb{E}_{s\sim \mu,\; a\sim \pi_\theta^\lambda(\cdot\mid s)}
\bigl[
\phi(s,a)\phi(s,a)^\top
\bigr]\eqsp.
\end{align}
Throughout, the feature map is assumed to satisfy the following condition.

\begin{assumA}\label{assum:features}
The feature vectors are uniformly bounded, i.e., \(\|\phi(s,a)\|\le 1\) for all \((s,a)\). Moreover, the feature covariance under the behavior distribution dominates that induced by the soft-greedy policy, in the sense that for all \(\theta \in \rset^d\),
\[
\Sigma_{\phi}^{\pi_b}
-
\gamma^2 \Sigma_{\phi}^{\theta}
\succeq
\kappa I \eqsp.
\]
\end{assumA}
\Cref{assum:features} ensures that the projected Bellman equation \eqref{eq:projected_bellman} has a unique solution $\theta^{\star}$, see \cite[Theorem 4]{zou2019finite}. \Cref{assum:features} appears in a similar form in prior works on finite-time analysis with function approximation \cite{chen2022finite,melo2008analysis,zou2019finite}, with the only distinction being its adaptation here to the entropy-regularized setting. This assumption may be viewed as an analogue of strong convexity in stochastic gradient analyses \cite{sheshukova2025gaussian}; see \Cref{lem:strong_convexity}. In particular, it ensures that the Jacobian of the projected Bellman operator at \(\theta^\star\) is Hurwitz; see \Cref{lem:contraction}. Such conditions are standard in the analysis of nonlinear stochastic approximation schemes \cite{polyak1992acceleration, sheshukova2025gaussian}.

\subsection{Moment bounds for Q-learning}
As a crucial step for Gaussian approximation, we need to establish bound for high order moments for the algorithm's last iterate. We believe that this bound is of independent interest and extends previous result of \cite{chen2022finite} for the second moment. Define an operator \(F : \rset^d \times \cS \times \cA \times \cS \to \rset^d\) by the relation 
\begin{align}
    F(\theta, z) = \phi(s, a)\big(r(s,a) + \gamma V^{\lambda}_{\theta}(s') - Q_{\theta}(s,a)\big)\eqsp, \quad \text{ where } z = (s,a,s')\,.
\end{align}
Then \eqref{eq:q_learning_updates} can be written in the form 
\[
\theta_{t+1} = \theta_t + \alpha_t F(\theta_t, z_t)\,.
\]
Let $\bar{F}(\theta) = \PE_{\mu}[F(\theta, z)]$. The convergence analysis of \Cref{algo:Q-learn} heavily relies on the Lipschitzness of  \(F\)(see \Cref{lem:F_lipshitz}), and  strong monotonicity of \(\bar F\) (see \Cref{lem:strong_convexity}) under \Cref{assum:features}. We also mention that with entropy regularization operators \(F\) and \(\bar{F}\) are differentiable, with Lipschitz Jacobian (see \Cref{lem:jacobian_smoothness}), but this is needed only for GAR.
By \Cref{lem:projected_bellman_matrix_form}, \(\bar{F}(\theta^\star) = 0\) coincides with the target equation \eqref{eq:projected_bellman}.
In operator form, \eqref{eq:q_learning_updates} can be written as
\begin{align}
\Delta_{t+1}
&= \Delta_t + \alpha_t \bar{F}(\theta_t) + \alpha_t \xi_t\eqsp,
\end{align}
where \(\Delta_t = \theta_t - \theta^\star\) and the noise term is defined by
\begin{align}
\label{eq:xi_t_definition}
\xi_t = F(\theta^\star, z_t)
+ \bigl\{F(\theta_t, z_t) - F(\theta^\star, z_t)\bigr\}
+ \bigl\{\bar{F}(\theta^\star) - \bar{F}(\theta_t)\bigr\}.
\end{align}
Here we used \(\bar{F}(\theta^\star)=0\).
Define scalar products 
\begin{align}
\label{eq:gprod_definition}
    \gprod{m}{n} = \prod_{j=m}^n(1 - \kappa\alpha_j/4)\eqsp,
\end{align}
with the convention that empty products equal \(1\).
In our proof, we proceed with the framework based on the reduction of Markov chains to martingales via the Poisson equation, see \cite[Chapter 21]{douc:moulines:priouret:soulier:2018}. Specifically, we decompose the noise into two components:
\begin{align}
    \xi_t = \xi_t^{(0)} + \xi_t^{(1)}\eqsp.
\end{align}
The first component, \(\xi_t^{(0)}\), forms a martingale-difference sequence, whereas the second component, \(\xi_t^{(1)}\), is a "smaller" residual. Explicit expressions for \(\xi_t^{(0)}\) and \(\xi_t^{(1)}\) are provided in \eqref{eq:xi_1_2_definitions}. This decomposition naturally arises in the stochastic approximation literature, see e.g. \cite{kaledin2020finite}. 

The following lemma provides a recursive bound on the squared error in terms of a transient term, a martingale noise component, and a Markovian noise component. See \Cref{sec:proof_of_moments} for the proof.
\begin{lemma}
\label{lem:square_error_rec}
Assume that \Cref{assum:regularity}--\Cref{assum:features} hold. Then,
    for all \(t\geq 0\),
    \begin{align}
    \|\Delta_{t+1}\|^2 \leq  2\sum_{j=0}^t\alpha_j\gprod{j+1}{t}\big\langle \Delta_j, \xi_j^{(0)}\rangle + 2\sum_{j=0}^t\alpha_j\gprod{j+1}{t}\big\langle \Delta_j, \xi_j^{(1)}\rangle + \alpha_t\Big(\frac{8\Cxi}{\kappa} + \frac{\|\Delta_0\|^2}{\alpha_0}\Big)\eqsp,
    \end{align}
where \(\Cxi\) is defined in \Cref{lem:xi_t_bound}.
\end{lemma}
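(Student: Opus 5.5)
We expand the squared norm of the recursion $\Delta_{t+1} = \Delta_t + \alpha_t \bar F(\theta_t) + \alpha_t \xi_t$, derive a one-step contraction using strong monotonicity of $\bar F$, and then unroll the resulting scalar recursion with the products $\gprod{j+1}{t}$. Concretely, I would write
\begin{align}
\|\Delta_{t+1}\|^2 = \|\Delta_t\|^2 + 2\alpha_t\langle \Delta_t, \bar F(\theta_t)\rangle + 2\alpha_t \langle \Delta_t, \xi_t\rangle + \alpha_t^2\|\bar F(\theta_t) + \xi_t\|^2 .
\end{align}
Since $\bar F(\theta^\star)=0$, \Cref{lem:strong_convexity} gives $\langle \Delta_t, \bar F(\theta_t)\rangle \le -c\kappa\|\Delta_t\|^2$ for some absolute constant $c$; I expect $c=1/2$, coming from $\Sigma_\phi^{\pi_b}-\gamma^2\Sigma_\phi^\theta\succeq\kappa I$ via Cauchy--Schwarz and the $1$-Lipschitz property of the soft-max in sup-norm. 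The quadratic term is handled as follows. Note that $\bar F(\theta_t)+\xi_t = F(\theta_t,z_t)$. By \Cref{lem:F_lipshitz} and boundedness of rewards and features, $\|F(\theta_t,z_t)\|^2 \le 2\|F(\theta_t,z_t)-F(\theta^\star,z_t)\|^2 + 2\|F(\theta^\star,z_t)\|^2 \lesssim \|\Delta_t\|^2 + \Cxi$. I would take this as the content of \Cref{lem:xi_t_bound}, i.e. $\alpha_t^2\|\bar F(\theta_t)+\xi_t\|^2 \le \alpha_t^2(C\|\Delta_t\|^2 + \Cxi)$. Using the step-size condition \eqref{eq:k_0_lower_bound}, $\alpha_t C \le \alpha_0 C \le \kappa/4$, so the $\|\Delta_t\|^2$ part is absorbed. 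The net one-step inequality is
\begin{align}
\|\Delta_{t+1}\|^2 \le (1-\kappa\alpha_t/4)\|\Delta_t\|^2 + 2\alpha_t\langle\Delta_t,\xi_t^{(0)}\rangle + 2\alpha_t\langle\Delta_t,\xi_t^{(1)}\rangle + \Cxi\alpha_t^2 .
\end{align}
A factor of $1-\kappa\alpha_t/4$ is generous, which leaves room for constants.

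Unrolling from $0$ to $t$ produces the two inner-product sums exactly as in the statement. It also produces two deterministic terms: $\gprod{0}{t}\|\Delta_0\|^2$ and $\Cxi\sum_{j=0}^t \alpha_j^2\gprod{j+1}{t}$. These must be bounded by $\alpha_t\|\Delta_0\|^2/\alpha_0$ and $8\Cxi\alpha_t/\kappa$, respectively.

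The main (though routine) obstacle is these two step-size estimates, and this is where the condition $k_0^{1-\omega}\ge 32/(\kappa c_0)$ enters. For the first, I would show $\gprod{0}{t}\le \alpha_t/\alpha_0$. It suffices to check inductively that $(1-\kappa\alpha_j/4)\le \alpha_j/\alpha_{j-1}$, i.e. $\alpha_{j-1}/\alpha_j - 1 \le \kappa\alpha_{j-1}/4$ up to second-order terms. This follows from $(1+1/(j+k_0-1))^\omega - 1 \le \omega/(j+k_0-1)$ and $\omega/(j+k_0-1)\le \kappa c_0/(8(j+k_0-1)^\omega)$, which is exactly the $k_0$ condition.

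For the second, I would use the standard lemma $\sum_{j=0}^t\alpha_j^2\gprod{j+1}{t}\le (8/\kappa)\alpha_t$. This is proved by induction: assuming $S_{t-1}\le (8/\kappa)\alpha_{t-1}$, we get $S_t = (1-\kappa\alpha_t/4)S_{t-1}+\alpha_t^2 \le (1-\kappa\alpha_t/4)(8/\kappa)\alpha_{t-1}+\alpha_t^2$. We need this to be at most $(8/\kappa)\alpha_t$. That reduces to $\alpha_{t-1}-\alpha_t \le \kappa\alpha_t\alpha_{t-1}/8$, which again follows from the ratio bound $\alpha_{t-1}/\alpha_t \le 1+\kappa\alpha_{t-1}/8$ provided by the same $k_0$ condition, with the constant $32$ leaving slack. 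Combining the two estimates gives the claimed bound $\alpha_t\big(8\Cxi/\kappa + \|\Delta_0\|^2/\alpha_0\big)$.
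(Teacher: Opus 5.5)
Your proposal is correct and follows the paper's proof. Both arguments expand $\|\Delta_{t+1}\|^2$, apply \Cref{lem:strong_convexity} together with the Lipschitz and noise bounds, absorb the $\alpha_t^2\|\Delta_t\|^2$ term via \Cref{assum:steps}, unroll with $\gprod{j+1}{t}$, and bound $\gprod{0}{t}\|\Delta_0\|^2$ and $\Cxi\sum_j\alpha_j^2\gprod{j+1}{t}$ by step-size estimates. The differences are cosmetic: you bound $\|F(\theta_t,z_t)\|^2$ directly through \Cref{lem:F_lipshitz} and $\eps_{\max}$, and you prove the two step-size inequalities by direct induction (which gives the stated constant $8\Cxi/\kappa$ directly), whereas the paper cites \Cref{lem:rate_of_convergence} and \Cref{lem:P_alpha_ineq}.
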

Denote the error moments by
\[
\moment{t}{p} = \PE^{2/p}\!\left[\|\Delta_t\|^{2p}\right]\eqsp.
\]
We note that \(\moment{t}{p}\) is a non-homogeneous fourth-order quantity. This definition is motivated by the recursive inequality in \Cref{lem:square_error_rec}.
We proceed as follows. Applying Minkowski's inequality to the result of \Cref{lem:square_error_rec}, we reduce the problem to estimating the moments of the martingale and Markovian components. For the martingale component, we apply the Burkholder inequality, while the moments of the Markovian term are controlled via algebraic manipulations combined with Minkowski's inequality. As a result, we obtain a recursive bound on \(\moment{t+1}{p}\) in terms of \(\{\moment{j}{p}\}_{j=0}^t\). Solving this recursion yields the desired bound.
\begin{theorem}
\label{thm:moments}
   Assume that \Cref{assum:regularity}--\Cref{assum:features} hold. Then, for any $t>0$ and $p \ge 2$, it holds that
\[\moment{t+1}{p} \le \Cmoment p^4\alpha_t^2\eqsp,\]
where \(\Cmoment\) is defined in \Cref{lem:union_recursion}.
\end{theorem}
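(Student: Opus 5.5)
We start from \Cref{lem:square_error_rec} and take $L^{p}$ norms of both sides. Since $\moment{t+1}{p} = \PE^{2/p}[\|\Delta_{t+1}\|^{2p}] = \|\,\|\Delta_{t+1}\|^2\|_{L^p}$, Minkowski's inequality gives
\begin{align}
\moment{t+1}{p} \le 2\Big\|\sum_{j=0}^t\alpha_j\gprod{j+1}{t}\langle \Delta_j,\xi_j^{(0)}\rangle\Big\|_{L^p} + 2\Big\|\sum_{j=0}^t\alpha_j\gprod{j+1}{t}\langle \Delta_j,\xi_j^{(1)}\rangle\Big\|_{L^p} + \alpha_t\Big(\frac{8\Cxi}{\kappa}+\frac{\|\Delta_0\|^2}{\alpha_0}\Big)\eqsp.
\end{align}
The last term is already of order $\alpha_t$, which is much larger than the target $\alpha_t^2$ scaling. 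So the statement $\moment{t+1}{p}\lesssim p^4\alpha_t^2$ should be read as a bound on $\PE^{2/p}$ of the \emph{squared norm raised to the $p$}, that is, on $\|\Delta\|^2$ at scale $\alpha_t$ after the normalization. I will track the recursion in the form $\moment{t+1}{p} \le C p^{2}\alpha_t$ for $\|\Delta\|^2$ in $L^p$. This matches $\Cmoment p^4\alpha_t^2$ exactly if the paper's normalization squares this quantity; I would double-check the normalization against \Cref{lem:union_recursion} and adapt the exponent bookkeeping accordingly. The structure of the argument is the same either way.

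\textbf{Martingale term.} Here $\langle\Delta_j,\xi_j^{(0)}\rangle$ is a martingale difference, because $\Delta_j$ is $\mathcal F_{j-1}$-measurable. Applying Burkholder's inequality, with constant of order $p$, and then Minkowski on the quadratic variation, gives
\[
\Big\|\sum_j \alpha_j\gprod{j+1}{t}\langle\Delta_j,\xi^{(0)}_j\rangle\Big\|_{L^p} \lesssim p\Big(\sum_j \alpha_j^2\gprod{j+1}{t}^2\,\Cxi\,\moment{j}{p}\Big)^{1/2}\eqsp.
\]
This uses the fact that $\|\xi^{(0)}_j\|$ is bounded by a constant times $(1+\|\Delta_j\|)$. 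That bound follows from the Lipschitz property of $F$ (\Cref{lem:F_lipshitz}), boundedness of the Poisson solution under \Cref{assum:UGE} (scale $\taumix$), and \Cref{lem:xi_t_bound}.

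\textbf{Markovian term.} For $\xi^{(1)}_j$, which comes from the Poisson equation as $\hat g(z_j)-\MKQ\hat g(z_{j-1})$ type differences, I would use summation by parts. This moves the increment onto $\alpha_j\gprod{j+1}{t}\Delta_j$, whose increments are of size $\alpha_j^2$ plus $\alpha_j\|\Delta_{j+1}-\Delta_j\|\lesssim\alpha_j^2(1+\|\Delta_j\|)$. Together with boundary terms of size $\alpha_t\|\Delta_t\|$, this yields a bound of the form $\sum_j\alpha_j^2\gprod{j+1}{t}(1+\moment{j}{p})$ plus $\alpha_t(\moment{t}{p})^{1/2}$. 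The second-order effect in the boundary terms is handled with Young's inequality.

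\textbf{Closing the recursion.} Both contributions produce a recursive inequality $\moment{t+1}{p}\le A\alpha_t + B p\,(\sum_j\alpha_j^2\gprod{j+1}{t}^2\moment{j}{p})^{1/2}+\dots$. I then prove by induction that $\moment{j}{p}\le \Cmoment' p^2\alpha_j$, using the standard estimate $\sum_j\alpha_j^{k}\gprod{j+1}{t}\lesssim \alpha_t^{k-1}/\kappa$, valid thanks to $k_0^{1-\omega}\ge 32/(\kappa c_0)$. The step-size condition \eqref{eq:k_0_lower_bound}, $\alpha_0\lesssim\kappa/(p^2C_2)$, is exactly what makes the factor $p^2\alpha_0$ in the induction small enough to absorb. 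This absorption is the main obstacle: the constants must close uniformly in $p$. I would first try the ansatz $\moment{j}{p}\le Kp^2\alpha_j$, then choose $K$ large and use $\alpha_0p^2C_2\le\kappa/c$ so that the self-referential terms contribute at most $K/2$.
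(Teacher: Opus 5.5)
Your plan is correct and is essentially the paper's proof. It applies Minkowski to \Cref{lem:square_error_rec}, uses Burkholder for the $\xi^{(0)}$-part and summation by parts for the Poisson remainder, and closes an induction with $\alpha_0\lesssim\kappa/(p^2C_2)$. In the summation by parts, the $\theta_t$-dependence of the Poisson transforms adds the extra terms $T_4,T_5$, which the same increment bound $\|\Delta_{j+1}-\Delta_j\|\lesssim\alpha_j(1+\|\Delta_j\|)$ controls.

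The one real slip is your opening identity. In fact $\moment{t}{p}=\PE^{2/p}[\|\Delta_t\|^{2p}]=\big\|\,\|\Delta_t\|^2\big\|_{L^p}^2$, so the $\alpha_t$ term contributes at order $\alpha_t^2$ and there is no inconsistency. Your target $\big\|\,\|\Delta_t\|^2\big\|_{L^p}\le Kp^2\alpha_t$ is exactly the theorem with $\Cmoment=K^2$.

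In the Burkholder step, keep both pieces $\sqrt{\moment{j}{p}}+\moment{j}{p}$, since $|\langle\Delta_j,\xi_j^{(0)}\rangle|\lesssim\taumix\sqrt{\Cxi}(\|\Delta_j\|+\|\Delta_j\|^2)$. The second piece is what forces the $p^2$ in the step-size condition.
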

\paragraph{Proof sketch.}
Applying Minkowski's inequality to the result of \Cref{lem:square_error_rec} and
substituting results from \Cref{lem:xi_0_bound} and \Cref{lem:xi_1_bound} into the above bound leads to the following recursion inequality
\begin{align}
    \moment{t+1}{p} \leq C_0(\alpha_t/\alpha_0)^2 +   C_1\alpha_t^2 + p^2C_2\sum_{j=0}^{t}\alpha_j^2\gprod{j+1}{t}^2 \Big( \sqrt{\moment{j}{p}} + {\moment{j}{p}}\Big)\eqsp,
\end{align}
where the explicit values of the problem-dependent constants \(C_0, C_1\) and \(C_2\) are provided in \eqref{eq:C_i_definitions}.
We prove the claim by induction. The base  of induction follows from the definition of \(\Cmoment\). Assume that 
\begin{align}
\moment{j}{p}\leq p^4\Cmoment\alpha_j^2
\end{align}
holds for all \(j \le t\). We now verify the bound for \(j=t+1\). Indeed,
\begin{align}
    \moment{t+1}{p} &\leq C_0(\alpha_t/\alpha_0)^2 +   C_1\alpha_t^2 + p^2C_2\sum_{j=0}^{t}\alpha_j^2\gprod{j+1}{t}^2 \Big( \sqrt{\moment{j}{p}} + {\moment{j}{p}}\Big)\\
    &\overset{(a)}{\le} C_0(\alpha_t/\alpha_0)^2 +  C_1\alpha_t^2 + p^4C_2\sqrt{\Cmoment}\sum_{j=0}^{t}\alpha_j^3\gprod{j+1}{t}^2  +  p^6C_2\Cmoment\sum_{j=0}^{t}\alpha_j^4\gprod{j+1}{t}^2 \\
    &\overset{(b)}{\leq} C_0(\alpha_t/\alpha_0)^2 +   C_1\alpha_t^2 +\frac{ 16p^4C_2\sqrt{\Cmoment}\alpha_t^2}{\kappa}  +  \frac{16p^6C_2\Cmoment\alpha_t^3}{\kappa}\eqsp,
\end{align}
where step \((a)\) uses induction assumption \eqref{eq:induction_assumption}, step \((b)\) uses \Cref{lem:rate_of_convergence}. To complete the induction step, we bound each term in
\eqref{eq:indiction_reducing} by  \(\frac{p^4\Cmoment}{4}\). This is guaranteed under \Cref{assum:steps} together with a suitable choice of \(\Cmoment\); see \Cref{lem:union_recursion}.
\begin{remark}
    Choosing \(\alpha_0\asymp \frac{\kappa}{p^2C_2}\) we obtain convergence rate 
\begin{align}
    \PE^{1/p}[\|\theta_t - \theta^{\star}\|^p] \lesssim \frac{p\max(1,\lambda^2)}{t^{\omega/2}}\left(\frac{\|\theta^\star\|^2\|\theta_0-\theta^\star\|\taumix \Com^{1/2}\log^2 A}{\kappa^{1/2}}\right)\eqsp.
\end{align}

\end{remark}
\paragraph{Discussion.}
The full proof of \Cref{thm:moments} is postponed to \Cref{sec:proof_of_moments}. 
An immediate consequence of the theorem is that the error \(\|\theta_t-\theta^\star\|\) admits subexponential tails uniformly over time. 
The dependence on the entropy regularization parameter is mild: the constant \(\Cmoment\) scales at most quadratically in \(\lambda\) when \(\lambda>1\), while in the practically relevant regime \(\lambda\le1\), the convergence rate remains unaffected by regularization.

\Cref{thm:moments} extends the bounds of \cite{chen2022finite}, where a similar result is established for \(p=2\). A bound for the second moment is also obtained in \cite{zou2019finite}, although their analysis relies on an additional projection step to stabilize the iterates. While such projections can simplify the analysis, they typically destroy the linear structure required for the CLT decomposition. The bound in \Cref{thm:moments} is also consistent with recent results for tabular Q-learning \cite{rubtsov2026gaussian}. However, in the tabular setting the iterates \(Q_t\) are naturally bounded due to the intrinsic structure of the algorithm. This property no longer holds under function approximation and necessitates a more refined analysis.
\subsection{Gaussian approximation for Q-learning}
\label{subsec:gaus_approx}
\paragraph{Error decomposition.} In the derivation of \Cref{thm:moments}, scalar notation is used. 
For the Gaussian approximation, it is more convenient to proceed with matrix notation. 
Accordingly, before presenting the main result, we provide a matrix representation of the updates \eqref{eq:q_learning_updates}. Define the operators 
\(\Lam : \cS \times \cA \to \mathbb{R}^{SA \times SA}\) and 
\(\bMKQ : \cS \times \cA\times \cS \to \mathbb{R}^{SA \times S}\) by
\begin{equation}
\label{eq:P_and_Lam}
    \Lam(s,a) = \e_{s,a} (\e_{s,a})^\top, 
    \qquad 
    \bMKQ(s,a,s') = \e_{s,a} (\e_{s'})^\top,
\end{equation}
where \( \e_{s,a} \) and \( \e_s \) denote the canonical basis vectors associated with the state–action pair \((s,a)\) and the state \(s\), respectively.
For brevity, we introduce the random matrices 
\(\Lam_t = \Lam(s_t,a_t)\) and \(\bMKQ_{t} = \bMKQ(s_t,a_t,s_{t+1})\), which are functions of the underlying Markov chain $ z_t = (s_t,a_t,s_{t+1})$. With this notation, the update rule \eqref{eq:q_learning_updates} can equivalently be written as follows,
\begin{align}\label{eq:update_rule}
\theta_{t+1}
&=
\theta_t
+
\alpha_t\Phi^\top
\left(
\Lam_t r
+
\gamma \bMKQ_t V_{\theta_t}^\lambda
-
\Lam_t Q_{\theta_t}
\right)\eqsp .
\end{align}
 It is straightforward to verify that expectations with respect to the stationary distribution \(\mu\) take the form $\E_{\mu}[\Lam] = D_\mu$, where $D_\mu \in \rset^{SA \times SA}$ is a diagonal matrix of the form $D_\mu = \operatorname{diag}\{\mu(s,a)\}$. Moreover, $\E_{{\mu}}[\bMKQ] = D_{\mu} \MKQ$. 
 In terms of \(\bMKQ\) and \(\Lam\) operators \(F\) and \(\bar F\) can be written as 
 \begin{align}
 \label{eq:F_matrix_repr}
F(\theta, z) &= \Phi^\top \bigl(\Lam(z) r + \gamma \bMKQ(z) V_{\theta}^{\lambda} - \Lam(z) Q_{\theta}\bigr)\eqsp,\quad
\bar{F}(\theta) = \Phi^\top D_{\mu}\bigl(r + \gamma P V_{\theta}^{\lambda} - Q_{\theta}\bigr)\eqsp.
\end{align}
Then \eqref{eq:update_rule} can be rewritten as
\begin{align}
\label{eq:delta_clt_decomposition}
    \Delta_{t+1} &= \Delta_t + \alpha_t F(\theta_t, z_t)\\
    &= \Delta_t + \alpha_t(\bar F(\theta_t) - \bar F(\theta^\star)) + \alpha_t(F(\theta_t, z_t) - \bar F(\theta_t))\\
    &=  \Delta_t + \alpha_t(\bar F(\theta_t) - \bar F(\theta^\star)) + \alpha_t(F(\theta^\star, z_t) + \{F(\theta_t, z_t) -F(\theta^\star, z_t)\} + \{ \bar F(\theta^\star) - \bar F(\theta_t)\})\eqsp.
\end{align}
In the entropy-regularized setting, the operators \(\bar F\) and \(F\) are differentiable for all \(\theta\in\rset^d\), and their Jacobians admit the representations  
\begin{align}
\label{eq:jacobian_formula}
    \nabla F(\theta, z_t) &= - \Phi^\top \Lam_t (I - \gamma \bMKQ_t \Pi^{\lambda}_{\theta})\Phi\eqsp,\quad
    \nabla \bar F(\theta) =  - \Phi^\top D_{\mu} (I - \gamma \MKQ^{\pi^\lambda_{\theta}} )\Phi\eqsp,
\end{align}
where 
\(
\Pi^{\lambda}_{\theta}(s',(s,a)) = 1\{s=s'\}\pi^{\lambda}_{\theta}\,.
\) The identities in \eqref{eq:jacobian_formula} follow from a straightforward computation.
Note that we still have
\(
\PE_{z \sim \mu}[\nabla F(\theta,z)] = \nabla \bar F(\theta).
\)
The essential property is the Lipschitz continuity of the Jacobian in the spectral norm; see \Cref{lem:jacobian_smoothness}. This yields estimates of the nonlinear terms in the Taylor decompositions of the operators \(F\) and \(\bar F\):
\begin{align}
\label{eq:extracting_linear}
F(\theta_t,z_t) - F(\theta^\star,z_t)
&=
\nabla F(\theta^\star, z_t) \Delta_t
+ \wR_t(\Delta_t), \\
\bar F(\theta_t) - \bar F(\theta^\star)
&=
\nabla \bar F(\theta^\star) \Delta_t
+ \mathrm{R}(\Delta_t)\eqsp,
\end{align}
where the remainder terms satisfy
\[
\|\wR_t(\Delta_t)\|
\leq
\frac{\gamma}{2\lambda}\|\Delta_t\|^2,
\qquad
\|\mathrm{R}(\Delta_t)\|
\leq
\frac{\gamma}{2\lambda}\|\Delta_t\|^2\eqsp.
\]
(see \Cref{lem:jacobian_reminders} for details).
Extracting the linear term in \eqref{eq:delta_clt_decomposition} via \eqref{eq:extracting_linear}, we obtain
\begin{align}
\label{eq:delta_clt_decomposition_2}
  \Delta_{t+1}
    &=
    \bigl(
    I-\alpha_t\Phi^\top D_\mu
    (I-\gamma\MKQ^{\pi_{\theta^\star}^\lambda})
    \Phi
    \bigr)\Delta_t
    + 
    \alpha_t \wR_t(\Delta_t) \\
    &+ \alpha_t F(\theta^\star, z_t) +\alpha_t(\nabla F(\theta^\star, z_t) - \nabla \bar F(\theta^\star))(\theta_t - \theta^\star)\eqsp.
\end{align}
For simplicity, we denote the Bellman noise at optimality and the Jacobian fluctuation by 
\begin{align}
\label{eq:eps_and_E}
   \beps_t = F(\theta^\star, z_t)\eqsp, \quad  E_t = \nabla F(\theta^\star, z_t) - \nabla \bar F(\theta^\star)\eqsp.
\end{align}
The next step is to decompose the noise $\beps_t + E_t\Delta_t$ into two components via the Poisson equation \eqref{eq:pois_solution_equation}. Formally, let \(
    \beps_t = \beps_t^{(0)} + \beps_t^{(1)}
\) , and \(E_t = E_t^{(0)} + E_t^{(1)}\), where 
\begin{align}
\label{eq:beps_E_decompose}
&\begin{cases}
\beps_{t}^{(0)} = \pois{\varepsilon}{t} - \totMKQ \pois{\varepsilon}{t-1}\eqsp,\\[0.75em]
\beps_{t}^{(1)} = \totMKQ \pois{\varepsilon}{t-1} - \totMKQ \pois{\varepsilon}{t}\eqsp,
\end{cases}
&
&\begin{cases}
E_{t}^{(0)} = \pois{E}{t} - \totMKQ \pois{E}{t-1}\eqsp,\\[0.75em]
E_{t}^{(1)} = \totMKQ \pois{E}{t-1} - \totMKQ \pois{E}{t}\eqsp.
\end{cases}
\end{align}
The first component \( \beps_{t}^{(0)} + E_{t}^{(0)}\Delta_t \) forms a martingale difference sequence, whereas the second component \( \beps_{t}^{(1)} + E_{t}^{(1)}\Delta_t \) is a remainder term. Thus, we rewrite \eqref{eq:delta_clt_decomposition_2} as
\begin{align}
\label{eq:delta_clt_decomposition_3}
\Delta_{t+1}
&=
\bigl(
I-\alpha_t\Phi^\top D_\mu
(I-\gamma\MKQ^{\pi_{\theta^\star}^\lambda})
\Phi
\bigr)\Delta_t
+ \alpha_t \wR_t(\Delta_t) \\
&\quad
+ \alpha_t \bigl( \beps_t^{(0)} + E_t^{(0)}\Delta_t \bigr)
+ \alpha_t \bigl( \beps_t^{(1)} + E_t^{(1)}\Delta_t \bigr)\eqsp.
\end{align}
To unroll the recursion \eqref{eq:delta_clt_decomposition_3}, it is convenient to introduce the product of deterministic matrices
\begin{equation}\label{eq:Gamma_products}
\Gamma_{m:n}
:=
\prod_{j=m}^{n}
\bigl(
I - \alpha_j \Phi^\top D_\mu (I - \gamma \MKQ^{\pi^\lambda_{\theta^\star}}) \Phi
\bigr)\eqsp,
\end{equation}
with the convention that empty products are equal to \(I\). Unrolling \eqref{eq:delta_clt_decomposition_3} up to \(t=0\) we get 
\begin{align}
\label{eq:Delta_with_remainder}
    \Delta_{t+1} = \sum_{j=0}^{t}\alpha_j\Gamma_{j+1:t}  \beps_j^{(0)} + \Rlast_{t+1}\eqsp,
\end{align}
where the explicit expression for \(\Rlast_{t+1}\), together with the bound \(\PE^{1/p}[\|\Rlast_t\|^p]\lesssim_{pr}p^2\alpha_t\) is provided in \Cref{lem:Delta_with_remainder}.
\paragraph{Gaussian approximation.}
We measure the quality of approximation in terms of
\begin{equation}
\label{eq:berry-esseen}
\dconv (X,Y) = \sup_{A\in \mathcal{C}(\rset^{d})} | \Pb(X\in A) - \Pb(Y\in A)|,
\end{equation}
where $\mathcal{C}(\rset^{d})$ is the class of convex sets in $\rset^d$. Other choices of the class of sets over which one takes supremum in \eqref{eq:berry-esseen} are possible. A popular choice is the class of hyper-rectangles \cite{ChernozhukovChetverikovKato2013Gaussian,rubtsov2026gaussian}. While natural for algorithms without function approximation, this choice is problematic for $Q$-learning with function approximation. Indeed, particular coordinates are not necessarily interpretable in the transformed space, even for linear approximation. At the same time, convex distance is directly applicable for statistical procedures relying on bootstrap \cite{spokoiny2015,JMLR:v19:17-370}. That is why we state our results in convex distance \eqref{eq:berry-esseen}.
\par 
Under assumption \Cref{assum:UGE}, we define the Bellman noise covariance matrix \(\Sigmabf_{\beps}\) as
\begin{align}
\Sigmabf_{\beps} = \PE_{\mu}[\beps^{(0)}(\beps^{(0)})^\top]\eqsp.
 \end{align}
 Below we present the main result of this section, that is, that \(\sqrt{n}\bar{\Delta}_n\) converges in distribution to the Gaussian law with the covariance matrix $\Sigmabf_{\infty}$ given by
 \begin{align}
 \label{eq:G_definition}
\Sigmabf_{\infty} = G^{-1}\Sigmabf_{\beps}G^{-\top}\eqsp, \quad G = -\nabla \bar F(\theta^\star)\eqsp.
 \end{align}
\begin{theorem}
\label{thm:clt}
   Assume that \Cref{assum:regularity}--\Cref{assum:features} hold and let $Z \sim \mathcal{N}(0,I)$. Then, for
any \(n\geq 1\),  it holds that  
\begin{align}
    \dconv (\sqrt{n}\bar\Delta_n, \Sigmabf_{\infty}^{1/2} Z)&\lesssim_{pr}\frac{d^{1/2}\log^{1/4}(d)\log (n) }{n^{1/4}} + \frac{d^{1/4}\log^2(d n) }{n^{\omega-1/2}} + \frac{1}{n^{1-\omega}}\eqsp.
\end{align}
\end{theorem}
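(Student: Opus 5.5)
The argument follows the standard route for Polyak--Ruppert averages: write $\sqrt{n}\bar\Delta_n$ as a linear martingale statistic plus a remainder, apply a Gaussian approximation for martingales in convex distance to the linear part, and absorb the remainder through a moment bound together with the anti-concentration of Gaussian measure on convex sets.

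\textbf{Step 1: decomposition of the average.} Summing \eqref{eq:Delta_with_remainder} over $t$ and exchanging the order of summation gives
\[
\sqrt{n}\bar\Delta_n=\frac{1}{\sqrt n}\sum_{j=0}^{n-1} Q_j\beps_j^{(0)}+\frac{1}{\sqrt n}\sum_{t=1}^{n}\Rlast_t,
\qquad Q_j=\alpha_j\sum_{t=j+1}^{n}\Gamma_{j+1:t-1}.
\]
It is cleaner to rewrite the sum directly from \eqref{eq:delta_clt_decomposition_3}. Rearranging, $G\Delta_t=(\Delta_t-\Delta_{t+1})/\alpha_t+\wR_t(\Delta_t)+\beps_t^{(0)}+E_t^{(0)}\Delta_t+\beps_t^{(1)}+E_t^{(1)}\Delta_t$. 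Summing and multiplying by $G^{-1}/\sqrt n$ then yields
\[
\sqrt n\bar\Delta_n = W+\mathcal R,\qquad W=\frac{1}{\sqrt n}\sum_{t} G^{-1}\beps_t^{(0)}.
\]
The remainder $\mathcal R$ collects five pieces: (i) the telescoping term $\frac{1}{\sqrt n}\sum_t(\Delta_t-\Delta_{t+1})/\alpha_t$, handled by Abel summation as $\frac{1}{\sqrt n}\big(\sum_t \Delta_t(\alpha_t^{-1}-\alpha_{t-1}^{-1})+\Delta_0/\alpha_0-\Delta_n/\alpha_{n-1}\big)$; (ii) $\frac{1}{\sqrt n}\sum_t\wR_t(\Delta_t)$; (iii) the martingale $\frac{1}{\sqrt n}\sum_t E_t^{(0)}\Delta_t$; (iv) the Poisson residual $\frac1{\sqrt n}\sum_t\beps_t^{(1)}$, which telescopes into $O(n^{-1/2})$; and (v) $\frac1{\sqrt n}\sum_t E_t^{(1)}\Delta_t$, which after summation by parts reduces to terms in $\Delta_{t+1}-\Delta_t=O(\alpha_t)$. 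Lemma~\ref{lem:jacobian_smoothness} guarantees that $G$ is invertible, via the contraction property of Lemma~\ref{lem:contraction}.

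\textbf{Step 2: $p$-th moment of the remainder.} I bound $\PE^{1/p}\|\mathcal R\|^p$ with Theorem~\ref{thm:moments}, i.e. $\PE^{1/p}\|\Delta_t\|^{p}\lesssim_{pr} p\,\alpha_t^{1/2}$ and $\PE^{1/p}\|\Delta_t\|^{2p}\lesssim_{pr}p^2\alpha_t$. Term (i) is of order $\frac{1}{\sqrt n}\sum_t t^{\omega-1}p\,t^{-\omega/2}\lesssim p\,n^{\omega/2-1/2}$, plus a boundary term $p\,n^{\omega/2}/\sqrt n$. Term (ii) is of order $\frac{p^2}{\lambda\sqrt n}\sum_t\alpha_t\asymp p^2 n^{1/2-\omega}$. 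For term (iii) I use Burkholder's inequality, which gives $\frac{p}{\sqrt n}(\sum_t p^2\alpha_t)^{1/2}\asymp p^2n^{-\omega/2}$. Term (v) is of the same order or smaller. Combining, $\PE^{1/p}\|\mathcal R\|^p\lesssim_{pr} p^2\big(n^{1/2-\omega}+n^{-\omega/2}\big)+p\,n^{(\omega-1)/2}$. I expect the paper organizes these terms slightly differently, so that the final $n^{-(1-\omega)}$ appears without a square root; that term probably comes from the covariance mismatch in Step 3 rather than from here.

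\textbf{Step 3: Gaussian approximation of $W$, the main obstacle.} $W$ is a sum of martingale differences (not independent summands) driven by a Markov chain, with bounded increments because features, rewards and Poisson solutions are bounded under \Cref{assum:UGE}. I plan to apply a martingale Berry--Esseen bound in convex distance of the type used in \cite{samsonov2026statistical,wu2025uncertainty}. This gives a rate of roughly $d^{1/2}\log^{1/4} d\,\log n\cdot n^{-1/4}$, plus a term controlled by the fluctuation of the conditional covariance $\frac1n\sum_t\PE[\beps_t^{(0)}\beps_t^{(0)\top}\mid\mathcal F_{t-1}]-\Sigmabf_{\beps}$. That conditional covariance is a function of $z_{t-1}$, so its centered sum concentrates at rate $\sqrt{\taumix/n}$, again by a Poisson-equation argument. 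The mismatch between the covariance of $W$ and $\Sigmabf_\infty$ then contributes through Pinsker's inequality or a Gaussian comparison lemma. I will also check whether a direct approach using the weights $Q_j$ instead is needed to get $n^{-(1-\omega)}$: the deviation $\|Q_j-G^{-1}\|$ summed over $j$ gives $\frac1n\sum_j (j^{\omega-1}+\ldots)\approx n^{\omega-1}$.

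\textbf{Step 4: combining the pieces.} Finally I use the standard smoothing inequality
\[
\dconv(W+\mathcal R,\Sigmabf_\infty^{1/2}Z)\le \dconv(W,\Sigmabf_\infty^{1/2}Z)+C\,d^{1/4}\epsilon+\Pb(\|\mathcal R\|>\epsilon),
\]
which relies on Ball's bound $O(d^{1/4})$ on the Gaussian surface area of convex sets. I take $\epsilon = e\,\PE^{1/p}\|\mathcal R\|^p$ with $p=\log(dn)$, so that Markov's inequality gives $\Pb(\|\mathcal R\|>\epsilon)\le e^{-p}\le 1/n$. Note that \Cref{assum:steps} requires $\alpha_0$ of order $p^{-2}$, which matches this choice of $p$. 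The result is the middle term $d^{1/4}\log^2(dn)\,n^{1/2-\omega}$ of the theorem; the remaining contributions are dominated by it or by $n^{-(1-\omega)}$.
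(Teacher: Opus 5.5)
There is a genuine gap, and it comes from the choice you make in Step~1. You replace the finite-$n$ weights $Q_j=\sum_{t=j}^{n-1}\alpha_j\Gamma_{j+1:t}$ by their limit $G^{-1}$, and this pushes the difference into the remainder. Since also $\sqrt n\bar\Delta_n=W_n+\Rpr_n$ with $W_n=n^{-1/2}\sum_j Q_j\beps_j^{(0)}$, your remainder is, up to negligible boundary terms,
\[
\mathcal R=\Rpr_n+\frac{1}{\sqrt n}\sum_{j=0}^{n-1}\bigl(Q_j-G^{-1}\bigr)\beps_j^{(0)}\eqsp.
\]
The second piece is what your telescoping term (i) really carries, and it is genuinely of order $(n\alpha_n)^{-1/2}\asymp n^{-(1-\omega)/2}$, not smaller. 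Indeed, $\|Q_j-G^{-1}\|$ is of order one for the last $\sim\alpha_n^{-1}$ indices. Equivalently, the boundary piece $n^{-1/2}G^{-1}\Delta_n/\alpha_{n-1}$ has size $\sqrt{\alpha_n}/(\alpha_n\sqrt n)$, because the last iterate really fluctuates at scale $\sqrt{\alpha_n}$ when $\Sigmabf_{\beps}$ is nondegenerate; the order of \Cref{thm:moments} cannot be improved. Through the smoothing inequality of Step~4 this costs $d^{1/4}\log(dn)\,n^{-(1-\omega)/2}$.

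Your closing claim that this is dominated by $d^{1/4}\log^2(dn)n^{1/2-\omega}$ or by $n^{-(1-\omega)}$ is false. It is dominated by the middle term only when $\omega\le 2/3$, and never by $n^{-(1-\omega)}$. So for $\omega\in(2/3,1)$ the theorem is not proved. This includes the rate-optimal $\omega=3/4$, where you get $n^{-1/8}$ instead of $n^{-1/4}$; the best your route gives is $n^{-1/6}$ at $\omega=2/3$. Your guess that $n^{-(1-\omega)}$ comes from a covariance mismatch in Step~3 also contradicts your own decomposition: with weights $G^{-1}$ and a stationary chain, $W$ already has covariance $\Sigmabf_\infty$, so there is no mismatch to produce it.

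The missing idea is the one you mention at the end of Step~3 but do not use: keep $W_n$, with the exact weights $Q_j$, as the Gaussian part. Then only $\Rpr_n=n^{-1/2}\sum_t\Rlast_t$ enters \Cref{lem:shao}. This consists of the transient, the Taylor remainders $\wR_t$, the martingale $E_j^{(0)}\Delta_j$ and the Poisson residuals, each $O(p^2\alpha_t)$ in $L^p$ by \Cref{lem:Delta_with_remainder}, and yields the $p^2n^{1/2-\omega}$ term. Next split
$\dconv(W_n,\Sigmabf_\infty^{1/2}Z)\le\dconv(W_n,\Sigmabf_n^{1/2}Z)+\dconv(\Sigmabf_n^{1/2}Z,\Sigmabf_\infty^{1/2}Z')$, where $\Sigmabf_n=n^{-1}\sum_jQ_j\Sigmabf_{\beps}Q_j^\top$. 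Bound the first term by the martingale CLT of \cite{wu2025uncertainty}. Bound the second by Gaussian comparison (\Cref{lem:convex_dist}), which is linear in $\|\Sigmabf_n-\Sigmabf_\infty\|\lesssim n^{-1}\sum_j\|Q_j-G^{-1}\|\lesssim n^{\omega-1}$.

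The same perturbation that costs $n^{-(1-\omega)/2}$ when treated as a remainder costs only its square, $n^{-(1-\omega)}$, once it is absorbed into the covariance. That is exactly the square root your route loses. (Minor: invertibility of $G$ follows from \Cref{lem:contraction}, i.e.\ $G+G^\top\succeq\kappa I$, not from \Cref{lem:jacobian_smoothness}.)
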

\begin{proof}
The proof begins by isolating the leading linear term. Substituting the representation \eqref{eq:Delta_with_remainder} into the averaged error \(\bar{\Delta}_n = \frac{1}{n}\sum_{t=1}^{n} \Delta_t\), we get that \(\sqrt{n}\bar{\Delta}_n  = W_n + \mathrm{R}_n^{\operatorname{pr}}\), with
\begin{align}
\label{eq: W_n definition}
    W_n = \frac{1}{\sqrt{n}}\sum_{t=0}^{n-1}\sum_{j=0}^{t}\alpha_j\Gamma_{j+1:t}\beps_j^{(0)}\eqsp, \quad \mathrm{R}_n^{\operatorname{pr}} =  \frac{1}{\sqrt{n}}\sum_{t=1}^n\mathrm{R}_t^{\operatorname{last}}\eqsp.
\end{align}
For the first term we change order of summation
\begin{align}\label{eq:Q_j_definition}
    W_n = \frac{1}{\sqrt{n}}\sum_{t=0}^{n-1} \sum_{j=0}^{t}\alpha_j\Gamma_{j+1:t}\beps_j^{(0)}   =  \frac{1}{\sqrt{n}}\sum_{j=0}^{n-1} \mathrm{Q}_{j}\beps_j^{(0)}  \, ,
\end{align}
where $\mathrm{Q}_j = \sum_{t=j}^{n-1} \alpha_j\Gamma_{j+1:t}$. Here, \(W_n\) is a weighted sum of martingale difference vectors with covariance matrix
\begin{align}
\label{eq:Sigma_n_def}
    \Sigmabf_{n}= \E[W_nW_n^\top] =\frac{1}{n}\sum_{t=0}^{n-1} Q_t\Sigmabf_{\beps}Q_t^\top\eqsp.
\end{align}
 Applying \Cref{lem:shao} with $X = W_n$  and $Y =\mathrm{R}_n^{\operatorname{pr}}$ yields
\begin{align}
\label{eq:shao_bound}
\dconv(\sqrt{n}\bar\Delta_n, \Sigmabf_{\infty}^{1/2} Z) \leq \dconv(W_n, \Sigmabf_\infty^{\frac{1}{2}} Z ) +  2 c_d^{\frac{p}{p+1}}
\mathbb{E}^{\frac{1}{p+1}}\!\left[\|\Sigmabf_\infty^{-\frac{1}{2}}\mathrm{R}_n^{\operatorname{pr}}\|^p\right] \eqsp,
\end{align}
where we used the invariance of the convex distance under invertible linear transformations.
Setting \(p=\log(dn)\) and using \Cref{lem:Delta_with_remainder_pr} we bound the second term in \eqref{eq:shao_bound} as follows 
\begin{align}\label{eq:shao_remainder}
&2 c_d^{\frac{p}{p+1}}\PE^{\frac{1}{p+1}}[\|\Sigmabf_\infty^{-\frac{1}{2}}\mathrm{R}_n^{\operatorname{pr}}\|^p]
\lesssim   \frac{p^2 c_d\operatorname{C}^{\operatorname{last}}\|\Sigmabf_\infty^{-\frac{1}{2}}\|}{(1-\omega)n^{\omega-1/2}}\lesssim \frac{d^{1/4}\log^2(dn)}{n^{\omega - 1/2}} \eqsp.
\end{align}
It is important to highlight that the moment mismatch
\(\PE^{\frac{1}{p+1}}[\|\cdot\|^p]\)
introduces only an absolute constant factor due to the choice
\(p=\log(dn)\).
For the second term in \eqref{eq:shao_bound} we apply the triangle inequality
\begin{align}
\dconv (W_n, \Sigmabf_{\infty}^{\frac{1}{2}} Z ) \leq \dconv (W_n, \Sigmabf_n^{\frac{1}{2}} Z ) + \dconv (\Sigmabf_n^{\frac{1}{2}} Z, \Sigmabf_{\infty}^{\frac{1}{2}} Z^\prime)\eqsp,
\end{align}
where $Z^\prime$ is an i.i.d. copy of $Z$. Next, CLT for martingales \cite[Corollary 4]{wu2025uncertainty} is applied
\begin{align}\label{eq:clt_distance}
    \dconv (W_n,\,\Sigmabf_n^{1/2}Z) 
    \lesssim
C_W\frac{\sqrt{d}\log n }{n^{1/4}} \eqsp,
\end{align}
where \(\operatorname{C}_W\) is defined in \Cref{lem:clt:application}.
Finally, by \Cref{lem:convex_dist} we get \begin{align}
\label{eq:gaussian_comparison}
\dconv (\Sigmabf_n^{\frac{1}{2}} Z, \Sigmabf_{\infty}^{\frac{1}{2}} Z^\prime)\le C_\infty n^{\omega - 1}\eqsp.
\end{align} 
The statement of the theorem now follows by combining the bounds in
\eqref{eq:shao_remainder}, \eqref{eq:clt_distance} and \eqref{eq:gaussian_comparison}.
\end{proof}
\paragraph{Discussion.} 
In classical Q-learning without regularization, the statement of \Cref{thm:clt} cannot be guaranteed without additional assumptions, such as MDP \(\mathcal{M}\) admitting a unique optimal policy \( \pi^\star \) with a strictly positive optimality gap
\begin{equation}
\kappa_{\mathcal{M}} := 
        \min_{s \in \cS}\,
        \min_{a \ne \pi^\star(s)}
        \bigl|V^\star(s) - Q^\star(s,a)\bigr|
        > 0 \eqsp,
\end{equation}
or other conditions (e.g. \cite{li2023statistical}[Lemma B.1]), enforcing the uniqueness of the optimal policy. Without it, the limiting covariance may fail to be uniquely defined, leading to ambiguity in the asymptotic behavior. In addition, as \(\gamma \to 1\), the optimality gap \(\kappa_{\mathcal{M}} \to 0\), which significantly slows down the convergence in the CLT. This uniqueness is unrealistic and is a limitation of the classical setting. Contrary, the entropy-regularized setting guarantees uniqueness of the  solution under \Cref{assum:features}.
\par 
Optimizing the bound of \Cref{thm:clt} over $\omega \in (1/2, 1)$ we get setting $\omega = 3/4$ that
\begin{equation}
\label{eq:conv_dist_q_learn}
\dconv(\sqrt{n}\bar\Delta_n, \Sigmabf_{\infty}^{\frac{1}{2}} Z)\lesssim_{pr}\frac{d^{1/2}\log^2(dn)}{n^{1/4}}\eqsp.
\end{equation}
The closest counterparts to \eqref{eq:conv_dist_q_learn} are the recent results of \cite{liu2025central} and \cite{rubtsov2026gaussian}. However, both papers study the setting of the classical tabular $Q$-learning without entropy regularization and function approximation. In particular, \cite[Corollary~5]{liu2025central} provides rate $n^{-1/6}$ in the $1$-st order Wasserstein distance, with additional polynomial dependence on the state-action space cardinality $|S| |A|$. With the standard relation between the convex and Wasserstein-1 distance \cite{nourdin2022multivariate}, this result implies $n^{-1/12}$ rate in convex distance, still with polynomial dependence in $|S| |A|$. The latter issue is resolved in \cite{rubtsov2026gaussian}, where the authors obtain the bound of order $O\br{\log(|\cS||\cA|n)n^{-1\slash 6}}$ for the convergence rate \eqref{eq:berry-esseen} measured for the class of hyper-rectangles. Our rate of \eqref{eq:conv_dist_q_learn} is sharper in terms of its dependence on $n$ and resolves the limitation of the uniqueness of optimal policy due to the structure of the entropy-regularized $Q$-learning itself.

\section{Conclusion}
\label{sec:conclusion}
In this paper, we derive a novel Gaussian approximation bound for entropy-regularized $Q$-learning in terms of convex distance with a rate $n^{-1/4}$. We expect that this result is an important prerequisite for non-asymptotic analysis of the statistical inference procedures based on entropy-regularized $Q$-learning. 

\newpage 
\bibliographystyle{plain}
\bibliography{ref}

@article{spokoiny2015,
	author = {Vladimir Spokoiny and Mayya Zhilova},
	doi = {10.1214/15-AOS1355},
	journal = {The Annals of Statistics},
	number = {6},
	pages = {2653 -- 2675},
	publisher = {Institute of Mathematical Statistics},
	title = {{Bootstrap confidence sets under model misspecification}},
	url = {https://doi.org/10.1214/15-AOS1355},
	volume = {43},
	year = {2015}
}

@article{shao2022berry,
  title={Berry--{E}sseen bounds for multivariate nonlinear statistics with applications to {M}-estimators and stochastic gradient descent algorithms},
  author={Shao, Qi-Man and Zhang, Zhuo-Song},
  journal={Bernoulli},
  volume={28},
  number={3},
  pages={1548--1576},
  year={2022},
  publisher={Bernoulli Society for Mathematical Statistics and Probability}
}

@article{mnih2015human,
  title={Human-level control through deep reinforcement learning},
  author={Mnih, Volodymyr and Kavukcuoglu, Koray and Silver, David and Rusu, Andrei A and Veness, Joel and Bellemare, Marc G and Graves, Alex and Riedmiller, Martin and Fidjeland, Andreas K and Ostrovski, Georg and others},
  journal={nature},
  volume={518},
  number={7540},
  pages={529--533},
  year={2015},
  publisher={Nature Publishing Group}
}

@article{nourdin2022multivariate,
  title={Multivariate normal approximation on the Wiener space: new bounds in the convex distance},
  author={Nourdin, Ivan and Peccati, Giovanni and Yang, Xiaochuan},
  journal={Journal of Theoretical Probability},
  volume={35},
  number={3},
  pages={2020--2037},
  year={2022},
  publisher={Springer}
}

@inproceedings{kaledin2020finite,
	author = {Kaledin, M. and Moulines, E. and Naumov, A. and Tadic, V. and Wai, Hoi-To},
	booktitle = {Conference On Learning Theory},
	title = {Finite time analysis of linear two-timescale stochastic approximation with Markovian noise},
	year = {2020}
}

@book{douc:moulines:priouret:soulier:2018,
    AUTHOR = {Douc, Randal and Moulines, Eric and Priouret, Pierre and Soulier, Philippe},
     TITLE = {Markov chains},
    SERIES = {Springer Series in Operations Research and Financial Engineering},
 PUBLISHER = {Springer},
      YEAR = {2018},
     PAGES = {xviii+757},
      ISBN = {978-3-319-97703-4}
}

@inproceedings{qu2020finite,
  author       = {Qu, Guannan and Wierman, Adam},
  title        = {Finite-time analysis of asynchronous stochastic approximation and Q-learning},
  booktitle    = {Conference on Learning Theory (COLT)},
  series       = {Proceedings of Machine Learning Research},
  volume       = {125},
  pages        = {3185--3205},
  publisher    = {PMLR},
  year         = {2020}
}

@article{watkins1992qlearning,
  author       = {Watkins, Christopher J. C. H. and Dayan, Peter},
  title        = {Q-learning},
  journal      = {Machine Learning},
  volume       = {8},
  pages        = {279--292},
  year         = {1992},
  doi          = {10.1007/BF00992698}
}

@inproceedings{li2023statistical,
  title={A statistical analysis of polyak-ruppert averaged q-learning},
  author={Li, Xiang and Yang, Wenhao and Liang, Jiadong and Zhang, Zhihua and Jordan, Michael I},
  booktitle={International Conference on Artificial Intelligence and Statistics},
  pages={2207--2261},
  year={2023},
  organization={PMLR}
}

@article{li2024minimax,
  author       = {Li, Gen and Wu, Weichen and Chi, Yuejie and Ma, Cong and Rinaldo, Alessandro and Wei, Yuting},
  title        = {Is Q-learning minimax optimal? A tight sample complexity analysis},
  journal      = {Operations Research},
  volume       = {72},
  number       = {1},
  pages        = {222--236},
  year         = {2024},
  doi          = {10.1287/opre.2023.2490}
}

@article{liu2025central,
  title={Central Limit Theorems for Asynchronous Averaged Q-Learning},
  author={Liu, Xingtu},
  journal={arXiv preprint arXiv:2509.18964},
  year={2025}
}

@book{sutton2018reinforcement,
  author       = {Sutton, Richard S. and Barto, Andrew G.},
  title        = {Reinforcement Learning: An Introduction},
  publisher    = {MIT Press},
  year         = {2018},
  edition      = {2nd}
}

@book{bertsekas1996neuro,
  author       = {Bertsekas, Dimitri P. and Tsitsiklis, John N.},
  title        = {Neuro-Dynamic Programming},
  publisher    = {Athena Scientific},
  year         = {1996}
}

@article{wainwright2019variance,
  title={Variance-reduced $ Q $-learning is minimax optimal},
  author={Wainwright, Martin J},
  journal={arXiv preprint arXiv:1906.04697},
  year={2019}
}

@techreport{ruppert1988efficient,
  title={Efficient estimations from a slowly convergent {R}obbins-{M}onro process},
  author={Ruppert, David},
  year={1988},
  institution={Cornell University Operations Research and Industrial Engineering}
}

@article{polyak1992acceleration,
  title={Acceleration of stochastic approximation by averaging},
  author={Polyak, Boris T and Juditsky, Anatoli B},
  journal={SIAM journal on control and optimization},
  volume={30},
  number={4},
  pages={838--855},
  year={1992},
  publisher={SIAM}
}

@article{wu2025uncertainty,
  title={Uncertainty quantification for {M}arkov chains with application to temporal difference learning},
  author={Wu, Weichen and Wei, Yuting and Rinaldo, Alessandro},
  journal={arXiv preprint arXiv:2502.13822},
  year={2025}
}

@article{ChernozhukovChetverikovKato2013Gaussian,
  author    = {Chernozhukov, Victor and Chetverikov, Denis and Kato, Kengo},
  title     = {Gaussian approximations and multiplier bootstrap for maxima of sums of high-dimensional random vectors},
  journal   = {The Annals of Statistics},
  year      = {2013},
  volume    = {41},
  number    = {6},
  pages     = {2786--2819},
  doi       = {10.1214/13-AOS1161},
  url       = {https://doi.org/10.1214/13-AOS1161},
  publisher = {Institute of Mathematical Statistics}
}

@article{jin2018q,
  title={Is Q-learning provably efficient?},
  author={Jin, Chi and Allen-Zhu, Zeyuan and Bubeck, Sebastien and Jordan, Michael I},
  journal={Advances in neural information processing systems},
  volume={31},
  year={2018}
}

@article{sheshukova2025gaussian,
  title={Gaussian approximation and multiplier bootstrap for stochastic gradient descent},
  author={Sheshukova, Marina and Samsonov, Sergey and Belomestny, Denis and Moulines, Eric and Shao, Qi-Man and Zhang, Zhuo-Song and Naumov, Alexey},
  journal={arXiv preprint arXiv:2502.06719},
  year={2025}
}

@article{samsonov2024gaussian,
  title={Gaussian approximation and multiplier bootstrap for polyak-ruppert averaged linear stochastic approximation with applications to td learning},
  author={Samsonov, Sergey and Moulines, Eric and Shao, Qi-Man and Zhang, Zhuo-Song and Naumov, Alexey},
  journal={Advances in Neural Information Processing Systems},
  volume={37},
  pages={12408--12460},
  year={2024}
}

@inproceedings{butyrin2026gaussian,
  title={Gaussian approximation for two-timescale linear stochastic approximation},
  author={Butyrin, Bogdan and Rubtsov, Artemy and Naumov, Alexey and Ulyanov, Vladimir V and Samsonov, Sergey},
  booktitle={Proceedings of the AAAI Conference on Artificial Intelligence},
  volume={40},
  number={43},
  pages={36627--36635},
  year={2026}
}

@article{wu2024statistical,
  title={{S}tatistical {I}nference for {T}emporal {D}ifference {L}earning with {L}inear {F}unction {A}pproximation},
  author={Wu, Weichen and Li, Gen and Wei, Yuting and Rinaldo, Alessandro},
  journal={arXiv preprint arXiv:2410.16106},
  year={2024}
}

@article{JMLR:v19:17-370,
  author  = {Yixin Fang and Jinfeng Xu and Lei Yang},
  title   = {Online Bootstrap Confidence Intervals for the Stochastic Gradient Descent Estimator},
  journal = {Journal of Machine Learning Research},
  year    = {2018},
  volume  = {19},
  number  = {78},
  pages   = {1--21},
  url     = {http://jmlr.org/papers/v19/17-370.html}
}

@article{rubtsov2026gaussian,
  title={Gaussian Approximation for Asynchronous Q-learning},
  author={Rubtsov, Artemy and Samsonov, Sergey and Ulyanov, Vladimir and Naumov, Alexey},
  journal={arXiv preprint arXiv:2604.07323},
  year={2026}
}

@inproceedings{melo2008analysis,
  title={An analysis of reinforcement learning with function approximation},
  author={Melo, Francisco S and Meyn, Sean P and Ribeiro, M Isabel},
  booktitle={Proceedings of the 25th international conference on Machine learning},
  pages={664--671},
  year={2008}
}

@article{zou2019finite,
  title={Finite-sample analysis for sarsa with linear function approximation},
  author={Zou, Shaofeng and Xu, Tengyu and Liang, Yingbin},
  journal={Advances in neural information processing systems},
  volume={32},
  year={2019}
}

@article{xie2022statistical,
  title={A statistical online inference approach in averaged stochastic approximation},
  author={Xie, Chuhan and Zhang, Zhihua},
  journal={Advances in Neural Information Processing Systems},
  volume={35},
  pages={8998--9009},
  year={2022}
}

@inproceedings{samsonov2026statistical,
title={Statistical inference for Linear Stochastic Approximation with {M}arkovian Noise},
author={Sergey Samsonov and Marina Sheshukova and Eric Moulines and Alexey Naumov},
booktitle={The Thirty-ninth Annual Conference on Neural Information Processing Systems},
year={2026},
url={https://openreview.net/forum?id=nWTQREGMLG}
}

@article{tsitsiklis1994asynchronous,
  title={Asynchronous stochastic approximation and Q-learning},
  author={Tsitsiklis, John N},
  journal={Machine learning},
  volume={16},
  number={3},
  pages={185--202},
  year={1994},
  publisher={Springer}
}

@article{chen2022finite,
  title={Finite-sample analysis of nonlinear stochastic approximation with applications in reinforcement learning},
  author={Chen, Zaiwei and Zhang, Sheng and Doan, Thinh T and Clarke, John-Paul and Maguluri, Siva Theja},
  journal={Automatica},
  volume={146},
  pages={110623},
  year={2022},
  publisher={Elsevier}
}

@inproceedings{haarnoja2017reinforcement,
  title={Reinforcement learning with deep energy-based policies},
  author={Haarnoja, Tuomas and Tang, Haoran and Abbeel, Pieter and Levine, Sergey},
  booktitle={International conference on machine learning},
  pages={1352--1361},
  year={2017},
  organization={PMLR}
}

@inproceedings{haarnoja2018soft,
  title={Soft actor-critic: Off-policy maximum entropy deep reinforcement learning with a stochastic actor},
  author={Haarnoja, Tuomas and Zhou, Aurick and Abbeel, Pieter and Levine, Sergey},
  booktitle={International conference on machine learning},
  pages={1861--1870},
  year={2018},
  organization={Pmlr}
}

@inproceedings{ziebart2008maximum,
  title={Maximum entropy inverse reinforcement learning.},
  author={Ziebart, Brian D and Maas, Andrew L and Bagnell, J Andrew and Dey, Anind K and others},
  booktitle={Aaai},
  volume={8},
  pages={1433--1438},
  year={2008},
  organization={Chicago, IL, USA}
}

@book{osekowski,
   title =     {Sharp Martingale and Semimartingale Inequalities},
   author =    {Osekowski, A.},
   publisher = {Birkhäuser Basel},
   isbn =      {3034803699,9783034803694},
   year =      {2012},
   series =    {Monografie Matematyczne 72},
   edition =   {1},
   volume =    {}
}

@incollection{baird1995residual,
  title={Residual algorithms: Reinforcement learning with function approximation},
  author={Baird, Leemon},
  booktitle={Machine learning proceedings 1995},
  pages={30--37},
  year={1995},
  publisher={Elsevier}
}

@inproceedings{durmus2021tight,
 author = {Durmus, Alain and Moulines, Eric and Naumov, Alexey and Samsonov, Sergey and Scaman, Kevin and Wai, Hoi-To},
 booktitle = {Advances in Neural Information Processing Systems},
 pages = {30063--30074},
 publisher = {Curran Associates, Inc.},
 title = {Tight High Probability Bounds for Linear Stochastic Approximation with Fixed Stepsize},
 url = {https://proceedings.neurips.cc/paper/2021/file/fc95fa5740ba01a870cfa52f671fe1e4-Paper.pdf},
 volume = {34},
 year = {2021}
}

@article{durmus2022finite,
author = {Durmus, Alain and Moulines, Eric and Naumov, Alexey and Samsonov, Sergey},
title = {Finite-Time High-Probability Bounds for {P}olyak–{R}uppert Averaged Iterates of Linear Stochastic Approximation},
journal = {Mathematics of Operations Research},
volume = {50},
number = {2},
pages = {935-964},
year = {2025},
doi = {10.1287/moor.2022.0179},
URL = {https://doi.org/10.1287/moor.2022.0179},
eprint = {https://doi.org/10.1287/moor.2022.0179}
}

@article{ball1993reverse,
  title={The reverse isoperimetric problem for Gaussian measure},
  author={Ball, Keith},
  journal={Discrete \& Computational Geometry},
  volume={10},
  number={4},
  pages={411--420},
  year={1993},
  publisher={Springer-Verlag Berlin, Heidelberg}
}

@article{paulin2015concentration,
  title={Concentration inequalities for Markov chains by Marton couplings and spectral methods},
  author={Paulin, Daniel},
  journal={Electronic journal of probability},
  volume={20},
  pages={79},
  year={2015},
  publisher={Institute of Mathematical Statistics}
}

\newpage
\appendix

\section{Notations}
\begin{table}[htbp]
\centering
\caption{Notation used throughout the paper}
\label{tab:notation}
\begin{tabular}{ll}
\toprule
\textbf{Notation} & \textbf{Meaning} \\
\midrule
\multicolumn{2}{l}{\textbf{Markov Decision Process}} \\
\midrule
\( \mathcal{M} \) & Markov Decision Process (MDP) \\
\( \cS \) & State space, \( |\cS| = S \) \\
\( \cA \) & Action space, \( |\cA| = A \) \\
\( r \) & Reward function \\
\( \MKQ\) & Transition kernel of the MDP \( \mathcal{M} \) \\
\(\gamma\) & Discount factor\\
\(\lambda\) & Entropy regularization parameter\\
\(  \Phi\) & Feature matrix\\

\midrule
\multicolumn{2}{l}{\textbf{Policies and Value Functions}} \\
\midrule
\( \pi_b \) & Behaviour policy \\
\( \pi^{\lambda}_{\theta} \) & Entropy-regularized softmax policy parameterized by \(\theta\) \\
\( Q_{\theta} \) & Action-value function parameterized by \(\theta\) \\
\( V^{\lambda}_{\theta}\) & Entropy-regularized value function parameterized by \(\theta\) \\

\midrule
\multicolumn{2}{l}{\textbf{Markov Chains and Kernels}} \\
\midrule
\(\totMKQ\)  & Transition kernel of the Markov chain \(z_t=(s_t,a_t,s_{t+1})\) induced by \(\pi_b\) \\
\( \mu \) & Stationary distribution of \( \totMKQ \) \\
\( \taumix \) & Mixing time of \( \totMKQ \) \\

\midrule
\multicolumn{2}{l}{\textbf{Learning Process}} \\
\midrule
\( \alpha_k \) & Step–size sequence \\
\( \theta_t \) & Q–learning iterates \\
\(\theta^\star\) & Fixed point of the projected Bellman operator\\

\midrule
\multicolumn{2}{l}{\textbf{General Notation}} \\
\midrule
\( \mathbf{1}\{A\} \) & Indicator function of event \(A\) \\
\( \|{\cdot}\| \) & Spectral norm (vector/matrix) \\
\( \Pb, \PE \) & Probability and expectation \\
\( \operatorname{diag}(X) \) & Diagonal vector of matrix \(X\) \\
\( \e_s \) & One–hot vector corresponding to state \(s\) \\
\(\pois{f}{}\) & Poisson transformation of function \(f\)\\
\(\dconv\) & Convex distance\\
\bottomrule
\end{tabular}
\end{table}

\section{Proof of \Cref{thm:moments}}
\label{sec:proof_of_moments}
In \Cref{lem:F_lipshitz} and \Cref{lem:strong_convexity} we summarize the main properties of the operator \(F\).
\begin{lemma}
\label{lem:F_lipshitz} 
Assume that \Cref{assum:regularity}--\Cref{assum:features} hold. Then
the operator \(F\) is \((1+\gamma)\)-Lipschitz in \(\theta\), i.e., for all \(z\) and all \(\theta_1, \theta_2 \in \rset^d\),
\begin{align}
\|F(\theta_1,z)-F(\theta_2,z)\|
\le
(1+\gamma)\|\theta_1-\theta_2\| \eqsp.
\end{align}
\end{lemma}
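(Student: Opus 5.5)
We need to show that $F$ is $(1+\gamma)$-Lipschitz in $\theta$. We write $z=(s,a,s')$ and use the definition
\[
F(\theta,z)=\phi(s,a)\bigl(r(s,a)+\gamma V^{\lambda}_{\theta}(s')-\phi(s,a)^\top\theta\bigr).
\]
The reward terms cancel in the difference, so
\[
F(\theta_1,z)-F(\theta_2,z)=\phi(s,a)\Bigl(\gamma\bigl(V^{\lambda}_{\theta_1}(s')-V^{\lambda}_{\theta_2}(s')\bigr)-\phi(s,a)^\top(\theta_1-\theta_2)\Bigr).
\]
By \Cref{assum:features}, $\|\phi(s,a)\|\le 1$. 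The triangle inequality and Cauchy--Schwarz then give
\[
\|F(\theta_1,z)-F(\theta_2,z)\|\le \gamma\,|V^{\lambda}_{\theta_1}(s')-V^{\lambda}_{\theta_2}(s')|+\|\theta_1-\theta_2\|.
\]
It therefore suffices to show $|V^{\lambda}_{\theta_1}(s')-V^{\lambda}_{\theta_2}(s')|\le\|\theta_1-\theta_2\|$ for every state $s'$.

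For this we use that the log-sum-exp map $x\mapsto \lambda\log\sum_{a}\exp(x_a/\lambda)$ on $\rset^{A}$ is $1$-Lipschitz with respect to $\|\cdot\|_\infty$. Its gradient is the softmax vector, which lies in the probability simplex and so has $\ell_1$-norm one. By the mean value theorem along the segment between $x$ and $y$, the difference of the map at $x$ and $y$ is bounded by $\sup_{\text{seg}}\|\mathrm{softmax}\|_1\,\|x-y\|_\infty=\|x-y\|_\infty$.

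A more direct route is monotonicity and translation equivariance: if $x\le y+c\mathbf 1$ componentwise, then the log-sum-exp of $x$ is at most that of $y$ plus $c$. Taking $c=\|x-y\|_\infty$ and exchanging the roles of $x$ and $y$ gives the claim. Either route is independent of $\lambda$.

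We apply this with $x_a=Q_{\theta_1}(s',a)$ and $y_a=Q_{\theta_2}(s',a)$. Since
\[
|Q_{\theta_1}(s',a)-Q_{\theta_2}(s',a)|=|\phi(s',a)^\top(\theta_1-\theta_2)|\le\|\theta_1-\theta_2\|,
\]
again by $\|\phi\|\le1$, we obtain $|V^{\lambda}_{\theta_1}(s')-V^{\lambda}_{\theta_2}(s')|\le\|\theta_1-\theta_2\|$. Combining with the previous display yields the constant $1+\gamma$, uniformly in $z$.

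The only nontrivial step is the $\ell_\infty$-nonexpansiveness of the soft-max value, and it is elementary. The bound uses only the feature normalization in \Cref{assum:features} and not the covariance condition.
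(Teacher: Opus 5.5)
Your proof is correct and takes essentially the same route as the paper. You cancel the reward, bound the difference by $\gamma|V^{\lambda}_{\theta_1}(s')-V^{\lambda}_{\theta_2}(s')|+|\phi(s,a)^\top(\theta_1-\theta_2)|$, and then use the $\ell_\infty$-nonexpansiveness of the soft value map together with $\|\phi\|\le 1$. The one difference is that you actually justify the nonexpansiveness of log-sum-exp (via the softmax gradient, or via monotonicity plus translation equivariance), whereas the paper simply invokes it.
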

\begin{proof}
Let \(z=(s,a,s')\). By definition,
\begin{align}
F(\theta_1,z)-F(\theta_2,z)
&=
\phi(s,a)
\Big(
\gamma\big(V_{\theta_1}^{\lambda}(s')-V_{\theta_2}^{\lambda}(s')\big)
-
\big(Q_{\theta_1}(s,a)-Q_{\theta_2}(s,a)\big)
\Big) \\
&=
\phi(s,a)
\Big(
\gamma\big(V_{\theta_1}^{\lambda}(s')-V_{\theta_2}^{\lambda}(s')\big)
-
\phi(s,a)^\top (\theta_1 - \theta_2)
\Big).
\end{align}
Using the triangle inequality, the Lipschitz property of the soft value map, and \(\|\phi(s,a)\|\le 1\), we obtain
\begin{align}
\|F(\theta_1,z)-F(\theta_2,z)\|
&\le
\|\phi(s,a)\|
\left(
\gamma
\big|V_{\theta_1}^{\lambda}(s')-V_{\theta_2}^{\lambda}(s')\big|
+
\big|\phi(s,a)^\top(\theta_1-\theta_2)\big|
\right) \\
&\le
\gamma \max_{a'\in\cA}
\big|\phi(s',a')^\top(\theta_1-\theta_2)\big|
+
|\phi(s,a)^\top(\theta_1-\theta_2)| \\
&\le
(1+\gamma)\|(\theta_1-\theta_2)\| .
\end{align}
This proves the claim.
\end{proof}
\begin{lemma}
\label{lem:strong_convexity} 
Assume that \Cref{assum:regularity}--\Cref{assum:features} hold. Then, for all \(\theta \in \rset^d\),\[\big\langle \theta - \theta^\star,\bar F(\theta)-\bar F(\theta^\star)\big\rangle\le-\frac{\kappa}{2}\|\theta-\theta^\star\|^2 \eqsp.\]
\end{lemma}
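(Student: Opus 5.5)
We will prove the strong monotonicity by writing \(\bar F(\theta)-\bar F(\theta^\star)\) explicitly through the matrix representation \eqref{eq:F_matrix_repr} and separating the linear part from the soft value part. Set \(\delta=\theta-\theta^\star\). Then
\begin{align}
\langle \delta,\bar F(\theta)-\bar F(\theta^\star)\rangle
= -\,\delta^\top\Phi^\top D_\mu\Phi\,\delta + \gamma\,\delta^\top\Phi^\top D_\mu \MKQ\big(V^\lambda_\theta-V^\lambda_{\theta^\star}\big)\eqsp.
\end{align}
The first term equals \(-\delta^\top\Sigma_\phi^{\pi_b}\delta = -\PE_\mu[(\phi(s,a)^\top\delta)^2]\).

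For the second term we use the mean value theorem along the segment \(\theta_u=\theta^\star+u\delta\), \(u\in[0,1]\). The gradient of the log-sum-exp is the softmax, so
\[
V^\lambda_\theta(s')-V^\lambda_{\theta^\star}(s') = \int_0^1\sum_{a'}\pi^\lambda_{\theta_u}(a'\mid s')\,\phi(s',a')^\top\delta\,\rmd u\eqsp.
\]
Writing the second term as an expectation over \((s,a,s')\sim\mu\), we apply Cauchy--Schwarz together with Young's inequality \(xy\le \tfrac12x^2+\tfrac12y^2\):
\[
\gamma\,\PE_\mu\Big[\phi(s,a)^\top\delta\int_0^1 \PE_{a'\sim\pi_{\theta_u}}[\phi(s',a')^\top\delta]\,\rmd u\Big]
\le \tfrac12\PE_\mu[(\phi^\top\delta)^2] + \tfrac{\gamma^2}{2}\int_0^1 \PE\big[(\PE_{a'\sim\pi_{\theta_u}}\phi(s',a')^\top\delta)^2\big]\rmd u\eqsp.
\]
Jensen (inner expectation over \(a'\)) bounds the last quantity by \(\delta^\top\Sigma_\phi^{\theta_u}\delta\). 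This uses that the marginal of \(s'\) under \(\mu\) is the state marginal \(\mu\) used in the definition of \(\Sigma_\phi^{\theta}\); that holds by stationarity of the chain \(z_k\), since \(s_{k+1}=s\)-component of \(z_{k+1}\).

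Combining the two pieces gives
\[
\langle\delta,\bar F(\theta)-\bar F(\theta^\star)\rangle\le -\tfrac12\int_0^1\delta^\top\big(\Sigma_\phi^{\pi_b}-\gamma^2\Sigma_\phi^{\theta_u}\big)\delta\,\rmd u\le -\tfrac{\kappa}{2}\|\delta\|^2\eqsp,
\]
where the last step is \Cref{assum:features}, applied at every \(\theta_u\). We need \Cref{assum:features} uniformly in \(\theta\) for exactly this reason.

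The main subtle point is the identification of the \(s'\)-marginal with the state distribution appearing in \(\Sigma_\phi^\theta\), and handling the integral over \(u\) (moving Young/Jensen inside it). The remaining steps are routine.
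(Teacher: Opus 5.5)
Your proposal is correct and follows essentially the same route as the paper: you split off $-\delta^\top\Sigma_\phi^{\pi_b}\delta$, apply the integral mean-value representation of $V^\lambda_\theta - V^\lambda_{\theta^\star}$ together with $\gamma xy \le \tfrac12 x^2 + \tfrac{\gamma^2}{2}y^2$ and Jensen (over $u$ and over $a'$), and then invoke \Cref{assum:features} at every $\theta^\star + u\delta$. The only cosmetic difference is that the paper applies the Young-type bound to $\gamma ab - a^2$ before inserting the integral representation, whereas you insert it first; your explicit check that the $s'$-marginal of $\mu$ equals its state marginal, by stationarity, spells out a step the paper compresses into ``averaging over $s'\sim\mu$''.
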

\begin{proof} Let \(\Delta=\theta-\theta^\star\). Since \(Q_\theta-Q_{\theta^\star}=\Phi\Delta\), we have\[\bar F(\theta)-\bar F(\theta^\star)=\Phi^\top D_\mu\left(\gamma \MKQ (V_\theta^\lambda-V_{\theta^\star}^\lambda)-\Phi\Delta\right).\]Therefore,\begin{align}\big\langle \Delta,\bar F(\theta)-\bar F(\theta^\star)\big\rangle&=\gamma\PE_{\mu}\left[\phi(s,a)^\top\Delta\,\bigl(V_\theta^\lambda(s')-V_{\theta^\star}^\lambda(s')\bigr)\right] \\&\quad-\PE_{\mu}\left[\bigl(\phi(s,a)^\top\Delta\bigr)^2\right].\end{align}Using \(\gamma ab-a^2\le \frac{\gamma^2}{2}b^2-\frac12 a^2\), we obtain\[\big\langle \Delta,\bar F(\theta)-\bar F(\theta^\star)\big\rangle\le\frac12\left[\gamma^2\PE_{\mu}\left[\bigl(V_\theta^\lambda(s')-V_{\theta^\star}^\lambda(s')\bigr)^2\right]-\PE_{\mu}\left[\bigl(\phi(s,a)^\top\Delta\bigr)^2\right]\right].\]By the mean-value theorem and the identity \(\nabla_\theta V_\vartheta^\lambda(s)=\PE_{a\sim \pi_\vartheta^\lambda(\cdot\mid s)}[\phi(s,a)]\), for every \(s\) there exists an integral representation\[V_\theta^\lambda(s)-V_{\theta^\star}^\lambda(s)=\int_0^1\left\langle\PE_{a\sim \pi_{\theta^\star+u\Delta}^\lambda(\cdot\mid s)}[\phi(s,a)],\Delta\right\rangle du .\]Hence, by Jensen's inequality,\begin{align}\bigl(V_\theta^\lambda(s)-V_{\theta^\star}^\lambda(s)\bigr)^2&\le\int_0^1\PE_{a\sim \pi_{\theta^\star+u\Delta}^\lambda(\cdot\mid s)}\left[\bigl(\phi(s,a)^\top\Delta\bigr)^2\right]du .\end{align}Averaging over \(s'\sim \mu\) gives
\[
\PE_{\mu}\left[\bigl(V_\theta^\lambda(s')-V_{\theta^\star}^\lambda(s')\bigr)^2\right]\le\int_0^1\Delta^\top\Sigma_{\phi}^{{\theta^\star+u\Delta}}\Delta\,du .
\]
Moreover,\[\PE_{\mu}\left[\bigl(\phi(s,a)^\top\Delta\bigr)^2\right]=\Delta^\top \Sigma_{\phi}^{\pi_b}\Delta .\]Combining the above bounds and using \Cref{assum:features}  for each \(\theta^\star+u\Delta\), we get
\begin{align}\big\langle \Delta,\bar F(\theta)-\bar F(\theta^\star)\big\rangle&\le\frac12\int_0^1\Delta^\top\left(\gamma^2\Sigma_{\phi}^{{\theta^\star+u\Delta}}-\Sigma_{\phi}^{\pi_b}\right)\Delta\,du \le-\frac{\kappa}{2}\|\Delta\|^2 .\end{align}This proves the claim.\end{proof}


 In \Cref{lem:xi_t_bound}, we control the noise \(\xi_t\) in terms of a universal constant
and the deviation from \(\theta^\star\). We note that the possible growth of the
noise term with respect to \(\theta_t\) is standard  in
general SA schemes. 
\begin{lemma}
\label{lem:xi_t_bound}
Assume that \Cref{assum:regularity}--\Cref{assum:features} hold. Then,
the noise terms \(F(\theta^\star, z_t)\) and \(\xi_t\) satisfy the bounds
\begin{align}
\|F(\theta^\star, z_t)\|\leq \eps_{\max}\eqsp, \quad\|\xi_t\|^2
\leq \Cxi(1+ \|\theta_t - \theta^\star\|^2) \eqsp,
\end{align}
where
\begin{align}
\eps_{\max} = 1 + (1+\gamma)\|\theta^\star\|_2 + \gamma\lambda \log(A)\eqsp,\quad
\Cxi = 2\eps_{\max}^2
\vee 8(1+\gamma)^2 .
\end{align}
\end{lemma}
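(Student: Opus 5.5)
The plan is to bound $\|F(\theta^\star,z)\|$ by bounding each scalar term inside the bracket, and then to get the bound on $\xi_t$ from the decomposition \eqref{eq:xi_t_definition} combined with \Cref{lem:F_lipshitz}.

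\textbf{Bound on $F(\theta^\star,z)$.} Write $z=(s,a,s')$. Since $\|\phi(s,a)\|\le 1$, we have
\[
\|F(\theta^\star,z)\|\le |r(s,a)| + \gamma |V^\lambda_{\theta^\star}(s')| + |Q_{\theta^\star}(s,a)| .
\]
By \Cref{assum:regularity}, $|r|\le 1$, and by Cauchy--Schwarz, $|Q_{\theta^\star}(s,a)| = |\phi(s,a)^\top\theta^\star|\le \|\theta^\star\|$.

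For the soft value, use the standard log-sum-exp sandwich
\[
\max_{a'} Q_{\theta^\star}(s',a') \;\le\; V^\lambda_{\theta^\star}(s') \;\le\; \max_{a'} Q_{\theta^\star}(s',a') + \lambda\log A .
\]
This gives $|V^\lambda_{\theta^\star}(s')|\le \|\theta^\star\| + \lambda\log A$. Summing the three pieces yields $1+(1+\gamma)\|\theta^\star\|+\gamma\lambda\log A=\eps_{\max}$.

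\textbf{Bound on $\xi_t$.} Group the decomposition as
\[
\xi_t = F(\theta^\star,z_t) + \bigl\{F(\theta_t,z_t)-F(\theta^\star,z_t)\bigr\} + \bigl\{\bar F(\theta^\star)-\bar F(\theta_t)\bigr\}.
\]
\begin{itemize}
\item By \Cref{lem:F_lipshitz}, the first bracket has norm at most $(1+\gamma)\|\Delta_t\|$.
\item $\bar F(\theta)=\PE_\mu[F(\theta,z)]$ inherits the same Lipschitz constant by Jensen's inequality, so the second bracket also has norm at most $(1+\gamma)\|\Delta_t\|$.
\end{itemize}
Hence $\|\xi_t\|\le \eps_{\max}+2(1+\gamma)\|\Delta_t\|$. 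Applying $(a+b)^2\le 2a^2+2b^2$ gives
\[
\|\xi_t\|^2\le 2\eps_{\max}^2 + 8(1+\gamma)^2\|\Delta_t\|^2 \le \Cxi\bigl(1+\|\Delta_t\|^2\bigr).
\]

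\textbf{Main obstacle.} The only step that is not immediate is the sandwich for $V^\lambda_\theta$. The lower bound holds because the sum of exponentials dominates its largest term. The upper bound holds because that sum is at most $A$ times the largest term. Once this is in place, everything else is a direct combination of earlier bounds.
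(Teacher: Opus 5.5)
Your proof is correct and follows essentially the same route as the paper: you bound the three scalar factors of $F(\theta^\star,z)$ using $\|\phi\|\le 1$ and the log-sum-exp sandwich for $V^\lambda_{\theta^\star}$, then combine \Cref{lem:F_lipshitz} (applied to $F$ and to its $\mu$-average $\bar F$) with $(a+b)^2\le 2a^2+2b^2$. Your explicit justifications of the sandwich and of the Lipschitz property of $\bar F$ via Jensen fill in details the paper leaves implicit.
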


\begin{proof}
First, we bound the noise at the optimum:
\begin{align}
\|F(\theta^\star, z_t)\|
&=
\left\|
\Phi^\top\bigl(
\Lam_t r
+
\gamma \bMKQ_t V_{\theta^\star}^{\lambda}
-
\Lam_t Q_{\theta^\star}
\bigr)
\right\| \\
&\leq
\|\phi(s_t,a_t) r(s_t,a_t)\|
+ \gamma \|\phi(s_t,a_t) V_{\theta^\star}^{\lambda}(s_{t+1})\|
+ \|\phi(s_t,a_t) Q_{\theta^\star}(s_t,a_t)\| \\
&\leq
1 + (1+\gamma)\|Q_{\theta^\star}\|_{\infty}
+ \gamma\lambda \log(A) \\
&\leq
1 + (1+\gamma)\|\theta^\star\|_2
+ \gamma\lambda \log(A) := \eps_{\max}\eqsp.
\end{align}
By \Cref{lem:F_lipshitz}, we also have
\begin{align}
&\left\|
\bigl\{F(\theta_t, z_t) - F(\theta^\star, z_t)\bigr\}
+
\bigl\{\bar{F}(\theta^\star) - \bar{F}(\theta_t)\bigr\}
\right\| \leq
(2+2\gamma)\|\theta_t - \theta^\star\| \eqsp.
\end{align}
Therefore, using \(\|a+b\|^2 \leq 2\|a\|^2 + 2\|b\|^2\), we obtain the result bound.
\end{proof}
Next, \Cref{lem:poisson_transform_uniform_bound} provides the main technical results associated with the Poisson equation framework.
Recall that the Poisson equation 
\begin{equation}
\label{eq:pois_solution_equation}
\pois{f}{} -  \bar{\MKQ} \pois{f}{} = f - \E_{\mu}[f]\eqsp, 
\end{equation}
under Assumption \Cref{assum:UGE} has a unique solution for any bounded measurable $f$, which is given by the formula 
\begin{align}
\label{eq:poisson_solution}
\pois{f}{} = \sum_{k=0}^{\infty}\left(\bar\MKQ^{k}f - \E_{\mu}[f]\right)\eqsp.
\end{align}

Moreover, using  \Cref{assum:UGE}, one can show that $\pois{f}{}$ is also bounded.
\begin{lemma}
\label{lem:poisson_transform_uniform_bound}
Suppose that \Cref{assum:UGE} holds. Let \(f:\mathcal Z\to\mathbb R^d\) be such that
\(
\mathbb E_{\mu}[f(z)] = 0\) and 
\(\|f(z)\|_2 \le a\) for all \( z\in\mathcal Z 
\).
Define the Poisson transform of \(f\) by \eqref{eq:poisson_solution}.
Then \(\pois{f}{}(z)\) is well-defined and satisfies the uniform bound
\[
\sup_{z\in\mathcal Z}\|g^f(z)\|_2
\le
\frac{8}{3}a\taumix .
\]
\end{lemma}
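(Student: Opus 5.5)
The plan is to bound the series \eqref{eq:poisson_solution} term by term, using the uniform geometric ergodicity in \Cref{assum:UGE} and grouping the terms into blocks of length $\taumix$. Since $\E_\mu[f]=0$, the $k$-th term is $\bar\MKQ^k f(z) = \int f(y)\,(\bar\MKQ^k(\cdot\mid z)-\mu)(\rmd y)$.

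\textbf{Step 1: a total-variation estimate for vector-valued functions.} For a fixed $z$, I would write $\|\bar\MKQ^k f(z)\|_2 = \sup_{\|u\|_2\le 1}\langle u, \bar\MKQ^k f(z)\rangle$. For each unit vector $u$, the scalar function $h_u=\langle u,f\rangle$ satisfies $\|h_u\|_\infty\le a$. Hence, by the definition of $\tvdist$ (with the factor $\tfrac12$),
\[
|\bar\MKQ^k h_u(z)-\mu(h_u)| \le 2a\,\tvdist\bigl(\bar\MKQ^k(\cdot\mid z),\mu\bigr)\le 2a\,(1/4)^{\lfloor k/\taumix\rfloor}.
\]
Taking the supremum over $u$ gives $\|\bar\MKQ^k f(z)\|_2\le 2a(1/4)^{\lfloor k/\taumix\rfloor}$.

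\textbf{Step 2: summing the series.} The previous bound shows that the series converges absolutely and uniformly in $z$, so $\pois{f}{}$ is well defined and bounded. It also satisfies \eqref{eq:pois_solution_equation}, which follows by telescoping: $\pois{f}{}-\bar\MKQ\pois{f}{}=f$. Summing over blocks $k\in\{m\taumix,\dots,(m+1)\taumix-1\}$ gives
\[
\sup_z\|\pois{f}{}(z)\|_2\le 2a\,\taumix\sum_{m\ge0}4^{-m} = \frac{8}{3}a\,\taumix,
\]
which is exactly the claimed constant.

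\textbf{Main obstacle.} The only delicate point is matching the constant $8/3$. This requires using the normalization $\tvdist=\tfrac12\sup_{\|f\|_\infty\le1}|\mu(f)-\nu(f)|$ correctly, so that the oscillation factor is $2a$ rather than $a$. It also requires passing from vector norms to scalar test functions through the supremum over unit $u$. Both are routine. Uniqueness of the solution among bounded functions with $\mu$-mean zero is not needed for the bound, but it follows from the same ergodicity estimate if one wishes to state it.
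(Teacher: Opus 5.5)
Your proposal is correct and matches the paper's proof essentially step for step: you use $\E_\mu[f]=0$ to get $\|\totMKQ^{k} f(z)\|\le 2a\,(1/4)^{\lfloor k/\taumix\rfloor}$ from the total-variation bound, then sum in blocks of length $\taumix$ to obtain $2a\,\taumix\cdot\tfrac{4}{3}=\tfrac{8}{3}a\,\taumix$. Your reduction to scalar test functions via the supremum over unit vectors $u$ spells out the vector-valued step that the paper covers only by invoking the total variation measure.
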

\begin{proof}
Since \(\mathbb E_{\mu}[f(Z)]=0\), for every \(z\in\mathcal Z\) and \(t\ge 0\),
\[
\totMKQ^{t}f(z)
=
\int_{\mathcal Z} f(z')
\big(
\bar{\MKQ}^{t}(dz'\mid z)-\mu(dz')
\big).
\]
Hence, using the total variation measure and \Cref{assum:UGE},
\[
\left\|
\bar{\MKQ}^{t}f(z)
\right\|
\le
2a\,
d_{\operatorname{tv}}
\bigl(
\bar{\MKQ}^{t}(\cdot\mid z),\mu \bigr)\le 
2a
\left(1/4\right)^{\lfloor t/\taumix\rfloor}
\eqsp.
\]
Therefore,
\begin{align}
\|g^f(z)\|
&\le
\sum_{t=0}^{\infty}
\big\|
\bar{\MKQ}^{t}f(z)
\big\| \le
2a
\sum_{k=0}^{\infty}
\left(\frac14\right)^{\lfloor k/\taumix\rfloor}\le
\frac{8}{3}a\,\taumix \eqsp.
\end{align}
Taking the supremum over \(z\in\mathcal Z\) completes the proof.
\end{proof}
Since the noise term \(\xi_t\) (see \eqref{eq:xi_t_definition}) depends not only on \(z_t\), but also on the deviation \(\theta_t - \theta^\star\), the Poisson decomposition \eqref{eq:poisson_solution} cannot be applied directly. To simplify the derivations, it is convenient to use the matrix notation for operators and variables introduced in \Cref{subsec:gaus_approx}. We rewrite \(\xi_t\) using the matrix operators \(\bMKQ_t\) and \(\Lam_t\), introduced in \eqref{eq:P_and_Lam}:
\begin{align}
\xi_{t}
&= \Phi^\top(\Lam_t r+\gamma \bMKQ_{t}V_{\theta^\star}^{\lambda} - \Lam_t Q_{\theta^\star}) +
\gamma \Phi^\top(\bMKQ_{t} - D_\mu \MKQ)(V_{\theta_t}^{\lambda} - V_{\theta^\star}^{\lambda})
-
\Phi^\top(\Lam_t - D_\mu)(Q_{\theta_t}-Q_{\theta^\star}) \eqsp.
\end{align}
The representation above follows directly from the matrix forms of \(F\) and \(\bar F\) in \eqref{eq:F_matrix_repr}. Although \(\xi_t\) is not uniformly bounded, the corresponding martingale and remainder components can still be expressed through the Poisson transforms \(\pois{\bMKQ}{}\) and \(\pois{\Lam}{}\), which are uniformly bounded by \Cref{lem:poisson_transform_uniform_bound}, together with the algorithmic errors \(Q_{\theta_t}-Q_{\theta^\star}\) and \(V_{\theta_t}^{\lambda} - V_{\theta^\star}\). Specifically, let
\(
\xi_t = \xi_t^{(0)} + \xi_t^{(1)}
\),
where
    \begin{align}
    \label{eq:xi_1_2_definitions}
    \begin{cases}
        \xi_{t}^{(0)} = (\pois{\varepsilon}{t} - \totMKQ \pois{\varepsilon}{t-1}) + \gamma\Phi^\top(\pois{\bMKQ}{t} - \totMKQ \pois{\bMKQ}{t-1})(V_{\theta_t}^{\lambda}-V_{\theta^\star}^{\lambda}) - \Phi^\top(\pois{\Lam}{t} - \totMKQ \pois{\Lam}{t-1})(Q_{\theta_t}-Q_{\theta^\star})\eqsp,\\[0.75em]
        \xi_{t}^{(1)} = (\totMKQ \pois{\varepsilon}{t-1} - \totMKQ \pois{\varepsilon}{t}) +\gamma\Phi^\top(\totMKQ \pois{\bMKQ}{t-1} - \totMKQ \pois{\bMKQ}{t})(V_{\theta_t}^{\lambda}- V_{\theta^\star}^{\lambda}) - \Phi^\top(\totMKQ \pois{\Lam}{t-1} - \totMKQ \pois{\Lam}{t})(Q_{\theta_t} - Q_{\theta^\star})\eqsp,
    \end{cases}
\end{align}
and \(\beps_t = F(\theta^\star, z_t)\) is defined as in \eqref{eq:eps_and_E}. To verify that \eqref{eq:xi_1_2_definitions} indeed yields
\(\xi_t = \xi_t^{(0)} + \xi_t^{(1)}\), it suffices to use the identities
\(\PE_{\mu}\big[\Lam\big] = D_{\mu}\),
\(\PE_{\mu}\big[\bMKQ\big] = D_{\mu}\MKQ\), and
\(\PE_{\mu}\big[\beps\big] = 0\).


Next, we prove \Cref{lem:square_error_rec}, an important intermediate step in
the proof of \Cref{thm:moments}. This lemma bounds \(\|\Delta_{t+1}\|^2\) in
terms of the cross terms \(\langle \Delta_j,\xi_j^{(0)}\rangle\) and
\(\langle \Delta_j,\xi_j^{(1)}\rangle\).
\paragraph{Proof of \Cref{lem:square_error_rec}}.
\begin{proof}
We first decompose \(\xi_t = \xi_t^{(0)} + \xi_t^{(1)}\) and then expand the squared error:
    \begin{align}
\label{eq:delta_first_step}
\|\Delta_{t+1}\|^2  &= \big\langle  \Delta_t + \alpha_t F(\theta_t, z_t), \Delta_t + \alpha_t F(\theta_t, z_t) \big\rangle\\
        &=\|\Delta_t\|^2 + 2\alpha_t \langle \Delta_t,\bar F(\theta_t) + \xi_t \rangle + \alpha_t^2 \|F(\theta_t, z_t)\|^2\\
        &= \|\Delta_t\|^2 + 2\alpha_t\langle\Delta_t, \bar F(\theta_t)\rangle + 2\alpha_t \langle \Delta_t, \xi_t^{(0)} \rangle + 2\alpha_t \langle \Delta_t, \xi_t^{(1)} \rangle+ \alpha_t^2 \|F(\theta_t, z_t)\|^2 \eqsp. 
\end{align}
By \Cref{lem:strong_convexity}, we have
\begin{align}
    \langle\Delta_t, \bar F(\theta_t)\rangle\leq -\frac{\kappa}{2}\|\Delta_t\|^2\eqsp.
    \end{align}
Moreover, by \Cref{lem:F_lipshitz} and \Cref{lem:xi_t_bound},
\begin{align}
     \quad \|F(\theta_t, z_t)\|^2 \leq 2\|\bar F(\theta_t) - \bar F(\theta^\star)\|^2 + 2\|\xi_t\|^2 \leq (2+2\gamma)\|\Delta_t\|^2 + 2\Cxi(1 + \|\Delta_t\|^2)\eqsp.
\end{align}
Substituting these estimates into \eqref{eq:delta_first_step}, we obtain
\begin{align}
 \|\Delta_{t+1}\|^2&\leq (1 - \kappa\alpha_t + 2\alpha_t^2(\Cxi+1+\gamma))\|\Delta_t\|^2 + 2\alpha_t \langle \Delta_t, \xi_t^{(0)} \rangle + 2\alpha_t \langle \Delta_t, \xi_t^{(1)} \rangle  + 2\alpha_t^2 \Cxi\\
        &\leq  (1 - \kappa\alpha_t/2)\|\Delta_t\|^2 + 2\alpha_t \langle \Delta_t, \xi_t^{(0)} \rangle + 2\alpha_t \langle \Delta_t, \xi_t^{(1)} \rangle  + 2\alpha_t^2\Cxi\eqsp,
    \end{align}
where the last step uses \Cref{assum:steps}.
Unrolling the recursion up to \(t=0\), we get
\begin{align}
    \|\Delta_{t+1}\|^2 &\leq \gprod{0}{t}\|\Delta_0\|^2 + 2\sum_{j=0}^t\alpha_j\gprod{j+1}{t}\big\langle \Delta_j, \xi_j^{(0)}\rangle + 2\sum_{j=0}^t\alpha_j\gprod{j+1}{t}\big\langle \Delta_j, \xi_j^{(1)}\rangle + 2\Cxi\sum_{j=0}^t\alpha_j^2\gprod{j+1}{t}\\
    &\leq \gprod{0}{t}\|\Delta_0\|^2 + 2\sum_{j=0}^t\alpha_j\gprod{j+1}{t}\big\langle \Delta_j, \xi_j^{(0)}\rangle + 2\sum_{j=0}^t\alpha_j\gprod{j+1}{t}\big\langle \Delta_j, \xi_j^{(1)}\rangle + \frac{8\Cxi\alpha_t}{\kappa}\eqsp,
\end{align}
where the last inequality follows from
\Cref{lem:rate_of_convergence} and the
definition of \(\gprod{m}{n}\) in \eqref{eq:gprod_definition}. Finally, we use \Cref{lem:P_alpha_ineq} and obtain \(\alpha_0\gprod{1}{t}\leq \alpha_t\), and hence \(\gprod{0}{t}\|\Delta_0\|^2\leq \alpha_t\alpha_0^{-1}\|\Delta_0\|^2\). Substituting into the above bound yields the statement of the lemma.
\end{proof}
Next, \Cref{lem:xi_0_bound} and \Cref{lem:xi_1_bound} allow us to close the
estimate in \Cref{lem:square_error_rec} in terms of \(\moment{j}{p}\).
\begin{lemma}
\label{lem:xi_0_bound}
Assume that \Cref{assum:regularity}--\Cref{assum:features} hold. Then, for all \(t\geq 0\)
    \begin{align}
\PE^{2/p}\Big[\Big\|\sum_{j=0}^t\alpha_j\gprod{j+1}{t}\big\langle\Delta_j, \xi_j^{(0)}\rangle\Big\|^{p}\Big] \lesssim  p^2\taumix^2\Cxi\sum_{j=0}^t \alpha_j^2\gprod{j+1}{t}^2\left(\sqrt{\moment{j}{p}} + \moment{j}{p}\right)\eqsp.
    \end{align}
\end{lemma}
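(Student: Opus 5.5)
We treat $S_t := \sum_{j=0}^t\alpha_j\gprod{j+1}{t}\langle\Delta_j,\xi_j^{(0)}\rangle$, for fixed $t$, as the terminal value of a martingale in $j$. Let $\mathcal F_j=\sigma(z_0,\dots,z_j)$. Then $\theta_j$ and $\Delta_j$ are $\mathcal F_{j-1}$-measurable, since $\theta_j$ is built from $z_0,\dots,z_{j-1}$. The same holds for the deterministic functions $V^\lambda_{\theta_j}-V^\lambda_{\theta^\star}$ and $Q_{\theta_j}-Q_{\theta^\star}=\Phi\Delta_j$.

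Each bracket in \eqref{eq:xi_1_2_definitions} has the form $\pois{f}{}(z_j)-\totMKQ\pois{f}{}(z_{j-1})$. By the Markov property, $\PE[\pois{f}{}(z_j)\mid\mathcal F_{j-1}]=\totMKQ\pois{f}{}(z_{j-1})$, so each bracket is a martingale increment. It multiplies an $\mathcal F_{j-1}$-measurable vector, and the deterministic weights $\alpha_j\gprod{j+1}{t}$ do not depend on $j$'s filtration. Hence $\{\alpha_j\gprod{j+1}{t}\langle\Delta_j,\xi_j^{(0)}\rangle\}_{j\le t}$ is a martingale difference sequence. For $j=0$, either the convention $z_{-1}\sim\mu$ or a direct treatment of the first term works, since $z_0\sim\mu$.

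Next we apply Burkholder's inequality in the form $\PE^{1/p}|S_t|^p\lesssim p\,\PE^{1/p}\bigl[(\sum_j d_j^2)^{p/2}\bigr]$ with $d_j=\alpha_j\gprod{j+1}{t}\langle\Delta_j,\xi_j^{(0)}\rangle$; a Rosenthal/Pinelis-type version with constant $p$ is available (e.g.\ \cite{osekowski}). Squaring and using Minkowski's inequality in $L^{p/2}$ gives
\[
\PE^{2/p}|S_t|^p\lesssim p^2\sum_{j=0}^t\alpha_j^2\gprod{j+1}{t}^2\,\PE^{2/p}\bigl[|\langle\Delta_j,\xi_j^{(0)}\rangle|^p\bigr].
\]
This reduces the claim to a per-term bound of the form $\PE^{2/p}[|\langle\Delta_j,\xi_j^{(0)}\rangle|^p]\lesssim\taumix^2\Cxi(\sqrt{\moment{j}{p}}+\moment{j}{p})$.

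For the per-term bound we use a pointwise estimate on $\|\xi_j^{(0)}\|$. The function $\beps=F(\theta^\star,\cdot)$ is centered and bounded by $\eps_{\max}$ (\Cref{lem:xi_t_bound}), so \Cref{lem:poisson_transform_uniform_bound} gives $\|\pois{\varepsilon}{}\|\le\frac83\eps_{\max}\taumix$. Since $\totMKQ$ is a Markov averaging operator, the same bound holds for $\totMKQ\pois{\varepsilon}{}$, and the first bracket is therefore at most $\frac{16}{3}\eps_{\max}\taumix$.

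For the matrix terms we do not bound the matrices $\pois{\bMKQ}{}$ and $\pois{\Lam}{}$ in an ambient norm, which could bring in $S,A$. Instead we apply the Poisson representation to the vector-valued functions $z\mapsto\Phi^\top(\bMKQ(z)-D_\mu\MKQ)v$ and $z\mapsto\Phi^\top(\Lam(z)-D_\mu)\Phi\Delta$, with $v=V^\lambda_{\theta_j}-V^\lambda_{\theta^\star}$ and $\Delta=\Delta_j$ frozen. Linearity of \eqref{eq:poisson_solution} shows that $\Phi^\top\pois{\bMKQ}{}v$ is exactly the Poisson solution of the first of these functions. That function is bounded by $2\|v\|_\infty\le2\|\Delta_j\|$, using $\|\phi\|\le1$ and the 1-Lipschitz property of the soft value map in sup-norm from the proof of \Cref{lem:F_lipshitz}. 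Similarly, the $\Lam$ term is bounded by $2\|\Delta_j\|$. \Cref{lem:poisson_transform_uniform_bound} then yields
\[
\|\xi_j^{(0)}\|\lesssim\taumix\bigl(\eps_{\max}+(1+\gamma)\|\Delta_j\|\bigr)\lesssim\taumix\sqrt{\Cxi}\,(1+\|\Delta_j\|).
\]
This pointwise bound is the step I expect to be the main obstacle: one must check that conditioning on the frozen $\mathcal F_{j-1}$-measurable vectors commutes with $\totMKQ$, and that no cardinality factors enter.

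Finally, $|\langle\Delta_j,\xi_j^{(0)}\rangle|\lesssim\taumix\sqrt{\Cxi}(\|\Delta_j\|+\|\Delta_j\|^2)$. By Minkowski's inequality,
\[
\PE^{2/p}\bigl[(\|\Delta_j\|+\|\Delta_j\|^2)^p\bigr]\le 2\bigl(\PE^{2/p}\|\Delta_j\|^p+\PE^{2/p}\|\Delta_j\|^{2p}\bigr).
\]
By Lyapunov's inequality, $\PE^{2/p}\|\Delta_j\|^p\le\PE^{1/p}\|\Delta_j\|^{2p}=\sqrt{\moment{j}{p}}$, while $\PE^{2/p}\|\Delta_j\|^{2p}=\moment{j}{p}$. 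Substituting into the Burkholder bound gives the claimed estimate with prefactor $p^2\taumix^2\Cxi$.
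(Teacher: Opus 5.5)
Your proposal is correct and follows essentially the same route as the paper. Both arguments use the martingale-difference structure of $\xi_j^{(0)}$ with $\mathcal F_{j-1}$-measurable $\Delta_j$, then Burkholder plus Minkowski in $L^{p/2}$, then the pointwise bound $\|\xi_j^{(0)}\|\lesssim\taumix(\eps_{\max}+\|\Delta_j\|)$ obtained from linearity of the Poisson transform and \Cref{lem:poisson_transform_uniform_bound}, and finally Minkowski and Lyapunov. The step you flag as the main obstacle is harmless: the Poisson bound holds for every fixed vector $v$, hence pointwise for each realization of the $\mathcal F_{j-1}$-measurable $v_j$, and conditioning enters only in the martingale property, which you already verified.
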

\begin{proof}
 Denote by \( \mathcal{F}_j = \sigma( z_0, \ldots, z_j) \)  
the sigma-algebra generated by all observations up to time \( j \). By construction, \(\{\xi_t^{(0)}\}_{t\geq0}\) forms a martingale difference sequence with respect to \(\{\mathcal{F}_t\}_{t\geq0}\). Indeed, \(\xi_t^{(0)}\), defined in \eqref{eq:xi_1_2_definitions}, has the form
\begin{align}
    \xi_{t}^{(0)} = (\pois{\varepsilon}{t} - \totMKQ \pois{\varepsilon}{t-1}) + \gamma\Phi^\top(\pois{\bMKQ}{t} - \totMKQ \pois{\bMKQ}{t-1})(V_{\theta_t}^{\lambda}-V_{\theta^\star}^{\lambda}) - \Phi^\top(\pois{\Lam}{t} - \totMKQ \pois{\Lam}{t-1})(Q_{\theta_t}-Q_{\theta^\star})\eqsp.
\end{align}
The approximation errors \(Q_{\theta_t} - Q_{\theta^\star}\) and \(V_{\theta_t}^{\lambda} - V_{\theta^\star}^{\lambda}\) are \(\mathcal{F}_{t-1}\)-measurable, while the prefactors form martingale differences induced by the Markov chain:
\begin{align}
    \PE [\pois{\varepsilon}{t} \mid \mathcal{F}_{t-1}] = \totMKQ \pois{\varepsilon}{t-1}\eqsp, \quad 
    \PE [\pois{\bMKQ}{t} \mid \mathcal{F}_{t-1}] = \totMKQ \pois{\bMKQ}{t-1}\eqsp,
    \quad
    \PE [\pois{\Lam}{t} \mid \mathcal{F}_{t-1}] = \totMKQ \pois{\Lam}{t-1}\eqsp.
\end{align}
Hence,
\(
\PE[\xi_t^{(0)}\mid\mathcal{F}_{t-1}] =0
\). Burkholder’s inequality \cite[Theorem 8.1]{osekowski} gives
    \begin{align} &\PE^{2/p}\Big[\Big\|\sum_{j=0}^t\alpha_j\gprod{j+1}{t}\big\langle \Delta_j, \xi_j^{(0)}\rangle\Big\|^{p}\Big]  \\
    &\quad\lesssim p^2\sum_{j=0}^t \PE^{2/p}[\|\alpha_j\gprod{j+1}{t}\langle \Delta_j, \xi_j^{(0)}\rangle\|^{p}]\\
        &\quad\lesssim p^2\sum_{j=0}^t           \alpha_j^2\gprod{j+1}{t}^2\PE^{2/p}[\|\Delta_j\|^p\|\xi_j^{(0)}\|^p]\label{eq:mideq}\eqsp.
    \end{align}
Thus, it remains to bound \(\|\xi_j^{(0)}\|\). By \Cref{lem:xi_t_bound} and \Cref{lem:poisson_transform_uniform_bound}, we obtain
\begin{align}
    \|(\pois{\varepsilon}{j} - \totMKQ \pois{\varepsilon}{j-1})\|
    \lesssim
    \taumix\eps_{\max}\eqsp.
\end{align}
Further, \Cref{lem:poisson_transform_uniform_bound} again yields
\begin{align}
\|\Phi^\top(\pois{\Lam}{j} - \totMKQ \pois{\Lam}{j-1})(Q_{\theta_j}-Q_{\theta^\star})\|
&\lesssim
\|\Phi^\top(\pois{\Lam}{j} - \totMKQ \pois{\Lam}{j-1})\Phi\|
\|\Delta_j\|\lesssim
\taumix \|\Delta_j\|\eqsp,
\end{align}
where we used the linearity of the Poisson transform under multiplication by deterministic matrices together with the bound
\(
\|\Phi^\top \Lam \Phi\|\le 1
\). Finally, observe that
\begin{align}
\big\|\Phi^\top P_j (V_{\theta_j}^{\lambda}-V_{\theta^\star}^{\lambda})\big\|_2
&=
\big\|\Phi^\top e_{s_j,a_j} e_{s_{j+1}}^\top
(V_{\theta_j}^{\lambda}-V_{\theta^\star}^{\lambda})\big\|_2 \\
&=
\|\phi(s_j,a_j)\|_2
\big|V_{\theta_j}^{\lambda}(s_{j+1})
-
V_{\theta^\star}^{\lambda}(s_{j+1})\big| \\
&\leq
\big|V_{\theta_j}^{\lambda}(s_{j+1})
-
V_{\theta^\star}^{\lambda}(s_{j+1})\big| \\
&\leq
\sup_{a\in\cA}
\big|\phi(s_{j+1},a)^\top(\theta_j-\theta^\star)\big| \\
&\leq
\|\theta_j-\theta^\star\|_2\eqsp.
\end{align}
Together with the linearity of the Poisson transform and \Cref{lem:poisson_transform_uniform_bound}, this implies
\begin{align}
\|\Phi^\top(\pois{\bMKQ}{j} - \totMKQ \pois{\bMKQ}{j-1})(V_{\theta_j}^{\lambda}-V_{\theta^\star}^{\lambda})\|
\lesssim
\taumix\|\Delta_j\|\eqsp.
\end{align}
Hence, we obtain the uniform bound
\begin{align}
\label{eq:xi_j^0_bound}
    \|\xi_j^{(0)}\|\lesssim \taumix (\eps_{\max} + \|\Delta_j\|).
\end{align}
Using \eqref{eq:xi_j^0_bound} together with \(\eps_{\max}\le\sqrt{\Cxi}\), we obtain \(\|\xi_j^{(0)}\|\|\Delta_j\|\lesssim \taumix\sqrt{\Cxi}(\|\Delta_j\| + \|\Delta_j\|^2)\).
Consequently,
\begin{align}
\label{eq:lyapunov_application}
    \PE^{2/p}[\|\Delta_j\|^p\|\xi_j^{(0)}\|^p]&\lesssim \Cxi\taumix^2\PE^{2/p}[(\|\Delta_j\| + \|\Delta_j\|^2)^p]\\
    &\lesssim  2\Cxi\taumix^2\big(\PE^{2/p}[\|\Delta_j\|^p] +\PE^{2/p}[\|\Delta_j\|^{2p}]\big)\\
    &\overset{(a)}{\lesssim} \Cxi\taumix^2\PE^{1/p}[\|\Delta_j\|^{2p}] + \Cxi\taumix^2\PE^{2/p}[\|\Delta_j\|^{2p}]\\
    &= \Cxi \taumix^2\big(\sqrt{\moment{j}{p}} + \moment{j}{p}\big)\eqsp,
\end{align}
where \((a)\) follows from the Lyapunov inequality.
Thus we can continue \eqref{eq:mideq} as 
\begin{align}
\PE^{2/p}\Big[\Big\|\sum_{j=0}^t\alpha_j\gprod{j+1}{t}\big\langle \Delta_j, \xi_j^{(0)}\rangle\Big\|^{p}\Big]  \lesssim p^2\Cxi\taumix^2\sum_{j=0}^t \alpha_j^2\gprod{j+1}{t}^2\left(\sqrt{\moment{j}{p}} + \moment{j}{p}\right)\eqsp.
\end{align}
\end{proof}
\Cref{lem:discret_ibp} provides a standard discrete integration-by-parts identity.
We state it for completeness.
\begin{lemma}
\label{lem:discret_ibp}
Let \((v_j)_{j=0}^t\) and \((A_j)_{j=0}^t\) be sequences of vectors, and set \(A_{-1}=0\). Then
\begin{align}
\sum_{j=0}^t
\big\langle v_j, A_{j-1} - A_j \big\rangle
&=
-\big\langle v_t, A_t \big\rangle
+
\sum_{j=0}^{t-1}
\big\langle v_{j+1} - v_j, A_j \big\rangle .
\end{align}
\end{lemma}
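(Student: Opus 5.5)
The identity is the discrete summation-by-parts (Abel) formula. I would prove it by expanding the left-hand side and reindexing. Using bilinearity of the inner product, split
\[
\sum_{j=0}^t \langle v_j, A_{j-1}-A_j\rangle = \sum_{j=0}^t \langle v_j, A_{j-1}\rangle - \sum_{j=0}^t \langle v_j, A_j\rangle .
\]
In the first sum, the $j=0$ term vanishes because $A_{-1}=0$. I would then shift the index $j\mapsto j+1$, turning the first sum into $\sum_{j=0}^{t-1}\langle v_{j+1},A_j\rangle$.

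Next, I would peel off the $j=t$ term of the second sum, writing it as $\langle v_t,A_t\rangle+\sum_{j=0}^{t-1}\langle v_j,A_j\rangle$. Both remaining sums now run over $j=0,\dots,t-1$ and pair against the same $A_j$. Combining them gives $\sum_{j=0}^{t-1}\langle v_{j+1}-v_j,A_j\rangle-\langle v_t,A_t\rangle$, which is the claimed right-hand side.

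There is no real obstacle; the only points needing care are the boundary terms. The convention $A_{-1}=0$ removes the lower boundary term, and the case $t=0$ must be consistent. For $t=0$ the right-hand sum is empty, and both sides equal $-\langle v_0,A_0\rangle$, so the identity holds there as well.

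In the paper, the lemma will be applied with $v_j=\alpha_j\gprod{j+1}{t}\Delta_j$ and $A_j$ a Poisson-transform term such as $\totMKQ\pois{\varepsilon}{j}$. This converts the Markov remainder $\xi^{(1)}$ into increments of $v_j$, which are of order $\alpha_j^2$, plus a single boundary term. That application belongs to \Cref{lem:xi_1_bound}, not to this identity.
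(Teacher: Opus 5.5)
Your proof is correct: splitting the sum, using $A_{-1}=0$ to remove the $j=0$ term, shifting the index, and separating out the $j=t$ boundary term is the standard Abel summation argument, and your $t=0$ check is right. The paper states this identity without proof, calling it standard, so your argument supplies exactly the omitted proof.
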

\begin{lemma}
\label{lem:xi_1_bound}
Assume that \Cref{assum:regularity}--\Cref{assum:features} hold. Then, for all \(t\geq 0\)
\begin{align}
\PE^{2/p}\Big[\Big\|\sum_{j=0}^t\alpha_j\gprod{j+1}{t}\big\langle \Delta_j, \xi_j^{(1)}\rangle\Big\|^{p}\Big] &\lesssim  \Com\taumix^2  \Cxi^2\sum_{j=0}^{t}\alpha_j^2\gprod{j+1}{t}^2 \Big( \sqrt{\moment{j}{p}} + {\moment{j}{p}}\Big) \\
&+ \frac{\Cxi^2\taumix^2 \alpha_t^2}{\kappa^2}\eqsp,
\end{align}
where \(\Com = \sum_{j=0}^{\infty}\alpha_j^2\).
\end{lemma}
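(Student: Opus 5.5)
The plan is to exploit the telescoping structure of $\xi_j^{(1)}$ through the summation-by-parts identity of \Cref{lem:discret_ibp}. Write $\xi_j^{(1)} = A_{j-1} - A_j$ with
\[
A_j := \totMKQ\pois{\varepsilon}{j} + \gamma\Phi^\top\totMKQ\pois{\bMKQ}{j}\bigl(V^\lambda_{\theta_j}-V^\lambda_{\theta^\star}\bigr) - \Phi^\top\totMKQ\pois{\Lam}{j}\bigl(Q_{\theta_j}-Q_{\theta^\star}\bigr).
\]
This is not quite exact, because the $\theta$-dependent factors in $\xi_j^{(1)}$ are evaluated at $\theta_j$ for both the $j-1$ and the $j$ Poisson terms. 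I would therefore first write
\[
\xi_j^{(1)} = (A_{j-1}-A_j) + \rho_j,
\]
where $\rho_j$ collects the terms $\gamma\Phi^\top\totMKQ\pois{\bMKQ}{j-1}\bigl(V^\lambda_{\theta_j}-V^\lambda_{\theta_{j-1}}\bigr)$ and the analogous $\Lam$-term. By \Cref{lem:poisson_transform_uniform_bound} and the $1$-Lipschitz property of $V^\lambda$ in $\theta$ used in \Cref{lem:xi_0_bound}, we get $\|\rho_j\|\lesssim \taumix\|\theta_j-\theta_{j-1}\| = \taumix\alpha_{j-1}\|F(\theta_{j-1},z_{j-1})\| \lesssim \taumix\alpha_{j-1}\sqrt{\Cxi}(1+\|\Delta_{j-1}\|)$. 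The same argument gives $\|A_j\|\lesssim\taumix(\eps_{\max}+\|\Delta_j\|)$.

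Next, apply \Cref{lem:discret_ibp} with $v_j=\alpha_j\gprod{j+1}{t}\Delta_j$. This produces three contributions.
\begin{itemize}
\item[(i)] The boundary term $-\alpha_t\langle\Delta_t,A_t\rangle$.
\item[(ii)] The sum $\sum_{j<t}\langle v_{j+1}-v_j, A_j\rangle$.
\item[(iii)] The sum $\sum_j \alpha_j\gprod{j+1}{t}\langle\Delta_j,\rho_j\rangle$.
\end{itemize}
For (ii), split
\[
v_{j+1}-v_j = (\alpha_{j+1}\gprod{j+2}{t}-\alpha_j\gprod{j+1}{t})\Delta_{j+1} + \alpha_j\gprod{j+1}{t}(\Delta_{j+1}-\Delta_j).
\]
Under \Cref{assum:steps}, the coefficient difference is $\lesssim \alpha_j^2\gprod{j+1}{t}$, since $\alpha_j-\alpha_{j+1}\lesssim\alpha_j^2$ and $\kappa\alpha_{j+1}/4\le\alpha_{j+1}$. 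The increment satisfies $\|\Delta_{j+1}-\Delta_j\|\le\alpha_j\|F(\theta_j,z_j)\|\lesssim\alpha_j\sqrt{\Cxi}(1+\|\Delta_j\|)$. Hence every summand in (ii) and (iii) is bounded pathwise by
\[
\taumix\Cxi\,\alpha_j^2\gprod{j+1}{t}\bigl(1+\|\Delta_j\|+\|\Delta_{j+1}\|+\|\Delta_{j-1}\|\bigr)^2 .
\]
The index shifts are harmless up to constants, using $\gprod{j+1}{t}\asymp\gprod{j}{t}$ and $\alpha_{j-1}\asymp\alpha_j$.

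The moments then follow from Minkowski's inequality in $L^{p/2}$, since the target quantity is an $\PE^{2/p}[|\cdot|^p]$-norm, i.e. a squared $L^p$ norm. There is no martingale structure here, so Burkholder is unavailable and a Cauchy--Schwarz step on the sum is needed. I would write $\sum_j\alpha_j^2\gprod{j+1}{t}X_j$, take $L^p$ norms, and square to get
\[
\Bigl(\sum_j\alpha_j^2\gprod{j+1}{t}\|X_j\|_{L^p}\Bigr)^2\le\Bigl(\sum_j\alpha_j^2\Bigr)\sum_j\alpha_j^2\gprod{j+1}{t}^2\|X_j\|_{L^p}^2 .
\]
This is exactly where the factor $\Com=\sum_j\alpha_j^2$ in the statement appears. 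With $X_j\lesssim\taumix\Cxi(1+\|\Delta_j\|+\|\Delta_j\|^2)$, we have $\|X_j\|_{L^p}^2\lesssim\taumix^2\Cxi^2\bigl(1+\sqrt{\moment{j}{p}}+\moment{j}{p}\bigr)$. The moment parts give the main term of the lemma. The pure constant part gives $\Com\taumix^2\Cxi^2\sum_j\alpha_j^2\gprod{j+1}{t}^2$, which I would instead bound directly, before Cauchy--Schwarz, by $\bigl(\taumix\Cxi\sum_j\alpha_j^2\gprod{j+1}{t}\bigr)^2\lesssim\taumix^2\Cxi^2\alpha_t^2/\kappa^2$ via \Cref{lem:rate_of_convergence}. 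That produces the last term of the statement.

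The boundary term (i) has squared $L^p$ norm $\lesssim\alpha_t^2\taumix^2\Cxi(\sqrt{\moment{t}{p}}+\moment{t}{p}) + \alpha_t^2\taumix^2\Cxi$. The first part is the $j=t$ summand of the main sum, using $\Com\Cxi\gtrsim1$ or absorbing constants. The second part fits into $\Cxi^2\taumix^2\alpha_t^2/\kappa^2$, since $\kappa\le1$ and $\Cxi\ge1$.

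The main obstacle will be the bookkeeping of the index shifts between $\Delta_{j-1}$, $\Delta_j$ and $\Delta_{j+1}$. The cleanest fix is to re-express $\Delta_{j+1}$ through $\Delta_j$ plus an increment of size $\alpha_j\sqrt{\Cxi}(1+\|\Delta_j\|)$, so that only $\moment{j}{p}$ appears. A secondary point is ensuring the step-size difference bound $|\alpha_{j+1}\gprod{j+2}{t}-\alpha_j\gprod{j+1}{t}|\lesssim\alpha_j^2\gprod{j+1}{t}$ holds under the conditions on $k_0$ in \Cref{assum:steps}.
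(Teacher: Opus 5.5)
Your proposal is correct and follows essentially the same route as the paper: the same $A_j=\totMKQ\pois{\xi}{j}$ with correction terms for the $\theta_j$ versus $\theta_{j-1}$ mismatch (the paper's $T_4,T_5$), summation by parts via \Cref{lem:discret_ibp}, and the $\alpha_j^2$ bounds on the coefficient differences and the increments. It then uses Minkowski plus Cauchy--Schwarz to produce $\Com$, and \Cref{lem:rate_of_convergence} on the constant part to get $\Cxi^2\taumix^2\alpha_t^2/\kappa^2$. The only cosmetic difference is that you pair the coefficient difference with $\Delta_{j+1}$ instead of $\Delta_j$, which creates an extra index shift that the paper's split $v_{j+1}-v_j=(\alpha_{j+1}\gprod{j+2}{t}-\alpha_j\gprod{j+1}{t})\Delta_j+\alpha_{j+1}\gprod{j+2}{t}(\Delta_{j+1}-\Delta_j)$ avoids.
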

\begin{proof}
We first note that under \Cref{assum:steps}, we have \(\omega\in(1/2,1)\). Hence,
\(\Com<\infty\), and this constant depends only on \(\omega\) and \(c_0\). For simplicity, introduce \(\pois{\xi}{t}\) defined by
\begin{align}
    \pois{\xi}{t} = \pois{\varepsilon}{t}  + \gamma\Phi^\top\pois{\bMKQ}{t}(V_{\theta_t}^{\lambda}-V_{\theta^\star}^{\lambda}) -\Phi^\top  \pois{\Lam}{t}(Q_{\theta_t}-Q_{\theta^\star})\eqsp,
\end{align}
By the same argument as in the proof of \Cref{lem:xi_0_bound}, which is used to control \(\|\xi_j^{(0)}\|\), we obtain
\begin{align}
\label{eq:Pgxi_bound}
    \|\pois{\xi}{j}\|\lesssim\taumix(\sqrt{\Cxi} + \|\Delta_j\|)\eqsp, \quad \|\totMKQ \pois{\xi}{j}\|\lesssim\taumix(\sqrt{\Cxi} + \|\Delta_j\|)\eqsp.
\end{align}
We first recall definition of \(\xi_t^{(1)}\) given in \eqref{eq:xi_1_2_definitions}:
\begin{align}
    \xi_{t}^{(1)} = (\totMKQ \pois{\varepsilon}{t-1} - \totMKQ \pois{\varepsilon}{t}) + \gamma\Phi^\top(\totMKQ \pois{\bMKQ}{t-1} - \totMKQ \pois{\bMKQ}{t})(V_{\theta_t}^{\lambda}- V_{\theta^\star}^{\lambda}) -\Phi^\top(\totMKQ \pois{\Lam}{t-1} - \totMKQ \pois{\Lam}{t})(Q_{\theta_t} - Q_{\theta^\star})\eqsp.
\end{align}
In order to directly apply \Cref{lem:discret_ibp}, we decompose \(\xi_t^{(1)}\) as follows:
\begin{align}
    \xi_t^{(1)} = \big(\totMKQ\pois{\xi}{t-1} - \totMKQ\pois{\xi}{t} \big) + \gamma \Phi^\top\totMKQ \pois{\bMKQ}{t-1}(V_{\theta_{t}}^{\lambda} - V_{\theta_{t-1}}^{\lambda}) - \Phi^\top\totMKQ\pois{\Lam}{t-1}(Q_{\theta_{t}} - Q_{\theta_{t-1}}) \eqsp.
\end{align}
By \Cref{lem:discret_ibp} with \(u_j = \alpha_j \gprod{j+1}{t}\Delta_j\) and \(
A_j = \totMKQ \pois{\xi}{j}\),
\begin{align}
 \sum_{j=0}^t
\alpha_j \gprod{j+1}{t}
\big\langle \Delta_j, \totMKQ \pois{\xi}{j-1} - \totMKQ \pois{\xi}{j}\big\rangle &=
-\alpha_t
\big\langle \Delta_t, \totMKQ \pois{\xi}{t} \big\rangle +
\sum_{j=0}^{t-1}
\left(
\alpha_{j+1}\gprod{j+2}{t}
-
\alpha_j \gprod{j+1}{t}
\right)
\big\langle \Delta_j, \totMKQ \pois{\xi}{j} \big\rangle \\
&+
\sum_{j=0}^{t-1}
\alpha_{j+1}\gprod{j+2}{t}
\big\langle
\Delta_{j+1}-\Delta_j,
\totMKQ \pois{\xi}{j}
\big\rangle .
\end{align}
We now analyze the coefficient term:
\begin{align}
    |\alpha_{j+1}\gprod{j+2}{t} - \alpha_j\gprod{j+1}{t}|
    &= |\alpha_{j+1} - \alpha_j + \alpha_j\alpha_{j+1}\kappa/4|\gprod{j+2}{t}\lesssim \alpha_j^2\gprod{j+2}{t}\eqsp,
\end{align}
where the last inequality follows from \Cref{lem:alpha_delta}, which gives
\(|\alpha_j-\alpha_{j+1}|\lesssim \alpha_j^2\).
Then, by Minkowski's inequality
\begin{align}
\PE^{2/p}\Big[\|\sum_{j=0}^t
\alpha_j \gprod{j+1}{t}
\big\langle \Delta_j, \xi_j^{(1)} \big\rangle\|^p\Big] &\lesssim 
{\alpha_t^2
\PE^{2/p}[\|\big\langle \Delta_t, \totMKQ \pois{\xi}{t} \big\rangle \|^p]}\tag*{\(( T_1)\)}\\
&+
{\PE^{2/p}\Big[\|\sum_{j=0}^{t-1}
\alpha_j^2\gprod{j+1}{t}
\big\langle \Delta_j, \totMKQ \pois{\xi}{j} \big\rangle\|^p\Big] }\tag*{\((T_2)\)}\\
&+
{\PE^{2/p}\Big[\|\sum_{j=0}^{t-1}
\alpha_{j+1}\gprod{j+2}{t}
\big\langle
\Delta_{j+1}-\Delta_j,
\totMKQ \pois{\xi}{j}
\big\rangle\|^p\Big]} \tag*{\((T_3)\)}\\
&+ {\PE^{2/p}\Big[\|\sum_{j=0}^{t-1}
\alpha_{j}\gprod{j+1}{t}
\big\langle
\Delta_j,
\Phi^\top\totMKQ\pois{\Lam}{j-1}(Q_{\theta_{j}} - Q_{\theta_{j-1}})
\big\rangle\|^p\Big]} \tag*{\(( T_4)\)}\\
&+ {\PE^{2/p}\Big[\|\sum_{j=0}^{t-1}
\alpha_{j}\gprod{j+1}{t}
\big\langle
\Delta_j,
\gamma \Phi^\top\totMKQ \pois{\bMKQ}{j-1}(V_{\theta_{j}} - V_{\theta_{j-1}})
\big\rangle\|^p\Big]}\tag*{\((T_5)\)}\eqsp.
\end{align}
We now bound each term separately.
\paragraph{Bound for \(T_1\).} By \eqref{eq:Pgxi_bound}, we have
\(
\| \totMKQ \pois{\xi}{t} \|\|\Delta_t\|
\lesssim
\taumix\sqrt{\Cxi}(\|\Delta_t\| + \|\Delta_t\|^2)
\).
Using the same argument as in \eqref{eq:lyapunov_application}, we obtain
\begin{align}
    \alpha_t^2
\PE^{2/p}[\|\big\langle \Delta_t, \totMKQ \pois{\xi}{t} \big\rangle \|^p] &\lesssim \alpha_t^2\Cxi \taumix^2\big(\sqrt{\moment{t}{p}} + \moment{t}{p}\big)\eqsp.
\end{align}
The contribution of \(T_1\) is grouped with the summation term arising from \(T_2\) in the final bound.
\paragraph{Bound for \(T_2\).}
Observe that, by \eqref{eq:Pgxi_bound},
\[
\big|
\big\langle \Delta_j, \totMKQ \pois{\xi}{j} \big\rangle
\big|
\lesssim
\taumix \sqrt{\Cxi}(\|\Delta_j\| + \|\Delta_j\|^2)\eqsp.
\]
Substituting the above bound into \(T_2\) and applying Minkowski's inequality, we obtain
\begin{align}
    &\PE^{2/p}\Big[\|\sum_{j=0}^{t-1}
\alpha_j^2\gprod{j+1}{t}
\big\langle \Delta_j, \totMKQ \pois{\xi}{j} \big\rangle\|^p\Big] \lesssim \Cxi\taumix^2 \Big(\sum_{j=0}^{t-1}\alpha_j^2\gprod{j+1}{t}( \PE^{1/p}[\|\Delta_j\|^p] + \PE^{1/p}[\|\Delta_j\|^{2p}])\Big)^2\\
&\qquad \overset{(a)}{\lesssim}  \Cxi\taumix^2 \Big(\sum_{j=0}^{t-1}\alpha_j^2\gprod{j+1}{t}(  \PE^{1/(2p)}[\|\Delta_j\|^{2p}] + \PE^{1/p}[\|\Delta_j\|^{2p}])\Big)^2\\
&\qquad \overset{(b)}{\lesssim} \Cxi\taumix^2 \Big(\sum_{j=0}^t \alpha_j^2\Big)\Big(\sum_{j=0}^{t-1}\alpha_j^2\gprod{j+1}{t}^2 \PE^{1/p}[\|\Delta_j\|^{2p}] + \sum_{j=0}^{t-1}\alpha_j^2\gprod{j+1}{t}^2 \PE^{2/p}[\|\Delta_j\|^{2p}]\Big)\\
&\qquad \overset{(c)}{\lesssim} \Cxi\Com\taumix^2  \sum_{j=0}^{t-1}\alpha_j^2\gprod{j+1}{t}^2 \Big( \sqrt{\moment{j}{p}} + {\moment{j}{p}}\Big)\eqsp.
\end{align}
Here, \((a)\) follows from Lyapunov's inequality, which gives
\(
\PE^{1/p}[\|\Delta_j\|^p]
\le
\PE^{1/(2p)}[\|\Delta_j\|^{2p}]
\),
step \((b)\) follows from the Cauchy--Schwarz inequality. Finally, \((c)\)
uses the definitions of \(\Com\) and \(\moment{j}{p}\).
\paragraph{Bound for \(T_3\).}
Observe that, by \Cref{lem:F_lipshitz} and \Cref{lem:xi_t_bound}
\begin{align}
\label{eq:Delta_Delt}
    \|\Delta_{j+1} - \Delta_j\| &=\|\theta_{j+1} - \theta_j\| = \alpha_j\|F(\theta_j, z_j)\|\\
    &\leq \alpha_j\|F(\theta_j, z_j) - F(\theta^\star, z_j)\|  + \alpha_j\|F(\theta^\star, z_j)\|\\
    &\leq 3\alpha_j\|\Delta_j\| + \alpha_j\eps_{\max}\eqsp.
\end{align}
Hence,
\begin{align}
     \|\Delta_{j+1} - \Delta_j\|\totMKQ \pois{\xi}{j} &\|\leq \alpha_j\sqrt{\Cxi}\taumix(3\|\Delta_j\| + \eps_{\max})(1 + \|\Delta_j\|)\\
     &= \alpha_j\sqrt{\Cxi}\taumix(\eps_{\max} + (3+\eps_{\max})\|\Delta_j\| + 3\|\Delta_j\|^2)\\
     &\lesssim \alpha_j{\Cxi}\taumix (1+\|\Delta_j\| + \|\Delta_j\|^2)\eqsp.
\end{align}
Substituting the above bound into \(T_3\) and applying Minkowski's inequality we obtain
\begin{align}
    &\PE^{2/p}\Big[\|\sum_{j=0}^{t-1}
\alpha_{j+1}\gprod{j+2}{t}
\big\langle
\Delta_{j+1}-\Delta_j,
\totMKQ \pois{\xi}{j}
\big\rangle\|^p\Big]\\
&\qquad \lesssim \Cxi^2\taumix^2 \Big(\sum_{j=0}^{t-1}\alpha_j^2\gprod{j+1}{t}\Big(1 + \PE^{1/p}[\|\Delta_j\|^p] + \PE^{1/p}[\|\Delta_j\|^{2p}]\Big)\Big)^2\\
&\qquad \overset{(a)}{\lesssim} \frac{\Cxi^2\taumix^2 \alpha_t^2}{\kappa^2} + \Cxi^2\taumix^2 \Big(\sum_{j=0}^{t-1}\alpha_j^2\gprod{j+1}{t}(  \PE^{1/(2p)}[\|\Delta_j\|^{2p}] + \PE^{1/p}[\|\Delta_j\|^{2p}])\Big)^2\\
&\qquad\lesssim\frac{\Cxi^2\taumix^2 \alpha_t^2}{\kappa^2} +  \Cxi^2\taumix^2 \Big(\sum_{j=0}^t \alpha_j^2\Big)\sum_{j=0}^{t-1}\alpha_j^2\gprod{j+1}{t}^2 \Big(\PE^{1/p}[\|\Delta_j\|^{2p}] +  \PE^{2/p}[\|\Delta_j\|^{2p}]\Big)\\
&\qquad\lesssim \frac{\Cxi^2\taumix^2 \alpha_t^2}{\kappa^2}  + \Com\Cxi^2\taumix^2  \sum_{j=0}^{t-1}\alpha_j^2\gprod{j+1}{t}^2 \Big( \sqrt{\moment{j}{p}} + {\moment{j}{p}}\Big)\eqsp.
\end{align}
where \((a)\) follows from \Cref{lem:rate_of_convergence} and Lyapunov’s inequality.
\paragraph{Bound for \(T_4\).}
Using \eqref{eq:Delta_Delt}, we obtain
\begin{align}
\label{eq:no_words}
\big\langle
\Delta_j,
\Phi^\top\totMKQ\pois{\Lam}{j-1}(Q_{\theta_{j}} - Q_{\theta_{j-1}})
\big\rangle
&=
\big\langle
\Delta_j,
\Phi^\top\totMKQ\pois{\Lam}{j-1}\Phi({\theta_{j}} - {\theta_{j-1}})
\big\rangle\\
&\leq
\|\Delta_j\|
\|\Phi^\top\totMKQ\pois{\Lam}{j-1}\Phi\|
\|{\theta_{j}} - {\theta_{j-1}}\|\\
&\lesssim
\taumix \|\Delta_j\|
(\alpha_j\|\Delta_{j-1}\| + \alpha_j\eps_{\max})\\
&\lesssim
\alpha_j\|\Delta_j\|^2
+
\alpha_j\|\Delta_{j-1}\|^2
+
\alpha_j\eps_{\max}\|\Delta_j\|\eqsp.
\end{align}
Then, following the same lines as in the bound for \(T_3\), we obtain
\begin{align}
\label{eq:bound_bound}
{\PE^{2/p}\Big[\big\|\sum_{j=0}^{t-1}
\alpha_{j}\gprod{j+1}{t}
\big\langle
\Delta_j,
\Phi^\top\totMKQ\pois{\Lam}{j-1}(Q_{\theta_{j}} - Q_{\theta_{j-1}})
\big\rangle\big\|^p\Big]}
\lesssim
\Com\Cxi^2\taumix^2
\sum_{j=0}^{t-1}
\alpha_j^2\gprod{j+1}{t}^2
\Big(
\sqrt{\moment{j}{p}}
+
{\moment{j}{p}}
\Big).
\end{align}
We note that although \eqref{eq:no_words} involves both
\(\|\Delta_j\|^2\) and \(\|\Delta_{j-1}\|^2\), after a suitable reindexing and using the equivalence
\(
\alpha_j\gprod{j+1}{t}
\asymp
\alpha_{j+1}\gprod{j+2}{t},
\)
we still arrive at \eqref{eq:bound_bound}.
\paragraph{Bound for \(T_5\).}
Repeating the same argument as in the bound for \(T_4\), we obtain
\begin{align}
\label{eq:bound_bound_T5}
{\PE^{2/p}\Big[\|\sum_{j=0}^{t-1}
\alpha_{j}\gprod{j+1}{t}
\big\langle
\Delta_j,
\gamma \Phi^\top\totMKQ \pois{\bMKQ}{j-1}(V_{\theta_{j}}^{\lambda} - V_{\theta_{j-1}}^{\lambda})
\big\rangle\|^p\Big]}
\lesssim
\Com\Cxi^2\taumix^2
\sum_{j=0}^{t-1}
\alpha_j^2\gprod{j+1}{t}^2
\Big(
\sqrt{\moment{j}{p}}
+
{\moment{j}{p}}
\Big)\eqsp.
\end{align}
Collecting the estimates for \(T_1,\ldots,T_5\) yields the claimed bound and completes the proof.
\end{proof}
We now restate \Cref{thm:moments} with an explicit value of \(\Cmoment\) and
provide the full proof.
\begin{lemma}
\label{lem:union_recursion}
   Assume that \Cref{assum:regularity}--\Cref{assum:features} hold. Then, for any $t>0$ and $p \ge 2$, it holds that
\[\moment{t+1}{p} \le \Cmoment p^4\alpha_t^2\eqsp,\]
where constants \(C_0,C_1\) and \(C_2\) are defined in \eqref{eq:C_i_definitions} and 
\begin{align}
   \Cmoment \asymp\frac{\|\Delta_0\|^4}{p^4\alpha_0^2}\vee C_1\vee \frac{C_2^2}{\kappa^2}\eqsp.
\end{align}

\end{lemma}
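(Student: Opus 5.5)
We first derive a closed recursion for $\moment{t+1}{p}$ and then close it by strong induction on $t$.

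\textbf{Step 1: the recursion.} Take the $L^{p/2}$-norm of both sides of \Cref{lem:square_error_rec}. Minkowski's inequality, together with $(a+b+c)^2\le 3(a^2+b^2+c^2)$, gives
\[
\moment{t+1}{p}\lesssim \PE^{2/p}\Big[\Big|\sum_{j=0}^t\alpha_j\gprod{j+1}{t}\langle\Delta_j,\xi_j^{(0)}\rangle\Big|^{p}\Big]+\PE^{2/p}\Big[\Big|\sum_{j=0}^t\alpha_j\gprod{j+1}{t}\langle\Delta_j,\xi_j^{(1)}\rangle\Big|^{p}\Big]+\alpha_t^2\Big(\tfrac{8\Cxi}{\kappa}+\tfrac{\|\Delta_0\|^2}{\alpha_0}\Big)^2 .
\]
We bound the first term by \Cref{lem:xi_0_bound} and the second by \Cref{lem:xi_1_bound}, and collect constants as
\[
C_0\asymp\|\Delta_0\|^4,\qquad C_1\asymp \Cxi^2/\kappa^2+\Cxi^2\taumix^2/\kappa^2,\qquad C_2\asymp \taumix^2\Cxi(1+\Com\Cxi).
\]
Using $p\ge 2$ to absorb the $p$-free $\Com$ term into $p^2C_2$, this yields
\[
\moment{t+1}{p}\le C_0(\alpha_t/\alpha_0)^2+C_1\alpha_t^2+p^2C_2\sum_{j=0}^t\alpha_j^2\gprod{j+1}{t}^2\big(\sqrt{\moment{j}{p}}+\moment{j}{p}\big).
\]

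\textbf{Step 2: induction.} The hypothesis is $\moment{j}{p}\le \Cmoment p^4\alpha_j^2$ for all $j\le t$. For the base case $j=0$ we have $\moment{0}{p}=\|\Delta_0\|^4$, so the case holds once $\Cmoment\ge \|\Delta_0\|^4/(p^4\alpha_0^2)$. This is exactly the first entry of the maximum.

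For the inductive step, substitute the hypothesis into the recursion, which gives $\sqrt{\moment{j}{p}}\le p^2\sqrt{\Cmoment}\,\alpha_j$. The sums that appear are $\sum_j\alpha_j^{3}\gprod{j+1}{t}^2$ and $\sum_j\alpha_j^4\gprod{j+1}{t}^2$. By \Cref{lem:rate_of_convergence}, which uses $k_0^{1-\omega}\ge 32/(\kappa c_0)$ so that $\alpha_j/\alpha_{j+1}$ is close enough to $1$ relative to $\kappa\alpha_j/4$, these are bounded by $16\alpha_t^2/\kappa$ and $16\alpha_t^3/\kappa$ respectively. We then require each of the four resulting terms to be at most $p^4\Cmoment\alpha_t^2/4$:
\begin{itemize}
\item[(i)] $C_0(\alpha_t/\alpha_0)^2\le p^4\Cmoment\alpha_t^2/4$ holds by the first entry of $\Cmoment$.
\item[(ii)] $C_1\alpha_t^2\le p^4\Cmoment\alpha_t^2/4$ holds because $\Cmoment\gtrsim C_1$ and $p\ge 2$.
\item[(iii)] The term $16p^4C_2\sqrt{\Cmoment}\,\alpha_t^2/\kappa$ is at most $p^4\Cmoment\alpha_t^2/4$ iff $\sqrt{\Cmoment}\ge 64C_2/\kappa$, i.e.\ $\Cmoment\gtrsim C_2^2/\kappa^2$.
\item[(iv)] The term $16p^6C_2\Cmoment\,\alpha_t^3/\kappa$ is at most $p^4\Cmoment\alpha_t^2/4$ iff $\alpha_t\le \kappa/(64p^2C_2)$. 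This follows from $\alpha_t\le\alpha_0$ and the step-size condition \eqref{eq:k_0_lower_bound}.
\end{itemize}

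\textbf{Main obstacle.} The delicate point is term (iv). It is linear in $\Cmoment$, so it cannot be absorbed by enlarging $\Cmoment$; it must be killed by smallness of the step size. This is where the requirement $\alpha_0\lesssim\kappa/(p^2C_2)$, and hence the $p$-dependence of $k_0$, is forced. A secondary check is that \Cref{lem:rate_of_convergence} indeed yields the squared-product sums with a $1/\kappa$ factor rather than $1/\kappa^2$. Since $\gprod{j+1}{t}^2\le\gprod{j+1}{t}$, the standard bound $\sum_j\alpha_j^{q}\gprod{j+1}{t}\lesssim\alpha_t^{q-1}/\kappa$ suffices. We also note that the proof is uniform in $t$ because the constants never depend on $t$. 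The low-order terms of \Cref{lem:xi_1_bound} coming from $T_3$ are already of the form $\alpha_t^2$ and go into $C_1$.
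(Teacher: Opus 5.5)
Your proposal is correct and follows the paper's proof essentially step for step. You square and apply Minkowski to \Cref{lem:square_error_rec}, insert \Cref{lem:xi_0_bound} and \Cref{lem:xi_1_bound} to obtain the recursion in $C_0,C_1,C_2$, and run a strong induction in which \Cref{lem:rate_of_convergence} reduces the four terms to exactly the paper's three lower bounds on $\Cmoment$ and the step-size condition $\alpha_0\lesssim\kappa/(p^2C_2)$. Your $C_2\asymp\taumix^2\Cxi(1+\Com\Cxi)$ is slightly more careful than the paper's $\taumix^2\Cxi^2\Com$, which only absorbs the $p^2\taumix^2\Cxi$ term of \Cref{lem:xi_0_bound} when $\Com\Cxi\gtrsim 1$.

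Two small points are glossed over in both your write-up and the paper, and both are harmless:
\begin{itemize}
\item The $\alpha_j^4$ sum lies outside the range $q\in(1,3]$ of \Cref{lem:rate_of_convergence}. It is handled by $\alpha_j^4\le\alpha_0\alpha_j^3$, which leads to the same condition on $\alpha_0$.
\item Propagating the induction requires $\moment{t+1}{p}\le\Cmoment p^4\alpha_{t+1}^2$ rather than $\alpha_t^2$. This follows from $\alpha_t\le 2\alpha_{t+1}$ (as $k_0$ is large under \Cref{assum:steps}), at the cost of absolute constants.
\end{itemize}
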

\begin{proof}
    Applying Minkowski's inequality to the result of \Cref{lem:square_error_rec}, together with the standard inequality \((a+b+c)^2\leq 3(a^2 + b^2 + c^2)\), we obtain
\begin{align}
    \moment{t+1}{p} &\leq 6\alpha_t^2\Big( \frac{64\Cxi^2}{\kappa^2} + \frac{\|\Delta_0\|^4}{\alpha_0^2}\Big) + 12\PE^{2/p}\Big[\Big\|\sum_{j=0}^t\alpha_j\gprod{j+1}{t}\big\langle \Delta_j, \xi_j^{(0)}\rangle\Big\|\Big] 
    \\
&+12\PE^{2/p}\Big[\Big\|\sum_{j=0}^t\alpha_j\gprod{j+1}{t}\big\langle \Delta_j, \xi_j^{(1)}\rangle \Big\|\Big]\eqsp.
\end{align}
Substituting results from \Cref{lem:xi_0_bound} and \Cref{lem:xi_1_bound} into the above bound leads to the following recursion inequality
\begin{align}
    \moment{t+1}{p} \leq C_0(\alpha_t/\alpha_0)^2 +   C_1\alpha_t^2 + p^2C_2\sum_{j=0}^{t}\alpha_j^2\gprod{j+1}{t}^2 \Big( \sqrt{\moment{j}{p}} + {\moment{j}{p}}\Big)\eqsp,
\end{align}
where
\begin{align}
\label{eq:C_i_definitions}
    C_0 &\asymp \|\Delta_0\|^4\eqsp,\quad
     C_1 \asymp  \frac{\Cxi^2\taumix^2 }{\kappa^2}\eqsp,\quad
    C_2 \asymp \taumix^2\Cxi^2  \Com  \eqsp.
\end{align}
We prove the claim by induction. Assume that 
\begin{align}
\label{eq:induction_assumption}
\moment{j}{p}\leq \Cmoment p^4 \alpha_j^2
\end{align}
holds for all \(j \le t\). We now verify the bound for \(j=t+1\). Indeed,
\begin{align}
\label{eq:indiction_reducing}
    \moment{t+1}{p} &\leq C_0(\alpha_t/\alpha_0)^2 +   C_1\alpha_t^2 + p^2C_2\sum_{j=0}^{t}\alpha_j^2\gprod{j+1}{t}^2 \Big( \sqrt{\moment{j}{p}} + {\moment{j}{p}}\Big)\\
    &\overset{(a)}{\le} C_0(\alpha_t/\alpha_0)^2 +  C_1\alpha_t^2 + p^4C_2\sqrt{\Cmoment}\sum_{j=0}^{t}\alpha_j^3\gprod{j+1}{t}^2  +  p^6C_2\Cmoment\sum_{j=0}^{t}\alpha_j^4\gprod{j+1}{t}^2 \\
    &\overset{(b)}{\leq} C_0(\alpha_t/\alpha_0)^2 +   C_1\alpha_t^2 +\frac{ 16 p^4 C_2\sqrt{\Cmoment}\alpha_t^2}{\kappa}  +  \frac{16 p^6C_2\Cmoment\alpha_t^3}{\kappa}\eqsp.
\end{align}
where step \((a)\) uses induction assumption \eqref{eq:induction_assumption}, step \((b)\) uses \Cref{lem:rate_of_convergence}.To complete the induction step, we bound each term in
\eqref{eq:indiction_reducing} by  \(\frac{p^4\Cmoment}{4}\). This is guaranteed under the
following conditions:
\begin{align}
    \begin{cases}
        \frac{C_0}{\alpha_0^2} \leq \frac{p^4\Cmoment}{4} \Leftarrow \frac{ \|\Delta_0\|^4}{p^4\alpha_0^2}\lesssim \Cmoment\eqsp,
        \\[0.75em]
        C_1\leq \frac{p^4\Cmoment}{4}\Leftarrow C_1\lesssim \Cmoment\eqsp,\\[0.75em]
        \frac{16 p^4 C_2\sqrt{\Cmoment}}{\kappa} \leq \frac{p^4\Cmoment}{4} \Leftarrow \frac{C_2^2}{\kappa^2} \lesssim \Cmoment\eqsp,\\[0.75em]
        \frac{16p^6\alpha_tC_2\Cmoment}{\kappa} \leq \frac{p^4\Cmoment}{4} \Leftarrow \alpha_0 \lesssim \frac{\kappa}{p^2 C_2}\eqsp.
    \end{cases}
\end{align}
Thus, it is enough to choose 
\begin{align}
     \Cmoment \asymp\frac{\|\Delta_0\|^4}{p^4\alpha_0^2}\vee C_1\vee \frac{C_2^2}{\kappa^2} \eqsp, \quad  \alpha_0\leq \frac{\kappa}{ p^2C_2}\eqsp.
\end{align}
\end{proof}

\section{Bounds for the Polyak--Ruppert Remainder}
\begin{lemma}
\label{lem:jacobian_smoothness}
Assume that \Cref{assum:regularity}--\Cref{assum:features} hold. Then, 
the Jacobian \(\nabla F(\theta, z)\) is \((\gamma/\lambda)\)-Lipschitz in \(\theta\), i.e., for all \(z\in\mathcal{Z}\) and all \(\theta_1, \theta_2 \in \rset^d\),
\begin{align}
\|\nabla F(\theta_1, z) - \nabla F(\theta_2, z)\|
\leq \frac{\gamma}{\lambda}\|\theta_1 - \theta_2\| \eqsp.
\end{align}
\end{lemma}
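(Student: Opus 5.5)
The plan is to compute the difference of Jacobians explicitly from \eqref{eq:jacobian_formula}, reduce it to a rank-one matrix, and then bound the Lipschitz constant of the soft-greedy mean feature map through its derivative, which is a covariance matrix scaled by $1/\lambda$.

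\textbf{Step 1: reduce to a rank-one matrix.} Fix $z=(s,a,s')$. By \eqref{eq:jacobian_formula}, the term $-\Phi^\top \Lam(z)\Phi$ does not depend on $\theta$. Hence
\[
\nabla F(\theta_1,z)-\nabla F(\theta_2,z) = \gamma\,\Phi^\top \bMKQ(z)\bigl(\Pi^\lambda_{\theta_1}-\Pi^\lambda_{\theta_2}\bigr)\Phi .
\]
Since $\Phi^\top \bMKQ(z) = \phi(s,a)\e_{s'}^\top$ and $\e_{s'}^\top \Pi^\lambda_\theta \Phi = m_\theta(s')^\top$, with
\[
m_\theta(s') := \sum_{a'}\pi^\lambda_\theta(a'\mid s')\,\phi(s',a') = \nabla_\theta V^\lambda_\theta(s'),
\]
the difference equals the rank-one matrix $\gamma\,\phi(s,a)\bigl(m_{\theta_1}(s')-m_{\theta_2}(s')\bigr)^\top$. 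Its spectral norm is therefore at most
\[
\gamma\,\|\phi(s,a)\|\,\|m_{\theta_1}(s')-m_{\theta_2}(s')\| \le \gamma\,\|m_{\theta_1}(s')-m_{\theta_2}(s')\|,
\]
where the last step uses \Cref{assum:features}.

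\textbf{Step 2: differentiate the mean feature.} Differentiating the softmax gives
\[
\nabla_\theta \pi^\lambda_\theta(a'\mid s') = \lambda^{-1}\,\pi^\lambda_\theta(a'\mid s')\bigl(\phi(s',a')-m_\theta(s')\bigr).
\]
It follows that
\[
\nabla_\theta m_\theta(s') = \lambda^{-1}\,\mathrm{Cov}_{a'\sim\pi^\lambda_\theta(\cdot\mid s')}\bigl[\phi(s',a')\bigr],
\]
which is the Hessian of $V^\lambda_\theta(s')$.

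\textbf{Step 3: bound the covariance.} The covariance is positive semidefinite and dominated by $\PE_{a'\sim\pi^\lambda_\theta(\cdot\mid s')}[\phi\phi^\top]$. Its spectral norm is therefore at most $\PE\|\phi(s',a')\|^2\le 1$, uniformly in $\theta$ and $s'$.

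\textbf{Step 4: conclude by the mean-value theorem.} Integrating $\nabla_\theta m$ along the segment $\theta_2+u(\theta_1-\theta_2)$, $u\in[0,1]$, gives $\|m_{\theta_1}(s')-m_{\theta_2}(s')\|\le \lambda^{-1}\|\theta_1-\theta_2\|$. Combining this with Step 1 yields the constant $\gamma/\lambda$, uniformly in $z$.

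The main obstacle is the bookkeeping in Step 1: identifying $\e_{s'}^\top\Pi^\lambda_\theta\Phi$ with $m_\theta(s')^\top$ from the definition $\Pi^\lambda_\theta(s',(s,a))=\mathbf 1\{s=s'\}\pi^\lambda_\theta(a\mid s)$. The only genuine estimate is the uniform covariance bound in Step 3.
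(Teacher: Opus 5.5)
Your proof is correct and gives the same constant $\gamma/\lambda$, but the key estimate differs from the paper's.

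Both proofs start from \eqref{eq:jacobian_formula} and cancel the $\theta$-independent term. When you drop $\Lam(z)$ you are implicitly using $\Lam(z)\bMKQ(z)=\bMKQ(z)$, which holds since $\Lam(z)=\e_{s,a}\e_{s,a}^\top$; you should say so.

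The paper then writes the difference as $\gamma\sum_{a'}\phi(s,a)\phi(s',a')^\top\bigl(\pi^\lambda_{\theta_1}(a'\mid s')-\pi^\lambda_{\theta_2}(a'\mid s')\bigr)$. It applies the triangle inequality over $a'$ to reduce to $\gamma\|\pi^\lambda_{\theta_1}(\cdot\mid s')-\pi^\lambda_{\theta_2}(\cdot\mid s')\|_1$. It then invokes, without proof, the $\ell_\infty\to\ell_1$ Lipschitz property of the tempered softmax with constant $1/\lambda$, and finishes with $\max_{a'}|\phi(s',a')^\top(\theta_1-\theta_2)|\le\|\theta_1-\theta_2\|$.

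You instead keep the exact rank-one structure $\gamma\,\phi(s,a)\bigl(m_{\theta_1}(s')-m_{\theta_2}(s')\bigr)^\top$. You control the $\ell_2$-Lipschitz constant of $m_\theta=\nabla_\theta V^\lambda_\theta$ through its derivative $\lambda^{-1}\mathrm{Cov}[\phi(s',a')]$.

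What each route buys:
\begin{itemize}
\item Your argument is self-contained: the only estimate is the one-line covariance bound. The paper relies on a quoted softmax fact, which is true, since $(\operatorname{diag}(p)-pp^\top)v$ has $\ell_1$-norm $\sum_i p_i|v_i-\sum_j p_j v_j|\le 1$ whenever $\|v\|_\infty\le 1$.
\item Your route ties the smoothness of $F$ directly to the Hessian of the soft value function. This is the same object whose gradient is used in \Cref{lem:strong_convexity}.
\item Your intermediate bound is expressed through the action-covariance of the features at $s'$, so it could be sharpened if features vary little across actions.
\item The paper's route isolates a feature-free intermediate quantity, the total variation between the two soft-greedy policies. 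This makes the role of the temperature transparent.
\end{itemize}
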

\begin{proof}
Let \(z = (s,a,s')\). We first write the Jacobian difference using the explicit formula in \eqref{eq:jacobian_formula}:
    \begin{align}
         \|\nabla F (\theta_1, z) - \nabla F(\theta_2, z)\|
         &= \gamma\big\|\Phi^\top \Lam(z) \bMKQ(z)\big(\Pi^{\lambda}_{\theta_1} -   \Pi^{\lambda}_{\theta_2}\big)\Phi\big\|\\
         &\overset{(a)}{=}\gamma\big\|\sum_{a'\in\cA}\phi(s,a)\phi^{\top}(s',a') \big(\pi^{\lambda}_{\theta_1}(a'\mid  s')- \pi^{\lambda}_{\theta_2}(a'\mid  s')\big) \big\|\\
    &\overset{(b)}{\leq}\gamma\sum_{a'\in\cA}\big\|\phi(s,a)\phi^{\top}(s',a') \big\|\big|\pi^{\lambda}_{\theta_1}(a'\mid  s')- \pi^{\lambda}_{\theta_2}(a'\mid  s')\big|\\
    &\overset{(c)}{\leq }\gamma \|\pi^{\lambda}_{\theta_1}(\,\cdot\mid  s')- \pi^{\lambda}_{\theta_2}(\,\cdot\mid  s')\|_1\eqsp,
         \end{align}
where \((a)\) follows from a direct computation, \((b)\) from the triangle inequality for the spectral norm, and \((c)\) from \Cref{assum:features}.
The desired bound follows from the classical Lipschitz property of the softmax map with temperature \(\lambda\), together with \Cref{assum:features}:
    \begin{align}
          \gamma \|\pi^{\lambda}_{\theta_1}(\,\cdot\mid  s')- \pi^{\lambda}_{\theta_2}(\,\cdot\mid  s')\|_1 &\leq \frac{\gamma}{\lambda}\|\Phi\theta_1(s', \cdot) - \Phi\theta_2(s', \cdot)\|_{\infty}\\
          &=\frac{\gamma}{\lambda}\sup_{a'\in\cA}\big|\phi^\top(s',a')(\theta_1 - \theta_2)\big|\\
          &\leq\frac{\gamma}{\lambda}\|\theta_1 - \theta_2\|\eqsp.
    \end{align}
\end{proof}
\begin{lemma}
\label{lem:F_bar_smooth}
Assume that \Cref{assum:regularity}--\Cref{assum:features} hold. Then, 
the Jacobian \(\nabla \bar F(\theta)\) is \((\gamma/\lambda)\)-Lipschitz in \(\theta\), i.e., for all \(\theta_1, \theta_2 \in \rset^d\),
\begin{align}
\|\nabla \bar F(\theta_1) - \nabla \bar F(\theta_2)\|
\leq \frac{\gamma}{\lambda}\|\theta_1 - \theta_2\| \eqsp.
\end{align}
\end{lemma}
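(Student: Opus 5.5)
The most direct route is to mirror the proof of \Cref{lem:jacobian_smoothness}, using the explicit formula \eqref{eq:jacobian_formula} for \(\nabla \bar F\). An equally good option is to note that \(\nabla \bar F(\theta) = \PE_{z\sim\mu}[\nabla F(\theta,z)]\), which is recorded right after \eqref{eq:jacobian_formula}. In that case the result follows immediately from \Cref{lem:jacobian_smoothness}, Jensen's inequality and the convexity of the spectral norm:
\[
\|\nabla \bar F(\theta_1) - \nabla \bar F(\theta_2)\| = \big\|\PE_{z\sim\mu}[\nabla F(\theta_1,z) - \nabla F(\theta_2,z)]\big\| \le \PE_{z\sim\mu}\|\nabla F(\theta_1,z) - \nabla F(\theta_2,z)\| \le \frac{\gamma}{\lambda}\|\theta_1-\theta_2\|\eqsp.
\]
This is the proof I would write. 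It only requires justifying the interchange of expectation and differentiation. Since \(\mathcal{Z}\) is finite by \Cref{assum:regularity}, the expectation is a finite sum, so the interchange is trivial.

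As a sanity check, I would also verify the identity \(\PE_\mu[\nabla F(\theta,z)]=\nabla\bar F(\theta)\) directly. The first step is \(\PE_\mu[\Lam]=D_\mu\). The second is \(\PE_\mu[\Lam \bMKQ]=D_\mu \MKQ\), since \(\Lam(z)\bMKQ(z)=\bMKQ(z)\). Together with \(\MKQ\Pi^\lambda_\theta = \MKQ^{\pi^\lambda_\theta}\), these give
\[
\PE_\mu\big[-\Phi^\top\Lam(I-\gamma\bMKQ\Pi^\lambda_\theta)\Phi\big] = -\Phi^\top D_\mu(I-\gamma\MKQ^{\pi^\lambda_\theta})\Phi\eqsp.
\]

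The alternative direct proof would expand \(\nabla\bar F(\theta_1)-\nabla\bar F(\theta_2)=\gamma\Phi^\top D_\mu \MKQ(\Pi^\lambda_{\theta_1}-\Pi^\lambda_{\theta_2})\Phi\) as the weighted sum
\[
\gamma\sum_{s,a}\mu(s,a)\sum_{s'}\MKQ(s'\mid s,a)\sum_{a'}\phi(s,a)\phi(s',a')^\top\big(\pi^\lambda_{\theta_1}-\pi^\lambda_{\theta_2}\big)(a'\mid s')\eqsp.
\]
It would then bound each term by the \(\ell_1\) softmax Lipschitz estimate, \(\|\pi^\lambda_{\theta_1}(\cdot\mid s')-\pi^\lambda_{\theta_2}(\cdot\mid s')\|_1\le \lambda^{-1}\max_{a'}|\phi(s',a')^\top(\theta_1-\theta_2)|\), and finally use that the weights \(\mu(s,a)\MKQ(s'\mid s,a)\) sum to one.

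There is no real obstacle here. The only point needing care is the softmax \(\ell_1\)-Lipschitz constant \(1/\lambda\) with respect to the \(\ell_\infty\) norm of the logits, and that is the same fact already invoked in \Cref{lem:jacobian_smoothness}.
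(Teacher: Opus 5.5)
Your proof is correct and is the paper's argument: it writes $\nabla \bar F(\theta_i)=\PE_{\mu}[\nabla F(\theta_i,z)]$, moves the spectral norm inside the expectation, and applies the pointwise $(\gamma/\lambda)$-Lipschitz bound from \Cref{lem:jacobian_smoothness}. Your direct check of that identity via $\Lam(z)\bMKQ(z)=\bMKQ(z)$ and $\MKQ\Pi^\lambda_\theta=\MKQ^{\pi^\lambda_\theta}$ is also correct, and it justifies a step the paper takes for granted.
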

\begin{proof}
    The claim follows from \Cref{lem:jacobian_smoothness} by averaging over \(z\sim\mu\):
    \begin{align}
        \|\nabla \bar F(\theta_1) - \nabla \bar F(\theta_2)\| &= \big\|\PE_{\mu}\big[\nabla F(\theta_1, z) - \nabla F(\theta_2, z)\big]\big\|\\
        &\leq \PE_{\mu}\big[\big\|\nabla F(\theta_1, z) - \nabla F(\theta_2, z)\big\|\big]\\
        &\leq\frac{\gamma}{\lambda}\big\|\theta_1 - \theta_2\big\|\eqsp.
    \end{align}
\end{proof}
Jacobian smoothness in \Cref{lem:jacobian_smoothness} and \Cref{lem:F_bar_smooth} yields the following estimates for the nonlinear terms in the Taylor decompositions of \(F\) and \(\bar F\).

\begin{lemma}\label{lem:jacobian_reminders}
Assume that \Cref{assum:regularity}--\Cref{assum:features} hold. Then, for all \(t\geq 0\),
\begin{align}
F(\theta_t,z_t) -F(\theta^\star,z_t)
&=
- \Phi^\top \Lam_t (I - \gamma \bMKQ_t \Pi^{\lambda}_{\theta^\star})\Phi\Delta_t
+ \wR_t(\Delta_t),\\
\bar{F}(\theta_t) - \bar F(\theta^\star)
&=
- \Phi^\top D_{\mu} (I - \gamma \MKQ^{\pi^{\lambda}_{\theta^\star}})\Phi\Delta_t
+ \mathrm{R}(\Delta_t)\eqsp,
\end{align}
where the remainder terms satisfy
\begin{align}
\|\wR_t(\Delta_t)\|
\leq
\frac{\gamma}{2\lambda}\|\Delta_t\|^2\eqsp,
\qquad
\|\mathrm{R}(\Delta_t)\|
\leq
\frac{\gamma}{2\lambda}\|\Delta_t\|^2\eqsp.
\end{align}
\end{lemma}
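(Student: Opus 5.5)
The natural route is the integral form of Taylor's theorem along the segment from \(\theta^\star\) to \(\theta_t\), combined with the Jacobian Lipschitz bounds of \Cref{lem:jacobian_smoothness} and \Cref{lem:F_bar_smooth}. Fix \(z=z_t\) and write \(\Delta_t=\theta_t-\theta^\star\). The map \(\theta\mapsto F(\theta,z)\) is continuously differentiable, since \(V^\lambda_\theta\) is a smooth log-sum-exp of \(\Phi\theta\). Its Jacobian is given by \eqref{eq:jacobian_formula}, so
\begin{align}
F(\theta_t,z)-F(\theta^\star,z)=\int_0^1 \nabla F(\theta^\star+u\Delta_t,z)\,\Delta_t\,\rmd u
= \nabla F(\theta^\star,z)\Delta_t + \wR_t(\Delta_t),
\end{align}
with \(\wR_t(\Delta_t)=\int_0^1\bigl(\nabla F(\theta^\star+u\Delta_t,z)-\nabla F(\theta^\star,z)\bigr)\Delta_t\,\rmd u\). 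Substituting \(\nabla F(\theta^\star,z_t)=-\Phi^\top\Lam_t(I-\gamma\bMKQ_t\Pi^\lambda_{\theta^\star})\Phi\) gives the claimed linear term.

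First I would confirm the Jacobian formula. Differentiating \(V^\lambda_\theta(s')=\lambda\log\sum_{a'}\exp(\phi(s',a')^\top\theta/\lambda)\) gives \(\nabla_\theta V^\lambda_\theta(s')=\sum_{a'}\pi^\lambda_\theta(a'\mid s')\phi(s',a')\), which in matrix form is \(\Pi^\lambda_\theta\Phi\). Hence \(\nabla_\theta(\bMKQ(z)V^\lambda_\theta)=\bMKQ(z)\Pi^\lambda_\theta\Phi\), and the \(Q_\theta=\Phi\theta\) term contributes \(-\Lam(z)\Phi\). This matches \eqref{eq:jacobian_formula}.

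To bound the remainder, apply \Cref{lem:jacobian_smoothness} inside the integral: \(\|\nabla F(\theta^\star+u\Delta_t,z)-\nabla F(\theta^\star,z)\|\le(\gamma/\lambda)u\|\Delta_t\|\). Integrating \(u\) over \([0,1]\) gives \(\|\wR_t(\Delta_t)\|\le\frac{\gamma}{2\lambda}\|\Delta_t\|^2\).

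The statement for \(\bar F\) follows in the same way. Either repeat the argument with \Cref{lem:F_bar_smooth}, or take \(\PE_\mu\) of the pointwise identity, which is legitimate because \(\mathcal Z\) is finite so \(\bar F\) and its Jacobian are finite averages. Then use \(\PE_\mu[\Lam]=D_\mu\) and \(\PE_\mu[\bMKQ]=D_\mu\MKQ\), together with \(\MKQ\Pi^\lambda_{\theta^\star}=\MKQ^{\pi^\lambda_{\theta^\star}}\), to identify \(\nabla\bar F(\theta^\star)=-\Phi^\top D_\mu(I-\gamma\MKQ^{\pi^\lambda_{\theta^\star}})\Phi\). The remainder bound \(\frac{\gamma}{2\lambda}\|\Delta_t\|^2\) follows by the same integral estimate. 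Note that \(\Delta_t\) is random, but the identity holds pointwise for each realization, so no measurability issue arises.

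The only step needing real care is the identity \(\MKQ\Pi^\lambda_{\theta}=\MKQ^{\pi^\lambda_\theta}\), which depends on the index conventions of \(\Pi^\lambda_\theta\). It amounts to \((\MKQ\Pi^\lambda_\theta)((s,a),(s',a'))=\MKQ(s'\mid s,a)\pi^\lambda_\theta(a'\mid s')\), which is exactly the definition of \(\MKQ^{\pi^\lambda_\theta}\). Everything else is the standard Lipschitz-gradient Taylor bound.
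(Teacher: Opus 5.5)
Your proposal is correct and follows the paper's proof essentially verbatim. Both write the difference as the integral Taylor formula along the segment from \(\theta^\star\) to \(\theta_t\), define \(\wR_t(\Delta_t)\) as the integrated Jacobian difference, bound it by \Cref{lem:jacobian_smoothness} and integrate \(u\) over \([0,1]\), and treat \(\bar F\) analogously. Your checks of the Jacobian formula and of \(\MKQ\Pi^\lambda_\theta=\MKQ^{\pi^\lambda_\theta}\) supply details the paper calls a straightforward computation; matching \eqref{eq:jacobian_formula} also uses \(\Lam(z)\bMKQ(z)=\bMKQ(z)\).
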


\begin{proof}
We prove the first decomposition; the second one is analogous. By the integral form of Taylor's formula,
\begin{align}
F(\theta_t,z_t)-F(\theta^\star,z_t)
=
\nabla F(\theta^\star,z_t)\Delta_t
+
\int_0^1
\Bigl(
\nabla F(\theta^\star + u\Delta_t,z_t)
-
\nabla F(\theta^\star,z_t)
\Bigr)\Delta_t\,du .
\end{align}
Using the explicit Jacobian formula \eqref{eq:jacobian_formula}, we have
\begin{align}
\nabla F(\theta^\star,z_t)\Delta_t
=
-\Phi^\top \Lam_t (I-\gamma\bMKQ_t\Pi_{\theta^\star}^{\lambda})\Phi\Delta_t .
\end{align}
Hence, defining
\begin{align}
\wR_t(\Delta_t)
:=
\int_0^1
\Bigl(
\nabla F(\theta^\star + u\Delta_t,z_t)
-
\nabla F(\theta^\star,z_t)
\Bigr)\Delta_t\,du ,
\end{align}
we obtain the first representation. Moreover, by \Cref{lem:jacobian_smoothness},
\begin{align}
\|\wR_t(\Delta_t)\|
&\leq
\int_0^1
\big\|
\nabla F(\theta^\star + u\Delta_t,z_t)
-
\nabla F(\theta^\star,z_t)
\big\|\,\|\Delta_t\|\,du \\
&\leq
\int_0^1
\frac{\gamma}{\lambda}u\|\Delta_t\|^2\,du
=
\frac{\gamma}{2\lambda}\|\Delta_t\|^2 .
\end{align}
The proof is complete.
\end{proof}
\Cref{lem:Delta_with_remainder} and \Cref{lem:Delta_with_remainder_pr} provide bounds for high-order moments of the remainder terms \(\Rlast_{t+1}\) and \(\Rpr_n\). These estimates are an essential step in the Gaussian approximation analysis.
\begin{lemma}
\label{lem:Delta_with_remainder}
    Assume that \Cref{assum:regularity}--\Cref{assum:features} hold. Then, the last--iterate error admits the decomposition
    \begin{align}
        \Delta_{t+1} = \sum_{j=0}^{t}\alpha_j\Gamma_{j+1:t}  \beps_j^{(0)} + \Rlast_{t+1}\eqsp.
    \end{align}
    Moreover, the remainder satisfies
    \begin{align}
\PE^{1/p}[\|\Rlast_{t+1}\|^p]\leq p^2\operatorname{C}^{\operatorname{last}}\alpha_t\eqsp,
    \end{align}
where \(\operatorname{C}^{\operatorname{last}}\) is given by
\begin{align}
\operatorname{C}^{\operatorname{last}} \asymp \frac{\|\Delta_0\|} {\alpha_0} + \frac{ \sqrt{\Cmoment}}{\kappa\lambda} + \frac{ \taumix \Cmoment^{1/4}}{\sqrt{\kappa}} + \frac{\taumix\sqrt{\Cmoment}\eps_{\max}}{\kappa}\eqsp.
\end{align}
\end{lemma}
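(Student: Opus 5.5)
Unrolling the linearized recursion \eqref{eq:delta_clt_decomposition_3} from $0$ to $t$ with the deterministic products $\Gamma_{m:n}$ of \eqref{eq:Gamma_products} gives the decomposition, with
\begin{align}
\Rlast_{t+1} = \Gamma_{0:t}\Delta_0 + \sum_{j=0}^t \alpha_j\Gamma_{j+1:t}\wR_j(\Delta_j) + \sum_{j=0}^t\alpha_j\Gamma_{j+1:t}E_j^{(0)}\Delta_j + \sum_{j=0}^t\alpha_j\Gamma_{j+1:t}\bigl(\beps_j^{(1)} + E_j^{(1)}\Delta_j\bigr)\eqsp.
\end{align}
The matrix $G=\Phi^\top D_\mu(I-\gamma\MKQ^{\pi^\lambda_{\theta^\star}})\Phi$ satisfies $\langle x, Gx\rangle\ge \frac{\kappa}{2}\|x\|^2$; this follows from \Cref{lem:strong_convexity} by letting $\theta\to\theta^\star$, or equivalently from \Cref{lem:contraction}. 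Together with $\|G\|\le 2$ and $\alpha_0$ small, this gives $\|I-\alpha_jG\|\le 1-\kappa\alpha_j/4$, hence $\|\Gamma_{m:n}\|\le \gprod{m}{n}$. We then bound the four terms separately in $L^p$ by Minkowski's inequality, using \Cref{thm:moments} in the form $\PE^{1/p}[\|\Delta_j\|^{2p}]\le \sqrt{\Cmoment}p^2\alpha_j$ and $\PE^{1/p}[\|\Delta_j\|^p]\le \Cmoment^{1/4}p\,\alpha_j^{1/2}$.

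\textbf{Transient and nonlinear terms.} The transient term satisfies $\|\Gamma_{0:t}\Delta_0\|\le \gprod{0}{t}\|\Delta_0\|\le \alpha_t\|\Delta_0\|/\alpha_0$ by \Cref{lem:P_alpha_ineq}. For the Taylor remainder, \Cref{lem:jacobian_reminders} gives $\|\wR_j\|\le \frac{\gamma}{2\lambda}\|\Delta_j\|^2$, so its contribution is at most $\frac{p^2\sqrt{\Cmoment}}{\lambda}\sum_j\alpha_j^2\gprod{j+1}{t}$. By \Cref{lem:rate_of_convergence} this is $\lesssim p^2\sqrt{\Cmoment}\alpha_t/(\kappa\lambda)$.

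\textbf{Multiplicative martingale term.} The summands $\alpha_j\Gamma_{j+1:t}E_j^{(0)}\Delta_j$ are martingale differences, since $\Delta_j$ is $\mathcal F_{j-1}$-measurable. The Poisson transform of $E$ is bounded by $\taumix$ via \Cref{lem:poisson_transform_uniform_bound}, using $\|\Phi^\top\Lam\Phi\|\le1$ and the $\bMKQ\Pi$ analogue. Burkholder's inequality then gives
\begin{align}
p\,\taumix\Bigl(\sum_j\alpha_j^2\gprod{j+1}{t}^2\PE^{2/p}[\|\Delta_j\|^p]\Bigr)^{1/2}\lesssim p^2\taumix\Cmoment^{1/4}\Bigl(\sum_j\alpha_j^3\gprod{j+1}{t}^2\Bigr)^{1/2}\lesssim p^2\taumix\Cmoment^{1/4}\alpha_t/\sqrt{\kappa}\eqsp,
\end{align}
where the last step again uses \Cref{lem:rate_of_convergence}.

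\textbf{Markov remainder term (main obstacle).} The last sum, $\sum_j\alpha_j\Gamma_{j+1:t}(\beps^{(1)}_j+E^{(1)}_j\Delta_j)$, is the main obstacle, because $\beps^{(1)}_j$ is only a telescoping difference and is not small by itself. We handle it, as in the proof of \Cref{lem:xi_1_bound}, by summation by parts via \Cref{lem:discret_ibp} with $A_j=\totMKQ\pois{\varepsilon}{j}+\totMKQ\pois{E}{j}\Delta_j$, rewriting $E^{(1)}_j\Delta_j$ as $(\totMKQ\pois{E}{j-1}\Delta_{j-1}-\totMKQ\pois{E}{j}\Delta_j)+\totMKQ\pois{E}{j-1}(\Delta_j-\Delta_{j-1})$. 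This produces four pieces:
\begin{itemize}
\item a boundary term $\alpha_t A_t$, which is $O(\alpha_t\taumix(\eps_{\max}+\|\Delta_t\|))$;
\item coefficient differences $\alpha_{j+1}\Gamma_{j+2:t}-\alpha_j\Gamma_{j+1:t}$, of norm $\lesssim\alpha_j^2\gprod{j+2}{t}$ by \Cref{lem:alpha_delta} and $\|G\|\le 2$;
\item increments $\|\Delta_j-\Delta_{j-1}\|\lesssim\alpha_j(\|\Delta_{j-1}\|+\eps_{\max})$, taken from \eqref{eq:Delta_Delt};
\item an initial term at $j=0$ of order $\taumix\eps_{\max}\gprod{0}{t}$, which is again dominated via \Cref{lem:P_alpha_ineq}.
\end{itemize}
Each summed piece is $\lesssim\taumix(\eps_{\max}+p\Cmoment^{1/4}\alpha_j^{1/2})\sum_j\alpha_j^2\gprod{j+1}{t}\lesssim \taumix\eps_{\max}\alpha_t/\kappa$ plus lower-order terms, all absorbed into $\taumix\sqrt{\Cmoment}\eps_{\max}/\kappa$. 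Collecting the four contributions yields $p^2\operatorname{C}^{\operatorname{last}}\alpha_t$.
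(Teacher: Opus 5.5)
Your proposal is correct and follows essentially the same route as the paper's proof. It uses the same four-term expression for $\Rlast_{t+1}$, the contraction $\|\Gamma_{m:n}\|\le \gprod{m}{n}$, \Cref{lem:P_alpha_ineq} for the transient term, \Cref{lem:jacobian_reminders} with \Cref{thm:moments} for the Taylor remainder, Burkholder's inequality for $E_j^{(0)}\Delta_j$, and summation by parts for the Markov remainder; the only cosmetic difference is that you handle $\beps_j^{(1)}$ and $E_j^{(1)}\Delta_j$ in a single summation by parts with $A_j=\totMKQ\pois{\varepsilon}{j}+\totMKQ\pois{E}{j}\Delta_j$ (as in \Cref{lem:xi_1_bound}), whereas the paper splits them into $S_t^{\beps}$ and $S_t^{E}$ in \Cref{lem:markovian_reminder_clt}.
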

\begin{proof}
    Unrolling \eqref{eq:delta_clt_decomposition_3} up to \(t=0\) we immediately obtain the direct formula 
    \begin{align}
    \label{eq:Rlast_def}
         \Rlast_{t+1} &= \Gamma_{0:t}
    \Delta_0
    + 
    \sum_{j=0}^t\alpha_j\Gamma_{j+1:t} \wR_j(\Delta_j) 
    + \sum_{j=0}^{t}\alpha_j\Gamma_{j+1:t}  E_j^{(0)}\Delta_j\\
    &+ \sum_{j=0}^{t}\alpha_j\Gamma_{j+1:t}( \beps_j^{(1)} + E_j^{(1)}\Delta_j)\eqsp.
    \end{align}
The rest of the proof follows the same lines as \Cref{lem:xi_0_bound} and \Cref{lem:xi_1_bound}. The argument is simplified by the fact that high-order moment bounds for the last-iterate error are already available through \Cref{thm:moments}. By \Cref{lem:contraction}, we have \(\|\Gamma_{m:n}\| \le \gprod{m}{n}\), where \(\gprod{m}{n}\) is defined in \eqref{eq:gprod_definition}. Applying Minkowski's inequality to \eqref{eq:Rlast_def}, we obtain 
\begin{align}
    &\PE^{1/p}\big[\|\Rlast_{t+1} \|^{p}\|\big]\leq  \PE^{1/p}\big[\|\Gamma_{0:t}
    \Delta_0\|^p\big]
    + 
    \PE^{1/p}\Big[\big\|\sum_{j=0}^t\alpha_j\Gamma_{j+1:t} \wR_j(\Delta_j) \big\|^p\Big]
     \\
    &\qquad+\PE^{1/p}\Big[\big\|\sum_{j=0}^{t}\alpha_j\Gamma_{j+1:t}  E_j^{(0)}\Delta_j \big\|^p\Big]+ \PE^{1/p}\Big[\big\|\sum_{j=0}^{t}\alpha_j\Gamma_{j+1:t}( \beps_j^{(1)} + E_j^{(1)}\Delta_j)\big\|^p\Big]\eqsp.
\end{align}
We use \Cref{lem:P_alpha_ineq} and obtain \(\alpha_0\gprod{1}{t}\leq \alpha_t\), and hence
\begin{align}
\label{eq:transient_result}
    \PE^{1/p}\big[\|\Gamma_{0:t}
    \Delta_0\|^p\big]\leq \gprod{0}{t}\|\Delta_0\|\leq \alpha_t\alpha_0^{-1}\|\Delta_0\|\eqsp.
\end{align}
For the second term, we apply Minkowski's inequality together with the estimate from \Cref{lem:jacobian_reminders}:
\begin{align}
\label{eq:nonlinear_result}
\PE^{1/p}\Big[\big\|\sum_{j=0}^t\alpha_j\Gamma_{j+1:t} \wR_j(\Delta_j) \big\|^p\Big]
    &\leq \sum_{j=0}^{t}\alpha_j\gprod{j+1}{t}\PE^{1/p}\big[\|\wR_j\|^p\big]\\
    &\leq\frac{\gamma}{2\lambda}\sum_{j=0}^{t}\alpha_j\gprod{j+1}{t}\PE^{1/p}\big[\|\Delta_j\|^{2p}\big]\\
    &\leq\frac{\gamma p^2\sqrt{\Cmoment}}{2\lambda}\sum_{j=0}^{t}\alpha_j^2\gprod{j+1}{t}\leq \frac{8p^2\alpha_t\gamma \sqrt{\Cmoment}}{\kappa\lambda}\eqsp,
\end{align}
where we additionally used \Cref{thm:moments} together with \Cref{lem:rate_of_convergence}.  By construction, the sequence \(\alpha_j\Gamma_{j+1:t}  E_j^{(0)}\Delta_j\) forms a martingale difference sequence. Burkholder’s inequality \cite[Theorem 8.1]{osekowski} gives
    \begin{align} 
    \label{eq:brch_square}\PE^{2/p}\Big[\big\|\sum_{j=0}^{t}\alpha_j\Gamma_{j+1:t}  E_j^{(0)}\Delta_j \big\|^p\Big]
    &\leq p^2\sum_{j=0}^t           \PE^{2/p}\Big[\big\|\alpha_j\Gamma_{j+1:t}  E_j^{(0)}\Delta_j \big\|^p\Big]\\
    &\leq p^2 \sum_{j=0}^t          \alpha_j^2\gprod{j+1}{t}^2\PE^{2/p}\Big[\big\|  E_j^{(0)}\big\|\big\|\Delta_j \big\|^p\Big]\\
    &\overset{(a)}{\leq} 18^2\taumix^2 p^2 \sum_{j=0}^t          \alpha_j^2\gprod{j+1}{t}^2\PE^{2/p}\Big[\big\|\Delta_j \big\|^p\Big]\\
    &\overset{(b)}{\leq} 18^2\taumix^2 p^4\sqrt{\Cmoment} \sum_{j=0}^t          \alpha_j^3\gprod{j+1}{t}^2\\
    &\overset{(c)}{\leq} \frac{72^2 \alpha_t^2\taumix^2 p^4\sqrt{\Cmoment}}{\kappa}\eqsp,
    \end{align}
where \((a)\) follows from \Cref{lem:E_bond}, \((b)\) from \Cref{thm:moments}, and \((c)\) from \Cref{lem:rate_of_convergence}. Taking the square root in \eqref{eq:brch_square}, we obtain
\begin{align}
    \label{eq:mart_term_result}
    \PE^{1/p}\Big[\big\|\sum_{j=0}^{t}\alpha_j\Gamma_{j+1:t}  E_j^{(0)}\Delta_j \big\|^p\Big]
    &\leq \frac{72 \alpha_t\taumix p^2\Cmoment^{1/4}}{\sqrt{\kappa}}\eqsp.
\end{align}
The estimation of the Markovian remainder follows the same ideas as in \Cref{lem:xi_1_bound}. The full proof is deferred to \Cref{lem:markovian_reminder_clt}; here we state only the resulting bound:
\begin{align}
    \label{eq:mark_term_result}\PE^{1/p}\Big[\big\|\sum_{j=0}^{t}\alpha_j\Gamma_{j+1:t}( \beps_j^{(1)} + E_j^{(1)}\Delta_j)\big\|^p\Big]\lesssim \frac{\alpha_t\taumix\sqrt{\Cmoment}\eps_{\max}}{\kappa}\eqsp.
\end{align}
The claim follows by combining \eqref{eq:transient_result}, \eqref{eq:nonlinear_result}, \eqref{eq:mart_term_result}, and \eqref{eq:mark_term_result}.
\end{proof}

\begin{lemma}
 Assume that \Cref{assum:regularity}--\Cref{assum:features} hold. Then, the Polyak--Ruppert remainder satisfies 
\label{lem:Delta_with_remainder_pr}
    \[\PE^{1/p}\big[\|\Rpr_n\|^p\big]\lesssim p^2\operatorname{C}^{\operatorname{last}}(1-\omega)^{-1}  n^{1/2-\omega}\eqsp.\]
\end{lemma}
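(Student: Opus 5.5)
By definition, $\Rpr_n = n^{-1/2}\sum_{t=1}^n \Rlast_t$, so the plan is to apply Minkowski's inequality in $L^p$ and then invoke \Cref{lem:Delta_with_remainder} term by term. Specifically,
\begin{align}
\PE^{1/p}\big[\|\Rpr_n\|^p\big] \le \frac{1}{\sqrt{n}}\sum_{t=1}^{n}\PE^{1/p}\big[\|\Rlast_t\|^p\big] \le \frac{p^2\operatorname{C}^{\operatorname{last}}}{\sqrt{n}}\sum_{t=1}^{n}\alpha_{t-1}\eqsp,
\end{align}
where for $t\ge 1$ I use the bound $\PE^{1/p}[\|\Rlast_{t}\|^p]\le p^2\operatorname{C}^{\operatorname{last}}\alpha_{t-1}$ from \Cref{lem:Delta_with_remainder}, applied with index $t-1$.

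It then remains to sum the step sizes, using \Cref{assum:steps}:
\begin{align}
\sum_{t=0}^{n-1}\alpha_t = c_0\sum_{t=0}^{n-1}(t+k_0)^{-\omega} \le c_0\Big(k_0^{-\omega} + \int_{k_0}^{n+k_0}x^{-\omega}\,\rmd x\Big) \le c_0 k_0^{-\omega} + \frac{c_0 (n+k_0)^{1-\omega}}{1-\omega}\eqsp.
\end{align}
This is the step requiring care. The clean estimate $\lesssim c_0 n^{1-\omega}/(1-\omega)$ requires absorbing $k_0$. When $n\ge k_0$, we have $(n+k_0)^{1-\omega}\le 2^{1-\omega}n^{1-\omega}$, and the term $c_0k_0^{-\omega}$ is at most $c_0\le c_0 n^{1-\omega}/(1-\omega)$. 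When $n<k_0$, I would instead use the trivial bound $\sum_{t<n}\alpha_t\le n\alpha_0 = c_0 n k_0^{-\omega}\le c_0 n^{1-\omega}$, since $n^{\omega}\le k_0^{\omega}$. In either case the constant $c_0$ can be absorbed into the problem-dependent constant, since $\operatorname{C}^{\operatorname{last}}$ already contains $\|\Delta_0\|/\alpha_0$.

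Combining the two displays gives
\[
\PE^{1/p}\big[\|\Rpr_n\|^p\big]\lesssim p^2\operatorname{C}^{\operatorname{last}}(1-\omega)^{-1}n^{1-\omega}n^{-1/2} = p^2\operatorname{C}^{\operatorname{last}}(1-\omega)^{-1}n^{1/2-\omega}\eqsp,
\]
which is the claim. No martingale cancellation across $t$ is needed, because the per-iterate bound $\alpha_t\asymp t^{-\omega}$ with $\omega>1/2$ already yields a vanishing rate after averaging. The only point needing attention is the index shift $\Rlast_t\leftrightarrow\alpha_{t-1}$; this is harmless, since $\alpha_{t-1}\le 2\alpha_t$ for $k_0\ge1$, and in any case we sum $\alpha_{t-1}$ directly.
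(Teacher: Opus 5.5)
Your proof is correct and follows the paper's own argument: Minkowski's inequality over $t$, the per-iterate bound of \Cref{lem:Delta_with_remainder}, and $\sum_{t<n}\alpha_t\lesssim c_0 n^{1-\omega}/(1-\omega)$. Your handling of the index shift $\Rlast_t\leftrightarrow\alpha_{t-1}$ and of $k_0$ is more careful than the paper's, which writes $\alpha_t$ and sums without comment. One small nit: the factor $c_0$ is absorbed because it is itself a problem-dependent constant (e.g.\ $c_0\le 2/\kappa$ under the condition $bc_0\le 1/2$ with $b=\kappa/4$ used in the step-size lemmas), not because $\operatorname{C}^{\operatorname{last}}$ contains $\|\Delta_0\|/\alpha_0$; the paper drops this factor silently as well.
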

\begin{proof}
Recall \eqref{eq: W_n definition}:
\begin{align}
\mathrm{R}_n^{\operatorname{pr}} =  \frac{1}{\sqrt{n}}\sum_{t=1}^n\mathrm{R}_t^{\operatorname{last}}\eqsp.
\end{align}
 The proof follows directly from the definition of \(\Rpr_n\) and Minkowski's inequality:
\begin{align}
\PE^{1/p}\big[\|\Rpr_n\|^p\big] &= \frac{1}{\sqrt{n}}\PE^{1/p}\Big [\big\|\sum_{t=1}^n\mathrm{R}_t^{\operatorname{last}}\big\|^p\Big]\\
&\leq\frac{1}{\sqrt{n}}\sum_{t=1}^n\PE^{1/p}\Big [\big\|\mathrm{R}_t^{\operatorname{last}}\big\|^p\Big]\\
&\leq \frac{p^2}{\sqrt{n}}\operatorname{C}^{\operatorname{last}}\sum_{t=1}^n\alpha_t\\
&\lesssim p^2\operatorname{C}^{\operatorname{last}}(1-\omega)^{-1} n^{1/2-\omega}\eqsp,
\end{align}
where the last inequality uses \(\alpha_t = c_0(t+k_0)^{-\omega}\) with \(\omega\in(1/2,1)\).
\end{proof}

\begin{lemma}
\label{lem:E_bond}
Assume that \Cref{assum:regularity}--\Cref{assum:features} hold. Then,
\begin{align}
\|E_j^{(0)}\|
\leq
18\taumix,
\qquad
\|\beps_j^{(0)}\|
\leq
9\taumix \eps_{\max}\eqsp.
\end{align}
\end{lemma}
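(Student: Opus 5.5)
The plan is to reduce both bounds to \Cref{lem:poisson_transform_uniform_bound}, with one extra observation for the matrix-valued case. Two elementary facts will be used throughout. First, for any bounded \(h\) (vector- or matrix-valued, with the spectral norm), \(\|\totMKQ h(z)\| \le \sup_{z'}\|h(z')\|\), because \(\totMKQ h(z)\) is an average of values of \(h\) and the norm is convex. Second, by the triangle inequality,
\[
\|f^{(0)}_j\| = \|\pois{f}{}(z_j) - \totMKQ\pois{f}{}(z_{j-1})\| \le 2\sup_z\|\pois{f}{}(z)\|
\]
for every \(f\). Hence it suffices to bound \(\sup_z\|\pois{\varepsilon}{}(z)\|\) and \(\sup_z\|\pois{E}{}(z)\|\).

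\textbf{Bellman noise.} Here \(\beps(z) = F(\theta^\star,z)\) satisfies \(\PE_\mu[\beps]=\bar F(\theta^\star)=0\), and \(\|\beps(z)\|\le\eps_{\max}\) by \Cref{lem:xi_t_bound}. \Cref{lem:poisson_transform_uniform_bound} then gives \(\sup_z\|\pois{\varepsilon}{}(z)\|\le \tfrac{8}{3}\eps_{\max}\taumix\). Therefore \(\|\beps^{(0)}_j\|\le \tfrac{16}{3}\taumix\eps_{\max}\le 9\taumix\eps_{\max}\).

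\textbf{Jacobian fluctuation.} Write \(E(z)=\nabla F(\theta^\star,z)-\nabla\bar F(\theta^\star)\). I first bound the Jacobian itself. Using \eqref{eq:jacobian_formula} with \(z=(s,a,s')\),
\[
\nabla F(\theta^\star,z) = -\phi(s,a)\Big(\phi(s,a) - \gamma\sum_{a'}\pi^\lambda_{\theta^\star}(a'\mid s')\,\phi(s',a')\Big)^{\top}.
\]
By \Cref{assum:features} this has spectral norm at most \(1+\gamma\le 2\).

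I then rerun the proof of \Cref{lem:poisson_transform_uniform_bound} in the operator norm, but without passing through \(\sup\|E\|\); that would cost a factor \(4\) instead of \(2\). Since \(\nabla\bar F(\theta^\star)\) is a constant, it is annihilated by the signed measure \(\totMKQ^t(\cdot\mid z)-\mu\). Thus
\[
\totMKQ^t E(z)=\int \nabla F(\theta^\star,z')\big(\totMKQ^t(\rmd z'\mid z)-\mu(\rmd z')\big),
\]
and so \(\|\totMKQ^tE(z)\|\le 2\cdot 2\,\tvdist\big(\totMKQ^t(\cdot\mid z),\mu\big)\le 4(1/4)^{\lfloor t/\taumix\rfloor}\). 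Summing the geometric series exactly as in \Cref{lem:poisson_transform_uniform_bound} gives \(\sup_z\|\pois{E}{}(z)\|\le \tfrac{16}{3}\taumix\). Hence \(\|E^{(0)}_j\|\le \tfrac{32}{3}\taumix\le 18\taumix\).

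\textbf{Main obstacle.} The only delicate point is the extension of \Cref{lem:poisson_transform_uniform_bound} to matrix-valued \(f\) with the spectral norm. This requires the duality bound \(\|\int h\,\rmd(\nu-\mu)\|\le 2\sup\|h\|\,\tvdist(\nu,\mu)\), which holds because \(\int h\,\rmd(\nu-\mu)=\int h\,\rmd(\nu-\mu)^+-\int h\,\rmd(\nu-\mu)^-\) and each part has mass \(\tvdist(\nu,\mu)\). The rest is bookkeeping of absolute constants.
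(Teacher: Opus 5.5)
Your proof is correct and follows essentially the paper's route: a uniform bound on the Poisson solution from \Cref{lem:poisson_transform_uniform_bound}, then the triangle inequality on $\pois{f}{}(z_j)-\totMKQ\pois{f}{}(z_{j-1})$. You differ only in bounding $\|\totMKQ\pois{f}{}\|$ by $\sup_z\|\pois{f}{}(z)\|$ via Jensen rather than via $\totMKQ\pois{f}{}=\pois{f}{}-f$ for centered $f$, and in letting the constant $\nabla\bar F(\theta^\star)$ cancel against the zero-mass measure $\totMKQ^t(\cdot\mid z)-\mu$, so that only $\sup_z\|\nabla F(\theta^\star,z)\|\le 2$ enters. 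That second refinement is in fact what justifies $\sup_z\|\pois{E}{}(z)\|\le\frac{16}{3}\taumix$: the paper plugs $\|E(z)\|\le 4$ into the lemma, which literally gives $\frac{32}{3}\taumix$, and with that value neither the paper's chain ($\frac{64}{3}\taumix+4$) nor yours ($\frac{64}{3}\taumix$) reaches $18\taumix$, so your version closes a factor-of-two slip in the paper's constant.
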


\begin{proof}
We first bound \(E(z)\) and \(\beps(z)\) before the Poisson decomposition. The estimate
\(
\|\beps(z)\|
\leq
\eps_{\max}
\)
was established in \Cref{lem:xi_t_bound}.
Next, using \Cref{assum:features}, we obtain the uniform Jacobian bound
\begin{align}
\label{eq:jacobian_at_optioma}
\|\nabla F(\theta^\star,z)\|
&=
\big\|\Phi^\top \Lam(z)\big(I-\gamma \bMKQ(z)\Pi^{\lambda}_{\theta^\star}\big)\Phi\big\| \\
&\leq
\big\|\Phi^\top \Lam(z)\Phi\big\|
+
\gamma
\big\|\Phi^\top \Lam(z)\bMKQ(z)\Pi^{\lambda}_{\theta^\star}\Phi\big\| \\
&=
\big\|\phi(s,a)\phi(s,a)^\top\big\|
+
\gamma
\big\|
\phi(s,a)
\sum_{a'\in\cA}\pi^\lambda_{\theta^\star}(a'\mid s')
\phi(s',a')^\top
\big\| \\
&\leq
\|\phi(s,a)\|^2
+
\gamma \|\phi(s,a)\|
\sum_{a'\in\cA}\pi^\lambda_{\theta^\star}(a'\mid s')
\|\phi(s',a')\| \\
&\leq
1+\gamma \leq 2\eqsp.
\end{align}

Therefore,
\(
\|E(z)\|
=
\|\nabla F(\theta^\star,z)-\nabla \bar F(\theta^\star)\|
\leq
4.
\)
Applying \Cref{lem:poisson_transform_uniform_bound}, we obtain
\[
\|\pois{E}{}(z)\|
\leq
\frac{16}{3}\taumix.
\]
Hence,
\begin{align}
\|\totMKQ \pois{E}{}(z)\|
&\leq
\|\pois{E}{}(z)\|
+
\|E(z)\| \leq
\frac{16}{3}\taumix + 4.
\end{align}
Recalling the martingale decomposition
\(
E_j^{(0)}
=
\pois{E}{}(z_j)
-
\totMKQ \pois{E}{}(z_{j-1}),
\)
we conclude that
\begin{align}
\|E_j^{(0)}\|
&\leq
\|\pois{E}{}(z_j)\|
+
\|\totMKQ \pois{E}{}(z_{j-1})\| \leq
2\cdot \frac{16}{3}\taumix + 4
\leq
18\taumix\eqsp.
\end{align}

The bound for \(\beps_j^{(0)}\) is derived analogously.
\end{proof}

\begin{lemma}
\label{lem:markovian_reminder_clt}
Assume that \Cref{assum:regularity}--\Cref{assum:features} hold. Then,
\begin{align}
    \PE^{1/p}\Big[\big\|\sum_{j=0}^{t}\alpha_j\Gamma_{j+1:t}( \beps_j^{(1)} + E_j^{(1)}\Delta_j)\big\|^p\Big] \lesssim \frac{\alpha_t\taumix\sqrt{\Cmoment}\eps_{\max}}{\kappa}\eqsp.
\end{align}
\end{lemma}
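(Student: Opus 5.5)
We will handle the two pieces of the Markovian remainder separately: the state-independent part $\sum_j \alpha_j\Gamma_{j+1:t}\beps_j^{(1)}$ and the state-dependent part $\sum_j\alpha_j\Gamma_{j+1:t}E_j^{(1)}\Delta_j$. Both are treated by summation by parts, as in \Cref{lem:xi_1_bound}, with the matrix weights $\Gamma_{j+1:t}$ replacing the scalar weights $\gprod{j+1}{t}$.

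\textbf{The $\beps$ term.} Write $\beps_j^{(1)}=\totMKQ\pois{\varepsilon}{j-1}-\totMKQ\pois{\varepsilon}{j}$ and apply \Cref{lem:discret_ibp} in its matrix-weighted form with $A_j=\totMKQ\pois{\varepsilon}{j}$. This gives
\begin{align}
\sum_{j=0}^t\alpha_j\Gamma_{j+1:t}\beps_j^{(1)} = -\alpha_t\totMKQ\pois{\varepsilon}{t} + \sum_{j=0}^{t-1}\bigl(\alpha_{j+1}\Gamma_{j+2:t}-\alpha_j\Gamma_{j+1:t}\bigr)\totMKQ\pois{\varepsilon}{j}\eqsp.
\end{align}
Set $B = \Phi^\top D_\mu(I-\gamma\MKQ^{\pi^\lambda_{\theta^\star}})\Phi$. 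Then $\alpha_{j+1}\Gamma_{j+2:t}-\alpha_j\Gamma_{j+1:t}=\bigl(\alpha_{j+1}-\alpha_j+\alpha_j\alpha_{j+1}B\bigr)\Gamma_{j+2:t}$. Since $\|B\|\le 2$, \Cref{lem:alpha_delta} and \Cref{lem:contraction} bound its norm by a constant times $\alpha_j^2\gprod{j+2}{t}$. By \Cref{lem:poisson_transform_uniform_bound}, $\|\totMKQ\pois{\varepsilon}{j}\|\lesssim\taumix\eps_{\max}$. The whole term is therefore deterministic-bounded by
\[
\taumix\eps_{\max}\Bigl(\alpha_t+\sum_j\alpha_j^2\gprod{j+2}{t}\Bigr)\lesssim \frac{\taumix\eps_{\max}\alpha_t}{\kappa}
\]
via \Cref{lem:rate_of_convergence}.

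\textbf{The $E$ term.} Write $E_j^{(1)}\Delta_j=\bigl(\totMKQ\pois{E}{j-1}\Delta_{j-1}-\totMKQ\pois{E}{j}\Delta_j\bigr)+\totMKQ\pois{E}{j-1}(\Delta_j-\Delta_{j-1})$. Summation by parts on the first bracket produces three pieces:
\begin{enumerate}
\item a boundary term $\alpha_t\totMKQ\pois{E}{t}\Delta_t$;
\item a coefficient-difference sum $\sum_j O(\alpha_j^2)\gprod{j+2}{t}\,\totMKQ\pois{E}{j}\Delta_j$;
\item the increment sum $\sum_j\alpha_j\Gamma_{j+1:t}\totMKQ\pois{E}{j-1}(\Delta_j-\Delta_{j-1})$.
\end{enumerate}
For the increments we use $\|\Delta_j-\Delta_{j-1}\|\le\alpha_{j-1}(3\|\Delta_{j-1}\|+\eps_{\max})$ from \eqref{eq:Delta_Delt}, and $\|\totMKQ\pois{E}{}\|\lesssim\taumix$ from \Cref{lem:E_bond}. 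Each piece is then handled by Minkowski's inequality, without any martingale argument. The moment bound of \Cref{thm:moments}, $\PE^{1/p}[\|\Delta_j\|^p]\le \PE^{1/(2p)}[\|\Delta_j\|^{2p}]\le p\,\Cmoment^{1/4}\alpha_j^{1/2}$ (after adjusting the index $j\leftrightarrow j-1$ via $\alpha_{j-1}\asymp\alpha_j$), bounds each piece by
\[
\taumix\Bigl(\alpha_t p\Cmoment^{1/4}\alpha_t^{1/2}+\sum_j\alpha_j^2\gprod{j+1}{t}\bigl(p\Cmoment^{1/4}\alpha_j^{1/2}+\eps_{\max}\bigr)\Bigr)\eqsp.
\]
A final application of \Cref{lem:rate_of_convergence} yields order $\alpha_t\taumix(\eps_{\max}+p\,\Cmoment^{1/4})/\kappa$. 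This is absorbed into $\alpha_t\taumix\sqrt{\Cmoment}\eps_{\max}/\kappa$, possibly with a $p$ factor that is harmless since the statement in \Cref{lem:Delta_with_remainder} carries $p^2$ anyway. Here we use that $\Cmoment\gtrsim C_1\ge1$ and $\eps_{\max}\ge1$.

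\textbf{Main obstacle.} The delicate point is the summation by parts with matrix weights. It relies on the identity $\Gamma_{j+1:t}=\Gamma_{j+2:t}(I-\alpha_{j+1}B)$ with the factors in the correct order, and on \Cref{lem:contraction} giving $\|\Gamma_{m:n}\|\le\gprod{m}{n}$ for the non-symmetric $B$. I would first verify the ordering: since the product is taken with later indices on the left, the factor $(I-\alpha_{j+1}B)$ sits on the right of $\Gamma_{j+2:t}$. Once that is confirmed, the remaining steps are routine applications of \Cref{lem:rate_of_convergence}.
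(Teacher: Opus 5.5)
Your proposal is correct and is essentially the paper's proof. It uses the same split into $S_t^{\beps}$ and $S_t^{E}$ and the same matrix-weighted summation by parts. That produces a boundary term, a coefficient-difference sum with $\|\alpha_{j+1}\Gamma_{j+2:t}-\alpha_j\Gamma_{j+1:t}\|\lesssim\alpha_j^2\gprod{j+2}{t}$, and an increment sum handled through \eqref{eq:Delta_Delt}, all closed by Minkowski, \Cref{thm:moments} and \Cref{lem:rate_of_convergence}. The ordering issue you flag is vacuous, since every factor $I-\alpha_j B$ is a polynomial in the single matrix $B$, so the factors commute. Your explicit bound $\PE^{1/p}[\|\Delta_j\|^p]\lesssim p\,\Cmoment^{1/4}\alpha_j^{1/2}$ is more careful than the paper, which writes $\sqrt{\Cmoment}$ there and thereby hides the same extra factor $p$; as you note, that factor is harmless because of the $p^2$ already present in \Cref{lem:Delta_with_remainder}.
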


\begin{proof}
We split the Markovian remainder into two parts
\begin{align}
S_t^{\beps}
&:=
\sum_{j=0}^{t}
\alpha_j\Gamma_{j+1:t}\beps_j^{(1)}\eqsp,\quad
S_t^{E}=
\sum_{j=0}^{t}
\alpha_j\Gamma_{j+1:t}E_j^{(1)}\Delta_j \eqsp.
\end{align}
By Minkowski's inequality, it is enough to bound \(S_t^{\beps}\) and \(S_t^{E}\) separately.
For \(S_t^{\beps}\), we use the Poisson decomposition
\(
\beps_j^{(1)}
=
\totMKQ \pois{\beps}{j-1}
-
\totMKQ \pois{\beps}{j}
\)
and apply the discrete integration-by-parts identity in matrix form from \Cref{lem:discret_ibp}, with
\(
u_j = \alpha_j\Gamma_{j+1:t}
\)
and
\(
A_j = \totMKQ \pois{\beps}{j}
\). This gives
\begin{align}
S_t^{\beps}
&=
-\alpha_t\totMKQ \pois{\beps}{t}
+
\sum_{j=0}^{t-1}
\big(
\alpha_{j+1}\Gamma_{j+2:t}
-
\alpha_j\Gamma_{j+1:t}
\big)
\totMKQ \pois{\beps}{j}.
\end{align}
We now analyze the coefficient term:
\begin{align}
\big\| \alpha_j\Gamma_{j+1:t} - \alpha_{j+1}\Gamma_{j+2:t}\big\|
&=
\big\|\alpha_j
\big(I -\alpha_{j+1}\Phi^\top D_{\mu}(I - \gamma\MKQ^{\pi^{\lambda}_{\theta^\star}})\Phi\big)
\Gamma_{j+2:t}
-
\alpha_{j+1}\Gamma_{j+2:t}\big\|\\
&\leq
\big\|
(\alpha_j - \alpha_{j+1})I
-
\alpha_j\alpha_{j+1}
\Phi^\top D_{\mu}(I - \gamma\MKQ^{\pi^{\lambda}_{\theta^\star}})\Phi
\big\|
\big\|\Gamma_{j+2:t}\big\|\\
&\leq 3\alpha_j^2\gprod{j+2}{t} \eqsp,
\end{align}
where we used \Cref{lem:alpha_delta} and estimation \eqref{eq:jacobian_at_optioma}. Therefore,
\begin{align}
\PE^{1/p}\big[\|S_t^{\beps}\|^p\big]
&\lesssim
\taumix\eps_{\max}\alpha_t
+
\taumix\eps_{\max}
\sum_{j=0}^{t-1}
\alpha_j^2\gprod{j+2}{t} 
\lesssim
\frac{\alpha_t\taumix\eps_{\max}}{\kappa}
\eqsp.
\end{align}

We now consider \(S_t^{E}\). Using the Poisson decomposition
\(
E_j^{(1)}
=
\totMKQ \pois{E}{j-1}
-
\totMKQ \pois{E}{j}
\)
and applying \Cref{lem:discret_ibp} with
\(
u_j = \alpha_j\Gamma_{j+1:t}\Delta_j
\)
and
\(
A_j = \totMKQ \pois{E}{j}
\),
we obtain
\begin{align}
S_t^{E}
&=
-\alpha_t\totMKQ \pois{E}{t}\Delta_t 
+
\sum_{j=0}^{t-1}
\big(
\alpha_{j+1}\Gamma_{j+2:t}
-
\alpha_j\Gamma_{j+1:t}
\big)
\totMKQ \pois{E}{j}\Delta_j \\
&\quad+
\sum_{j=0}^{t-1}
\alpha_{j+1}\Gamma_{j+2:t}
\totMKQ \pois{E}{j}
(\Delta_{j+1}-\Delta_j).
\end{align}
Using \Cref{lem:E_bond}, \Cref{thm:moments}, and the coefficient bound above, the first two terms are bounded as
\begin{align}
&\alpha_t\PE^{1/p}\big[\|\totMKQ \pois{E}{t}\Delta_t\|^p\big]
+
\PE^{1/p}\Big[
\big\|
\sum_{j=0}^{t-1}
\big(
\alpha_{j+1}\Gamma_{j+2:t}
-
\alpha_j\Gamma_{j+1:t}
\big)
\totMKQ \pois{E}{j}\Delta_j
\big\|^p
\Big] \\
&\quad\lesssim
\taumix\alpha_t
\PE^{1/p}\big[\|\Delta_t\|^p\big]
+
\taumix
\sum_{j=0}^{t-1}
\alpha_j^2\gprod{j+2}{t}
\PE^{1/p}\big[\|\Delta_j\|^p\big] \\
&\quad\lesssim
\taumix\sqrt{\Cmoment}\alpha_t\sqrt{\alpha_t}
+
\taumix\sqrt{\Cmoment}
\sum_{j=0}^{t-1}
\alpha_j^2\gprod{j+2}{t}\sqrt{\alpha_j}
\leq \frac{\alpha_t\taumix\sqrt{\Cmoment}}{\kappa}\eqsp.
\end{align}
For the last term, recalling from \eqref{eq:Delta_Delt} that
\[
    \|\Delta_{j+1} - \Delta_j\| \leq 3\alpha_j\|\Delta_j\| + \alpha_j\eps_{\max}\eqsp,
\]
we get
\begin{align}
&\PE^{1/p}\Big[
\big\|
\sum_{j=0}^{t-1}
\alpha_{j+1}\Gamma_{j+2:t}
\totMKQ \pois{E}{j}
(\Delta_{j+1}-\Delta_j)
\big\|^p
\Big] \\
&\quad\lesssim
\taumix
\sum_{j=0}^{t-1}
\alpha_{j+1}\gprod{j+2}{t}
\PE^{1/p}\big[\|\Delta_{j+1}-\Delta_j\|^p\big] \\
&\quad\lesssim
\taumix
\sum_{j=0}^{t-1}
\alpha_j^2\gprod{j+2}{t}
\PE^{1/p}\big[\|\Delta_j\|^p\big]
+
\taumix\eps_{\max}
\sum_{j=0}^{t-1}
\alpha_j^2\gprod{j+2}{t} \\
&\quad\lesssim
\taumix\sqrt{\Cmoment}
\sum_{j=0}^{t-1}
\alpha_j^2\gprod{j+2}{t}\sqrt{\alpha_j}
+
\taumix\eps_{\max}
\sum_{j=0}^{t-1}
\alpha_j^2\gprod{j+2}{t}
\leq \frac{\alpha_t\taumix\sqrt{\Cmoment}\eps_{\max}}{\kappa}\eqsp.
\end{align}
Combining the bounds for \(S_t^{\beps}\) and \(S_t^E\) completes the proof.
\end{proof}

\subsection{One-Step Contraction Analysis}
\begin{lemma}\label{lem:contraction}
Assume that \Cref{assum:regularity}--\Cref{assum:features} hold. Then, for any \(\alpha\in(0, {\kappa/8})\), 
\begin{align}
\| I - \alpha\, \Phi^\top D_\mu (I - \gamma \MKQ^{\pi^{\lambda}_{\theta^\star}}) \Phi \|
\leq 1 - \alpha\kappa/4\eqsp.
\end{align}
\end{lemma}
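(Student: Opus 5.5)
Set $A := \Phi^\top D_\mu (I - \gamma \MKQ^{\pi^{\lambda}_{\theta^\star}})\Phi$, so that $A = -\nabla \bar F(\theta^\star)$ by \eqref{eq:jacobian_formula}. The plan is the standard one: combine a lower bound on the symmetric part of $A$ with an upper bound on $\|A\|$, and expand $\|(I-\alpha A)x\|^2$ for a unit vector $x$:
\[
\|(I-\alpha A)x\|^2 = 1 - 2\alpha\, x^\top A x + \alpha^2\|Ax\|^2 .
\]

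\textbf{Step 1: monotonicity.} We need $x^\top A x \ge \tfrac{\kappa}{2}\|x\|^2$. I would get this by differentiating \Cref{lem:strong_convexity}. Put $\theta = \theta^\star + h x$ and divide by $h^2$. Since $\bar F$ is differentiable with Lipschitz Jacobian (\Cref{lem:F_bar_smooth}), letting $h\to 0$ gives $\langle x, \nabla\bar F(\theta^\star)x\rangle \le -\tfrac{\kappa}{2}\|x\|^2$, which is $x^\top A x \ge \tfrac{\kappa}{2}\|x\|^2$.

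As a cross-check, a direct computation also works. Write $x^\top A x = \PE_\mu[(\phi^\top x)^2] - \gamma\,\PE_\mu[\phi(s,a)^\top x\,\PE_{a'\sim\pi^\lambda_{\theta^\star}(\cdot|s')}\phi(s',a')^\top x]$. Then apply $\gamma ab \le \tfrac12 a^2 + \tfrac{\gamma^2}{2}b^2$, Jensen, and \Cref{assum:features}. This relies on $s'\sim\mu$ marginally, exactly as in the proof of \Cref{lem:strong_convexity}.

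\textbf{Step 2: norm bound.} We need $\|A\|\le 2$. This follows from \eqref{eq:jacobian_at_optioma}: there $\|\nabla F(\theta^\star,z)\|\le 1+\gamma\le 2$, and $A = -\PE_\mu[\nabla F(\theta^\star,z)]$. By Jensen for the norm, $\|A\|\le 2$, so $\|Ax\|^2\le 4$.

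\textbf{Step 3: combine.} For a unit vector $x$, Steps 1 and 2 give $\|(I-\alpha A)x\|^2 \le 1 - \alpha\kappa + 4\alpha^2$. We want this to be at most $(1-\alpha\kappa/4)^2 = 1 - \alpha\kappa/2 + \alpha^2\kappa^2/16$. It suffices to have $4\alpha^2 \le \alpha\kappa/2$, i.e.\ $\alpha\le\kappa/8$, which is exactly the stated range.

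\textbf{Main obstacle.} The delicate point is Step 1. One must make sure that the limiting argument, or the direct computation, uses the correct marginal of $s'$ under $\mu$, namely the state component of the stationary chain. In the direct route, one must also handle the cross term with the correct conditional law of $a'$. The differentiation route avoids these issues by reusing \Cref{lem:strong_convexity} as a black box, so I would present that route.
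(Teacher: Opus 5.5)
Your proposal is correct. Steps 2 and 3 match the paper's argument. The paper expands $(I-\alpha G)^\top(I-\alpha G) = I - \alpha(G+G^\top) + \alpha^2 G^\top G$, which is your expansion of $\|(I-\alpha A)x\|^2$ in matrix form. It then bounds $\|G\|\le 1+\gamma\le 2$ from $G = \PE[\phi(s,a)(\phi(s,a)-\gamma\phi(s',a'))^\top]$, and uses $\alpha<\kappa/8$ to get $\sqrt{1-\alpha\kappa/2}\le 1-\alpha\kappa/4$.

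The real difference is Step 1.

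\textbf{The paper's route.} It proves $G+G^\top \succeq \Sigma_\phi^{\pi_b}-\gamma^2\Sigma_\phi^{\theta^\star}\succeq \kappa I$ directly. It uses the Loewner inequality $\gamma(uv^\top+vu^\top)\preceq \gamma^2uu^\top+vv^\top$ and \Cref{assum:features} at the single point $\theta=\theta^\star$. This is exactly your ``cross-check'', written at the matrix level rather than for the quadratic form.

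\textbf{Your route.} You obtain $x^\top A x\ge \frac{\kappa}{2}\|x\|^2$ as the infinitesimal version of \Cref{lem:strong_convexity}, and this is valid. It needs only two facts: $\bar F$ is differentiable at $\theta^\star$, and $\nabla\bar F(\theta^\star) = -A$ by \eqref{eq:jacobian_formula}. The Lipschitz Jacobian of \Cref{lem:F_bar_smooth} is more than you need.

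\textbf{What each buys.} Your version is shorter. It also makes clear that the contraction of the linearization is inherited from strong monotonicity of the nonlinear field $\bar F$. The paper's direct computation is self-contained for the linear object and needs the covariance domination only at $\theta^\star$. Invoking \Cref{lem:strong_convexity} as a black box formally uses it along whole segments.

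\textbf{A small correction to your ``main obstacle'' remark.} The differentiation route does not actually avoid identifying the $s'$-marginal of $\mu$ with the state marginal. It just delegates that step to the proof of \Cref{lem:strong_convexity}, which relies on the same stationarity fact.
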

\begin{proof}
Consider the following measure on \(\mathcal{Z}\times\cS\):
\[
(s,a,s') \sim \mu\eqsp, \quad a' \sim \pi^{\lambda}_{\theta^\star}(\,\cdot \mid s')\eqsp.
\]
In what follows, expectations are taken with respect to this measure. Recall that in \eqref{eq:G_definition} we introduced \(G= \Phi^\top D_\mu (I - \gamma \MKQ^{\pi^{\lambda}_{\theta^\star}}) \Phi\). Begin with the following representation:
\begin{align}
\label{eq:G_prob_repr}
    G = \Phi^\top D_\mu(I-\gamma \MKQ^{\pi^{\lambda}_{\theta^\star}}) \Phi = \PE [\phi(s,a)\left(\phi(s,a) - \gamma\phi(s',a')\right)^\top]\eqsp.
\end{align}
Rather than directly estimating $\|I - \alpha G\|$, we control the quantity \(\|(I - \alpha G)^\top(I-\alpha G) \|\).   Note that 
\begin{align}\label{eq:open_brackets}
0\preceq (I - \alpha G)^\top(I-\alpha G) = I - \alpha(G + G^\top - \alpha G^\top G)= I - \alpha B\eqsp.
\end{align}
We now derive a lower bound on $G + G^\top$. By \eqref{eq:G_prob_repr}, we have
\begin{align}
G + G^\top
&=
\PE\left[\phi(s, a)\bigl(\phi(s,a)-\gamma \phi(s', a')\bigr)^\top
+
\bigl(\phi(s,a)-\gamma \phi(s', a')\bigr)\phi(s,a)^\top \right]\\
&=
\PE\left[2\phi(s,a)\phi(s,a)^\top
-
\gamma\Bigl(
\phi(s,a)\phi(s',a')^\top
+
\phi(s',a')\phi(s,a)^\top
\Bigr) \right]\\
&\succeq  \PE\left[\phi(s,a)\phi(s,a)^\top\right] - \gamma^2 \PE[\phi(s',a')\phi(s',a')^\top]\\
&= \Sigma_{\phi}^{\pi_b} - \gamma^2\Sigma_{\phi}^{{\theta^\star}}\eqsp.
\end{align}
where we used an elementary inequality 
\begin{align}\label{eq:vector_cauchy}
\gamma(uv^\top + vu^\top)
\;\preceq\;
\gamma^2uu^\top + vv^\top,
\qquad \forall u,v \in \mathbb{R}^d\eqsp.
\end{align}
Using the representation \eqref{eq:G_prob_repr}, we obtain
\begin{align}
    \|G\|&= \big\|\PE [\phi(s,a)\left(\phi(s,a) - \gamma\phi(s',a')\right)^\top]\big\|\\
    &\leq \PE[\|\phi(s,a)\left(\phi(s,a) - \gamma\phi(s',a')\right)^\top\|]\\
    &\leq \PE[\|\phi(s,a)\phi(s,a)^\top\|] + \gamma \PE[\|\phi(s,a)\phi(s',a')^\top\|]\\
    &\leq 1 + \gamma\eqsp,
\end{align}
where the last inequality uses \(\|\phi(s,a)\|\leq 1\). Hence, \(\|G^\top G\|\leq 4\),  and therefore
\begin{align}
    0&\preceq (I - \alpha G)^\top(I-\alpha G)\\
    &\preceq  I - \alpha\big(\Sigma_{\phi}^{\pi_b} - \gamma^2\Sigma_{\phi}^{{\theta^\star}}\big) +4\alpha^2 I \\
    &\overset{(a)}{\preceq} I - \alpha\kappa I+4\alpha^2 I \\
    &\overset{(b)}{\preceq} I - \alpha\frac{\kappa}{2}I\eqsp,
\end{align}
where \((a)\) follows from \Cref{assum:features}, and \((b)\) from the restriction \(\alpha\in(0, \kappa/8)\). Finally,
\begin{align}
   \|I - \alpha G\| = \|(I - \alpha G)^\top(I-\alpha G) \|^{1/2}\leq \sqrt{1-\alpha\kappa/2} \leq 1 - \alpha\kappa/4\eqsp.
\end{align}

\end{proof}

\section{Gaussian Comparison and Anti-concentration}
\begin{proposition}[Proposition 1 in \cite{sheshukova2025gaussian}]
\label{lem:shao}
Let \(\nu\) be the standard Gaussian measure in \(\mathbb{R}^d\). Then for any random vectors \(X, Y\) taking values in \(\mathbb{R}^d\), and any \(p \ge 1\),
\begin{align}
\sup_{B \in \mathcal{C}(\mathbb{R}^d)}
\big|
\mathbb{P}(X + Y \in B) - \nu(B)
\big|
&\le
\sup_{B \in \mathcal{C}(\mathbb{R}^d)}
\big|
\mathbb{P}(X \in B) - \nu(B)
\big| 
+ 2 c_d^{\frac{p}{p+1}}
\mathbb{E}^{\frac{1}{p+1}}\!\left[\|Y\|^p\right],
\end{align}
where \(c_d\) is the isoperimetric constant of the class \(\mathcal{C}(\mathbb{R}^d)\) \cite{ball1993reverse}.
\end{proposition}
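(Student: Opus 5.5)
This is a smoothing argument. I will argue with the perturbed vector directly and transfer the convex-set discrepancy through enlargements and shrinkages of $B$.

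Fix a convex $B$ and $\varepsilon>0$. Let $B^{\varepsilon}=\{x:\operatorname{dist}(x,B)\le\varepsilon\}$ be the outer parallel set, and $B^{-\varepsilon}=\{x: \text{the ball } \bar B(x,\varepsilon)\subset B\}$ the inner one. Both are convex.

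On the event $\{\|Y\|\le\varepsilon\}$ we have two inclusions:
\[
\{X+Y\in B\}\subset\{X\in B^{\varepsilon}\},\qquad \{X\in B^{-\varepsilon}\}\subset\{X+Y\in B\}.
\]

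For the upper bound, these give
\[
\mathbb{P}(X+Y\in B)\le \mathbb{P}(X\in B^{\varepsilon})+\mathbb{P}(\|Y\|>\varepsilon).
\]
Set $\delta=\sup_{C\in\mathcal{C}(\rset^d)}|\mathbb{P}(X\in C)-\nu(C)|$. Then $\mathbb{P}(X\in B^{\varepsilon})\le \nu(B^{\varepsilon})+\delta$. We bound the Gaussian mass of the shell by Ball's isoperimetric constant: $\nu(B^{\varepsilon}\setminus B)\le c_d\varepsilon$, and likewise $\nu(B\setminus B^{-\varepsilon})\le c_d\varepsilon$, as in \cite{ball1993reverse}.

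The lower bound is symmetric. We get $\mathbb{P}(X+Y\in B)\ge \nu(B^{-\varepsilon})-\delta-\mathbb{P}(\|Y\|>\varepsilon)$. Combining the two sides,
\[
|\mathbb{P}(X+Y\in B)-\nu(B)|\le \delta + c_d\varepsilon+\mathbb{P}(\|Y\|>\varepsilon).
\]

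Now apply Markov's inequality, $\mathbb{P}(\|Y\|>\varepsilon)\le \mathbb{E}[\|Y\|^p]\varepsilon^{-p}$, and optimize over $\varepsilon$. The choice $\varepsilon=(\mathbb{E}[\|Y\|^p]/c_d)^{1/(p+1)}$ equalizes the two terms. Each then equals $c_d^{p/(p+1)}\mathbb{E}^{1/(p+1)}[\|Y\|^p]$, which gives the stated factor $2$. If $\mathbb{E}\|Y\|^p=0$, let $\varepsilon\to0$.

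The only delicate point is the isoperimetric shell estimate for the inner parallel set. I would handle it via the standard fact that the Gaussian surface area of convex sets is at most $c_d$ (with $c_d\lesssim d^{1/4}$), integrated along the parallel sets. The inner family $B^{-s}$ has boundaries of convex sets, so the same surface-area bound applies. Integrating over $s\in[0,\varepsilon]$ (coarea) yields $\nu(B\setminus B^{-\varepsilon})\le c_d\varepsilon$.
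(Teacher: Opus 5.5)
Your argument is correct and is essentially the standard proof behind Proposition~1 of \cite{sheshukova2025gaussian}, which the paper cites without reproducing. That proof sandwiches $\{X+Y\in B\}$ between $\{X\in B^{-\varepsilon}\}$ and $\{X\in B^{\varepsilon}\}$ on $\{\|Y\|\le\varepsilon\}$, pays $c_d\varepsilon$ for the Gaussian shells, bounds $\mathbb{P}(\|Y\|>\varepsilon)$ by Markov's inequality, and balances $\varepsilon$. The point you flag as delicate is usually built into the definition of $c_d$ as the smallest constant with $\nu(B^{\varepsilon}\setminus B)\le c_d\varepsilon$ and $\nu(B\setminus B^{-\varepsilon})\le c_d\varepsilon$ for all convex $B$; your coarea argument is exactly how Ball's surface-area bound transfers to that constant.
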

 For the class of convex sets \(\mathcal{C}(\mathbb{R}^d)\), the isoperimetric constant satisfies
 \begin{align}
     e^{-1}\log^{1/2} d\leq c_d\leq 4d^{1/4}\eqsp.
 \end{align}
\begin{lemma}[Lemma 1 in \cite{sheshukova2025gaussian}]
\label{lem:convex_dist}
Assume \Cref{assum:regularity}--\Cref{assum:features}. Let \(Z \sim \mathcal{N}(0, I_d)\). Then the convex distance between the distributions of \(\Sigma_n^{1/2} Z\) and \(\Sigma_\infty^{1/2} Z\) satisfies
\begin{align}
\dconv\left(\Sigma_n^{1/2} Z, \Sigma_\infty^{1/2} Z\right)
\le
C_\infty n^{\omega - 1} \eqsp.
\end{align}
\end{lemma}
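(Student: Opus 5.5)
The plan is to reduce the convex-distance comparison of two centered Gaussians to a comparison of their covariance matrices, and then to control $\Sigmabf_n-\Sigmabf_\infty$ through the deviation of the matrices $\mathrm{Q}_j$ from $G^{-1}$.

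\textbf{Step 1 (reduction to covariances).} Since $\dconv$ is dominated by total variation, I would use the standard Gaussian total-variation bound
\[
\tvdist\bigl(\mathcal N(0,\Sigmabf_n),\mathcal N(0,\Sigmabf_\infty)\bigr)\lesssim \bigl\|\Sigmabf_\infty^{-1/2}\Sigmabf_n\Sigmabf_\infty^{-1/2}-I\bigr\|_{F}\le \sqrt d\,\|\Sigmabf_\infty^{-1}\|\,\|\Sigmabf_n-\Sigmabf_\infty\| .
\]
An alternative route is Pinsker's inequality applied to the Gaussian KL divergence. This step needs $\Sigmabf_\infty$ to be nondegenerate, i.e. $\Sigmabf_{\beps}\succ 0$. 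I would treat $\lambda_{\min}(\Sigmabf_{\beps})^{-1}$ and $\|G^{-1}\|$ as problem-dependent constants absorbed into $C_\infty$. Note that $\|G^{-1}\|\le 2/\kappa$ by \Cref{lem:strong_convexity}/\Cref{lem:contraction}. It therefore suffices to show $\|\Sigmabf_n-\Sigmabf_\infty\|\lesssim_{pr} n^{\omega-1}$. Writing $\mathrm{Q}_j=G^{-1}+\Delta\mathrm{Q}_j$, we have
\[
\Sigmabf_n-\Sigmabf_\infty=\frac1n\sum_{j=0}^{n-1}\bigl(\Delta\mathrm{Q}_j\Sigmabf_{\beps}G^{-\top}+G^{-1}\Sigmabf_{\beps}\Delta\mathrm{Q}_j^\top+\Delta\mathrm{Q}_j\Sigmabf_{\beps}\Delta\mathrm{Q}_j^\top\bigr).
\]
Hence it is enough to bound $\frac1n\sum_j\|\Delta\mathrm{Q}_j\|$, together with $\sup_j\|\Delta\mathrm{Q}_j\|\lesssim 1$.

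\textbf{Step 2 (the key identity for $\mathrm{Q}_j$).} All factors of $\Gamma_{m:k}$ are polynomials in the single matrix $G$, so they commute. This gives the telescoping relation $\alpha_{k+1}G\Gamma_{j+1:k}=\Gamma_{j+1:k}-\Gamma_{j+1:k+1}$. Splitting $\alpha_j=\alpha_{k+1}+(\alpha_j-\alpha_{k+1})$ inside $\mathrm{Q}_j=\sum_{k=j}^{n-1}\alpha_j\Gamma_{j+1:k}$ then yields
\[
\mathrm{Q}_j=G^{-1}\bigl(I-\Gamma_{j+1:n}\bigr)+\sum_{k=j}^{n-1}(\alpha_j-\alpha_{k+1})\Gamma_{j+1:k}.
\]
By \Cref{lem:contraction}, $\|\Gamma_{j+1:k}\|\le\gprod{j+1}{k}$. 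By \Cref{lem:alpha_delta} iterated, $|\alpha_j-\alpha_{k+1}|\lesssim (k+1-j)\alpha_j^2$ for $k-j\lesssim (j+k_0)$, and the complementary range is killed by the geometric decay of $\gprod{j+1}{k}$. Summing the arithmetico-geometric series with rate $\kappa\alpha_j/4$ gives
\[
\Bigl\|\sum_{k}(\alpha_j-\alpha_{k+1})\Gamma_{j+1:k}\Bigr\|\lesssim \alpha_j^2/(\kappa\alpha_j)^2\cdot\frac{\alpha_j}{(j+k_0)}\cdot\frac{1}{\alpha_j}\lesssim_{pr}\frac{1}{\alpha_j (j+k_0)}\asymp (j+k_0)^{\omega-1}.
\]
Therefore $\|\Delta\mathrm{Q}_j\|\lesssim_{pr}\gprod{j+1}{n}+(j+k_0)^{\omega-1}$, which is also uniformly bounded.

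\textbf{Step 3 (averaging).} The second contribution gives $\frac1n\sum_{j<n}(j+k_0)^{\omega-1}\lesssim n^{\omega-1}/\omega$. For the transient part, $\frac1n\sum_{j<n}\gprod{j+1}{n}\lesssim \frac{1}{n\kappa\alpha_n}\asymp_{pr} n^{\omega-1}$, obtained by comparing the sum with $\sum_j\alpha_j\gprod{j+1}{n}\lesssim 1/\kappa$ as in \Cref{lem:rate_of_convergence}, using that $\alpha_j\ge\alpha_n$. Combining with Step 1 gives the claim, with $C_\infty$ collecting $\sqrt d$, $\kappa^{-1}$, $\|\Sigmabf_{\beps}\|$, $\|\Sigmabf_\infty^{-1}\|$ and the step-size constants.

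The main obstacle I expect is Step 2, namely the careful bound on $\sum_k(\alpha_j-\alpha_{k+1})\Gamma_{j+1:k}$ uniformly in $j$. It requires splitting $k$ into a near range, where the linearization of $\alpha$ holds, and a far range, where the contraction dominates. The remaining work is tracking constants so that the bound is exactly $(j+k_0)^{\omega-1}$ and not worse.
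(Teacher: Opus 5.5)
Your overall route matches the argument the paper delegates to \cite[Appendix~D.1]{sheshukova2025gaussian}; the paper itself gives no further detail. You dominate $\dconv$ by the Gaussian total-variation distance and reduce to $\sqrt d\,\|\Sigmabf_\infty^{-1}\|\,\|\Sigmabf_n-\Sigmabf_\infty\|$. You then control $\Sigmabf_n-\Sigmabf_\infty$ through the identity $\mathrm{Q}_j=G^{-1}(I-\Gamma_{j+1:n})+\sum_{k=j}^{n-1}(\alpha_j-\alpha_{k+1})\Gamma_{j+1:k}$, followed by averaging. You are also right that this needs $\Sigmabf_{\beps}\succ 0$, which the paper uses tacitly.

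Step~1, the identity and Step~3 are fine. In Step~3, \Cref{lem:rate_of_convergence} excludes $q=1$. The telescoping identity $\sum_{j=0}^{n}\tfrac{\kappa\alpha_j}{4}\gprod{j+1}{n}=1-\gprod{0}{n}$ gives $\sum_j\alpha_j\gprod{j+1}{n}\le 4/\kappa$ directly, so that is easily covered.

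The gap is in the step you identify as the crux. Iterating \Cref{lem:alpha_delta} only gives $\alpha_j-\alpha_{k+1}\lesssim (k+1-j)\alpha_j^2$. With that increment, the arithmetico-geometric sum is of order $\alpha_j^2(\kappa\alpha_j)^{-2}=\kappa^{-2}$, which does not decay in $j$. You would then get only $\|\Sigmabf_n-\Sigmabf_\infty\|=O(1)$ and no rate. Your displayed chain is also internally inconsistent: as written it simplifies to $1/(\kappa^2(j+k_0))$, not to the order you state.

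What you need is the sharper mean-value bound for $x\mapsto c_0(x+k_0)^{-\omega}$:
\[
0\le \alpha_j-\alpha_{k+1}\le \omega c_0\,(k+1-j)\,(j+k_0)^{-\omega-1}=\frac{\omega\,(k+1-j)\,\alpha_j}{j+k_0}\,,\qquad k\ge j\,.
\]
This is smaller than $(k+1-j)\alpha_j^2$ by exactly the factor $(j+k_0)^{\omega-1}/c_0$ that produces the rate. Combined with $\sum_{k\ge j}(k+1-j)\gprod{j+1}{k}\lesssim(\kappa\alpha_j)^{-2}$, it gives $\bigl\|\sum_k(\alpha_j-\alpha_{k+1})\Gamma_{j+1:k}\bigr\|\lesssim \omega\,\kappa^{-2}c_0^{-1}(j+k_0)^{\omega-1}$.

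This increment bound holds for every $k\ge j$, so the near/far split is needed only for the contraction sum. In the far range the decay of $\gprod{j+1}{k}$ is stretched-exponential rather than geometric, but it is still summable because $\kappa c_0k_0^{1-\omega}\ge 32$.

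Alternatively, you can avoid the split entirely. Write $\alpha_j-\alpha_{k+1}=\sum_{i=j}^{k}(\alpha_i-\alpha_{i+1})$ and swap the sums. Then use $(\alpha_i-\alpha_{i+1})/\alpha_i\le\omega/(i+k_0)$ and $\sum_{k\ge i}\gprod{i+1}{k}\lesssim(\kappa\alpha_i)^{-1}$. With this repair your proof goes through.
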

\begin{proof}
The proof is identical to that of \cite[Appendix D.1]{sheshukova2025gaussian} and is therefore omitted.
\end{proof}
\begin{lemma}[Lemma 3 in \cite{sheshukova2025gaussian}]
\label{lem_marina_top}
Assume \Cref{assum:regularity}--\Cref{assum:features} hold. Then for any \(i \in \{0,\dots,n-1\}\),
\begin{align}
\lambda_{\max}(Q_i) \le C_Q \eqsp.
\end{align}
 Moreover,
\begin{align}
\lambda_{\min}(Q_i) \ge C_Q^{\min},
\qquad
\|\Sigma_n^{-1/2}\| \le C_\Sigma \eqsp,
\end{align}
where the matrix \(\Sigma_n\) is defined in \eqref{eq:Sigma_n_def}.
\end{lemma}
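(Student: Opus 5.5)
We follow \cite[Lemma~3]{sheshukova2025gaussian}, with $G=\Phi^\top D_\mu(I-\gamma\MKQ^{\pi^\lambda_{\theta^\star}})\Phi$ in place of the Hessian. Since $Q_i$ is not symmetric, we read $\lambda_{\max}(Q_i)$ and $\lambda_{\min}(Q_i)$ as the largest and smallest singular values, i.e.\ $\|Q_i\|$ and $\|Q_i^{-1}\|^{-1}$. One caveat drives the second claim. For $i=n-1$ we have $Q_{n-1}=\alpha_{n-1}I$, so no $n$-independent lower bound can hold for every $i$. We therefore prove the bound with the explicit constant $C_Q^{\min}:=c\,\alpha_{n-1}$, which depends on $n$, and obtain $C_\Sigma$ free of $n$ by averaging.

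\textbf{Upper bound.} The proof of \Cref{lem:Delta_with_remainder} records $\|\Gamma_{m:t}\|\le\gprod{m}{t}$, which comes from \Cref{lem:contraction} under \Cref{assum:steps}. Hence
\[
\|Q_i\|\le\alpha_i\sum_{t=i}^{n-1}\gprod{i+1}{t}.
\]
Since $\alpha$ is nonincreasing and varies slowly (\Cref{lem:alpha_delta}), $\gprod{i+1}{t}\le\exp\big(-\tfrac{\kappa}{4}\sum_{j=i+1}^{t}\alpha_j\big)$ and $\alpha_t\asymp\alpha_i$ on the range where this factor is non-negligible. Comparing the sum with $\int_0^\infty e^{-\kappa s/4}\,ds$ gives $\|Q_i\|\le C_Q\asymp 1/\kappa$, uniformly in $i$ and $n$.

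\textbf{Lower bound.} We use the telescoping identity $\alpha_t G\,\Gamma_{i+1:t-1}=\Gamma_{i+1:t-1}-\Gamma_{i+1:t}$, valid because all factors are polynomials in $G$ and therefore commute. Writing $\alpha_i=\alpha_t+(\alpha_i-\alpha_t)$, we get
\[
G\,Q_i=\frac{\alpha_i}{\alpha_{n-1}}\,\big(I-\Gamma_{i:n-1}\big)+\mathcal{E}_i .
\]
The error $\mathcal{E}_i$ collects the step-size variations. Its norm is at most $\sum_t|\alpha_i-\alpha_t|\,\alpha_t^{-1}\,\|G\|\,\alpha_t\,\gprod{i+1}{t-1}$. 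By the same exponential-sum estimate and \Cref{lem:alpha_delta}, this is $O(\alpha_i/\kappa)$, which is small once $\alpha_0\lesssim\kappa^2$. This is again ensured by \Cref{assum:steps}, possibly after enlarging $k_0$.

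We then split into two regimes.
\begin{itemize}
\item If $(n-i)\alpha_{n-1}\ge 8/\kappa$, then $\|\Gamma_{i:n-1}\|\le e^{-2}$, so $\sigma_{\min}(I-\Gamma_{i:n-1})\ge 1/2$. This gives $\sigma_{\min}(Q_i)\ge 1/(4\|G\|)\ge 1/8$.
\item Otherwise, for every $v$ we use $\langle v,Q_iv\rangle=\alpha_i\sum_t\langle v,\Gamma_{i+1:t}v\rangle$. By \Cref{lem:contraction}, $\langle v,\Gamma_{i+1:t}v\rangle\ge(1-\tfrac{\kappa}{8}\cdots)\|v\|^2\ge\tfrac12\|v\|^2$ for all $t$ in this short window. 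Hence $\sigma_{\min}(Q_i)\ge\tfrac12(n-i)\alpha_{n-1}$.
\end{itemize}
Together, $\sigma_{\min}(Q_i)\ge c\min\{1,(n-i)\alpha_{n-1}\}\ge c\,\alpha_{n-1}=C_Q^{\min}$ for every $i$.

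\textbf{Bound on $\Sigma_n^{-1/2}$.} From \eqref{eq:Sigma_n_def}, $\Sigma_n\succeq\frac1n\sum_{i<n/2}Q_i\Sigmabf_{\beps}Q_i^\top$. Each such $i$ satisfies $(n-i)\alpha_{n-1}\gtrsim n^{1-\omega}\ge 8/\kappa$ for $n$ beyond a problem-dependent threshold, so the $n$-free bound $\sigma_{\min}(Q_i)\ge 1/8$ applies. This yields $\lambda_{\min}(\Sigma_n)\ge\lambda_{\min}(\Sigmabf_{\beps})/128$, i.e.\ $C_\Sigma\asymp\lambda_{\min}(\Sigmabf_{\beps})^{-1/2}$. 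For small $n$ one uses finitely many cases, absorbed into the constant. Non-degeneracy of $\Sigmabf_{\beps}$ is implicitly assumed, since $\Sigmabf_\infty^{-1/2}$ already appears in \Cref{thm:clt}.

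The main obstacle is controlling $\mathcal{E}_i$ uniformly. If the crude bound is not small enough, we would instead compare $Q_i$ with its constant-step analogue $\alpha_i\sum_t(I-\alpha_iG)^{t-i}$, which equals $G^{-1}(I-(I-\alpha_iG)^{n-i})$ exactly. We would then bound the difference by the perturbation argument of \cite[Appendix~D]{sheshukova2025gaussian}.
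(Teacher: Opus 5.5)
\textbf{Verdict.} Your reading of the statement improves on the paper, which gives no argument beyond citing \cite{sheshukova2025gaussian}, a setting with a symmetric Hessian. Here $G$ is not symmetric, so $\lambda_{\min},\lambda_{\max}$ must mean singular values. Also, $Q_{n-1}=\alpha_{n-1}I$ rules out an $n$-free $C_Q^{\min}$, and $\lambda_{\min}(\Sigmabf_{\beps})>0$ is indeed an unstated assumption. Your upper bound $\|Q_i\|\lesssim 1/\kappa$ is fine. The lower-bound argument, however, breaks in two places.

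\textbf{First gap: the identity and the size of $\mathcal{E}_i$.} The telescoping display is wrong. Using $\alpha_{t+1}G\Gamma_{i+1:t}=\Gamma_{i+1:t}-\Gamma_{i+1:t+1}$ gives
\[
GQ_i=I-\Gamma_{i+1:n}+\mathcal{E}_i,\qquad \mathcal{E}_i=\sum_{t=i}^{n-1}(\alpha_i-\alpha_{t+1})\,G\,\Gamma_{i+1:t},
\]
with no factor $\alpha_i/\alpha_{n-1}$; that factor would contradict $\|Q_i\|\le C_Q$ for small $i$. More importantly, the norm bound you write for $\mathcal{E}_i$ is not $O(\alpha_i/\kappa)$. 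Since $\alpha_i-\alpha_{t+1}\approx\omega\alpha_i(t+1-i)/(i+k_0)$ over a window of length $\asymp 1/(\kappa\alpha_i)$, the bound is of order $\omega\|G\|/(\kappa^2c_0(i+k_0)^{1-\omega})$. \Cref{assum:steps} only gives $c_0k_0^{1-\omega}\ge 32/\kappa$, so this is $O(1/\kappa)$, not small. Shrinking $\alpha_0$ through $c_0$ makes it worse. Smallness needs $c_0(i+k_0)^{1-\omega}\gg\|G\|/\kappa^2$, which is an extra condition.

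\textbf{Second gap: the short-window step is false.} Over a window with $(t-i)\alpha_{n-1}<8/\kappa$, the product $\Gamma_{i+1:t}$ can contract by $e^{-2}$ and, since $G$ is non-symmetric, also rotate. Nothing in \Cref{lem:contraction} or $\|G\|\le 2$ excludes $G=\tfrac{\kappa}{2}I+J$ with $J$ the $2\times 2$ rotation generator. For this $G$, $\langle v,\Gamma_{i+1:t}v\rangle\approx e^{-\kappa s/2}\cos(s)\|v\|^2<0$ near $s=\sum_{j=i+1}^{t}\alpha_j=\pi$. When $\sum_{j=i+1}^{n}\alpha_j\approx 2\pi$ one gets $\sigma_{\min}(Q_i)\approx\pi\kappa$ rather than $\ge\pi$. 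So $\sigma_{\min}(Q_i)\ge\frac12(n-i)\alpha_{n-1}$ fails by a factor $\kappa$.

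\textbf{Repair.} Drop the regime split and use $\sigma_{\min}(I-\Gamma_{i+1:n})\ge 1-\gprod{i+1}{n}\gtrsim\min\{1,\kappa(n-i)\alpha_n\}$ in the corrected identity. This gives $C_Q^{\min}\asymp\kappa\alpha_n$ once $\|\mathcal{E}_i\|$ is dominated; for small $i$, this is where a stronger condition on $k_0$ enters.

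For $C_\Sigma$ none of this is needed. For $n/4\le i<n/2$ one has $\|\mathcal{E}_i\|\lesssim n^{\omega-1}/(\kappa^2c_0)$ and $\gprod{i+1}{n}\le e^{-2}$ beyond a problem-dependent $n$. Hence $\sigma_{\min}(Q_i)\ge 1/8$ on a quarter of the indices, and $\lambda_{\min}(\Sigma_n)\ge\lambda_{\min}(\Sigmabf_{\beps})/256$ under \Cref{assum:steps} as stated.

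Equivalently, the same telescoping with $G^{-1}=\sum_{t\ge i}\alpha_{t+1}\Gamma_{i+1:t}$ gives $\frac1n\sum_i\|Q_i-G^{-1}\|\lesssim n^{\omega-1}$. Hence $\|\Sigma_n-\Sigmabf_\infty\|\lesssim n^{\omega-1}$. This yields $C_\Sigma$ from $\lambda_{\min}(\Sigmabf_\infty)>0$, and it is the same rate asserted in \Cref{lem:convex_dist}.
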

\begin{lemma}
\label{lem:clt:application}
Assume that \Cref{assum:regularity}--\Cref{assum:features} hold. Then, 
\begin{align}
    \dconv (W_n,\,\Sigmabf_n^{1/2}Z) 
    \lesssim
C_W\frac{\sqrt{d}\log n }{n^{1/4}} \eqsp,
\end{align}
where 
\begin{align}
    \operatorname{C}_W= \Bigl\{
M
(1-\gamma_{ps})^{-\frac{1}{4}}
\log^{\frac{1}{4}}
(
d 
)
\|\Sigma_n\|_{\mathrm{F}}^{\frac{1}{2}}
+
\sqrt{M}\,
\log^{\frac{1}{2}}\!\bigl(d\|\Sigma_n\|\bigr)\,
\|\Sigma_n\|_{\mathrm{F}}^{\frac{1}{4}}
\Biggr\}\eqsp, \quad M=\taumix\eps_{\max}C_Q\eqsp.
\end{align}
\end{lemma}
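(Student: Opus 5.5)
The plan is to apply a Berry--Esseen bound for martingales in convex distance, namely \cite[Corollary 4]{wu2025uncertainty}, to the representation $W_n = n^{-1/2}\sum_{j=0}^{n-1}\mathrm{Q}_j\beps_j^{(0)}$ from \eqref{eq:Q_j_definition}. We set $X_j := n^{-1/2}\mathrm{Q}_j\beps_j^{(0)}$. This is a martingale difference sequence with respect to $\mathcal{F}_j=\sigma(z_0,\dots,z_j)$, because $\beps_j^{(0)}=\pois{\varepsilon}{j}-\totMKQ\pois{\varepsilon}{j-1}$ and the weights $\mathrm{Q}_j$ are deterministic.

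The argument then reduces to checking the hypotheses of that corollary with explicit constants. There are three things to verify.

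\textbf{Step 1: uniform increment bound.} By \Cref{lem:E_bond}, $\|\beps_j^{(0)}\|\le 9\taumix\eps_{\max}$. By \Cref{lem_marina_top}, $\|\mathrm{Q}_j\|\le C_Q$. Together these give $\|\sqrt{n}X_j\|\lesssim M=\taumix\eps_{\max}C_Q$, which explains the constant $M$ in $\operatorname{C}_W$.

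\textbf{Step 2: conditional covariances.} I will write $\PE[\beps_j^{(0)}(\beps_j^{(0)})^\top\mid\mathcal{F}_{j-1}] = h(z_{j-1})$ for a bounded matrix-valued function $h$ with $\PE_\mu h=\Sigmabf_{\beps}$. The predictable quadratic variation is then $V_n=n^{-1}\sum_j \mathrm{Q}_j h(z_{j-1})\mathrm{Q}_j^\top$, and it has mean $\Sigmabf_n$ by stationarity (\Cref{assum:UGE}, $z_0\sim\mu$). This matches \eqref{eq:Sigma_n_def}.

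\textbf{Step 3: control of $\|V_n-\Sigmabf_n\|$.} This is the main obstacle, since the corollary requires bounding this deviation (in Frobenius or trace norm, in $L^1$ or $L^{p}$). I would first try a concentration inequality for additive functionals of a uniformly ergodic chain. One option is a Poisson-equation decomposition of $h(z_{j-1})-\Sigmabf_{\beps}$, following the same pattern as \Cref{lem:poisson_transform_uniform_bound} and \Cref{lem:discret_ibp}. The other is the spectral-gap (pseudo-spectral gap $\gamma_{ps}$) Bernstein bound of \cite{paulin2015concentration}. Summation by parts in $j$ handles the slowly varying weights $\mathrm{Q}_j\cdot\mathrm{Q}_j^\top$; their increments are $O(\alpha_j)$ by the same computation as in \Cref{lem:markovian_reminder_clt}. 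This should give $\|V_n-\Sigmabf_n\|_{\mathrm{F}}\lesssim M^2(1-\gamma_{ps})^{-1/2}\|\Sigmabf_n\|_{\mathrm{F}}\sqrt{\log d}/\sqrt{n}$. This is the source of the factor $(1-\gamma_{ps})^{-1/4}$ once a square root is taken inside the corollary.

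\textbf{Step 4: assembly.} I will also use the lower bound $\|\Sigmabf_n^{-1/2}\|\le C_\Sigma$ from \Cref{lem_marina_top}, which makes the normalization nondegenerate. Plugging the increment bound and the quadratic variation estimate into \cite[Corollary 4]{wu2025uncertainty} gives the two terms of $\operatorname{C}_W$:
\begin{itemize}
\item the term $M(1-\gamma_{ps})^{-1/4}\log^{1/4}(d)\|\Sigma_n\|_{\mathrm{F}}^{1/2}$, coming from the quadratic-variation mismatch;
\item the term $\sqrt{M}\log^{1/2}(d\|\Sigma_n\|)\|\Sigma_n\|_{\mathrm{F}}^{1/4}$, coming from the Lindeberg/third-moment part.
\end{itemize}
Both are multiplied by $\sqrt{d}\log n/n^{1/4}$, which is the rate one expects from martingale CLTs whose conditional variance is not deterministic.
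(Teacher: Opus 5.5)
Your proposal follows the paper's route: the paper also bounds $\|\mathrm{Q}_j\beps_j^{(0)}\|\lesssim \taumix\eps_{\max}C_Q = M$ via \Cref{lem:poisson_transform_uniform_bound} and \Cref{lem_marina_top}, and then invokes \cite[Corollary~4]{wu2025uncertainty} directly. Your Steps 2--3 are not needed, and the heuristic Frobenius bound there is not justified as written; that corollary is already formulated for Poisson-equation martingales of a Markov chain with a pseudo-spectral gap, so the quadratic-variation control is part of its conclusion, and the only extra hypothesis the paper checks is that \Cref{assum:UGE} gives $\gamma_{ps}\ge 1/(2\taumix)$ via \cite[Proposition~3.4]{paulin2015concentration}, which you should state explicitly.
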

\begin{proof}
Under \Cref{assum:UGE}, \cite[Proposition~3.4]{paulin2015concentration} implies that the Markov kernel \(\totMKQ\) admits a pseudo spectral gap \(\gamma_{ps}\) satisfying
\[
\gamma_{ps}\geq \frac{1}{2\taumix}\eqsp.
\]
By \Cref{lem_marina_top} and \Cref{lem:poisson_transform_uniform_bound}, we have the uniform bound
\[
\|Q_j\beps_j^{(0)}\|
\lesssim
\taumix\eps_{\max}C_Q .
\]
The claim then follows directly from \cite[Corollary~4]{wu2025uncertainty} with \(M=\taumix\eps_{\max}C_Q\).
\end{proof}
\section{Auxiliary Lemmas}
\subsection{Properties of step-size sequence}
\label{appendix:step_size}
For the proofs of \Cref{lem:rate_of_convergence}--\Cref{lem:P_alpha_ineq}, we refer the reader to
Appendix~I of \cite{samsonov2026statistical}.

\begin{lemma}\label{lem:rate_of_convergence}
Let $b, c_0 > 0$ and \(\alpha_k = \frac{c_0}{(k + k_0)^{\omega}}\) with \(\omega \in (\frac{1}{2},1)\), $k_0 \geq 0$. Assume that $bc_0 \leq\frac{1}{2}$ and $k_0^{1-\omega} \geq 8/(bc_0)$. 
Then for any $q \in (1;3]$, it holds that 
\[
\sum_{j=1}^{k} \alpha_j^{q} 
\prod_{\ell=j+1}^{k} (1 - \alpha_\ell b)
\leq
\frac{4}{b}\,\alpha_k^{q-1}.
\]
\end{lemma}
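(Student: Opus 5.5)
The plan is to compare the weighted sum with an integral and exploit the multiplicative structure of the products. Write $P_{j:k} = \prod_{\ell=j+1}^{k}(1-\alpha_\ell b)$ and $S_k = \sum_{j=1}^{k}\alpha_j^q P_{j:k}$. Since $P_{j:k} = (1-\alpha_k b)P_{j:k-1}$ for $j<k$, the sum satisfies the scalar recursion
\[
S_k = (1-\alpha_k b)\,S_{k-1} + \alpha_k^q,
\]
with $S_0 = 0$. I will prove $S_k \le \frac{4}{b}\alpha_k^{q-1}$ by induction on $k$. The base case $k=1$ reads $S_1 = \alpha_1^q \le \frac{4}{b}\alpha_1^{q-1}$. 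This is equivalent to $\alpha_1 b \le 4$, which follows from $bc_0 \le 1/2$ and $\alpha_1 \le c_0$.

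For the induction step, assume $S_{k-1} \le \frac{4}{b}\alpha_{k-1}^{q-1}$. It then suffices to show
\[
(1-\alpha_k b)\frac{4}{b}\alpha_{k-1}^{q-1} + \alpha_k^q \le \frac{4}{b}\alpha_k^{q-1},
\]
or equivalently
\[
(1-\alpha_k b)\Big(\frac{\alpha_{k-1}}{\alpha_k}\Big)^{q-1} \le 1 - \frac{b\alpha_k}{4}.
\]
The key quantitative input is a bound on the ratio of consecutive step sizes:
\[
\frac{\alpha_{k-1}}{\alpha_k} = \Big(1+\frac{1}{k-1+k_0}\Big)^{\omega} \le 1 + \frac{\omega}{k-1+k_0}.
\]
Since $q-1 \le 2$, this gives
\[
\Big(\frac{\alpha_{k-1}}{\alpha_k}\Big)^{q-1} \le 1 + \frac{3}{k-1+k_0},
\]
using $(1+x)^2 \le 1+3x$ for $x \le 1$, which holds because $k_0 \ge 1$ is forced by $k_0^{1-\omega} \ge 8/(bc_0) \ge 16$.

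The main obstacle is to show that this ratio correction is dominated by a fraction of the contraction $\alpha_k b$. Concretely, I need
\[
\frac{3}{k-1+k_0} \le \frac{b\alpha_k}{2}.
\]
Since $\alpha_k = c_0/(k+k_0)^{\omega}$ and $k+k_0 \le 2(k-1+k_0)$, it suffices that $(k+k_0)^{1-\omega} \ge 12/(bc_0)$ up to absolute constants. This is exactly where the assumption $k_0^{1-\omega} \ge 8/(bc_0)$ enters. The constants $8$ versus $12$ may need the slack provided by the factor $4$ in the conclusion, so I would track them carefully, possibly splitting off the factor $(1-\alpha_kb)\le 1$ more economically.

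Granting this comparison, we get
\[
(1-\alpha_k b)\Big(1+\frac{b\alpha_k}{2}\Big) \le 1-\frac{b\alpha_k}{2} \le 1-\frac{b\alpha_k}{4},
\]
which closes the induction. The hypothesis $bc_0 \le 1/2$ guarantees $1-\alpha_k b \in [1/2,1)$, so all factors in the products are positive and the recursion manipulations are legitimate.
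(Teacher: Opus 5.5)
Your induction is sound, and in substance it is the standard argument. The paper gives no proof of its own here; it defers to Appendix~I of \cite{samsonov2026statistical}. The usual route for such bounds factors out $\alpha_k^{q-1}$, writes $(\alpha_j/\alpha_k)^{q-1}$ as a product of consecutive ratios $(\alpha_{\ell-1}/\alpha_\ell)^{q-1}$, and absorbs each ratio into the factor $1-b\alpha_\ell$. It then telescopes via $\sum_{j} b'\alpha_j\prod_{\ell>j}(1-b'\alpha_\ell)\le 1$. Your recursion $S_k=(1-\alpha_k b)S_{k-1}+\alpha_k^q$ rests on the same per-step inequality $(1-\alpha_k b)(\alpha_{k-1}/\alpha_k)^{q-1}\le 1-b\alpha_k/4$, and replaces the telescoping identity by induction. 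It is equally short and self-contained.

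The one loose end is the constant comparison you left open ($8$ versus $12$). It does close, and the lossy step is $k+k_0\le 2(k-1+k_0)$. Set $m=k-1+k_0$. In the induction step $m>k_0\ge 16^{1/(1-\omega)}>256$, and $bc_0\,m^{1-\omega}\ge 8$.

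Either of two fixes works.

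\textbf{Fix 1: keep your target $3/m\le b\alpha_k/2$.} Use $(m+1)^\omega\le \tfrac{257}{256}m^\omega$ instead of the factor $2$. Then $\tfrac{b\alpha_k}{2}\ge \tfrac{256}{257}\cdot\tfrac{bc_0 m^{1-\omega}}{2m}\ge \tfrac{256}{257}\cdot\tfrac{4}{m}>\tfrac{3}{m}$.

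\textbf{Fix 2: spend the full slack of the target, as you suspected.} Write $(1-b\alpha_k)(1+3/m)\le 1+3/m-b\alpha_k$. It then suffices that $b\alpha_k\ge 4/m$. Even the crude bound $(m+1)^\omega\le 2m^\omega$ gives $b\alpha_k\ge bc_0 m^{1-\omega}/(2m)\ge 4/m$. This uses exactly the hypothesis $k_0^{1-\omega}\ge 8/(bc_0)$.

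With either fix, the remaining steps of your proposal (the base case, the ratio bound, and $(1+x)^2\le 1+3x$ for $x\le 1$) are correct as written, and the proof is complete.
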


\begin{lemma}\label{lem:alpha_delta}
Let $b, c_0 > 0$ and \(\alpha_k = \frac{c_0}{(k + k_0)^{\omega}}\) with \(\omega \in (\frac{1}{2},1)\), $k_0 \geq 0$. Assume that $bc_0 \leq\frac{1}{2}$ and $k_0^{1-\omega}\geq 8/(bc_0)$. Then it holds that
\[
    \frac{\alpha_k}{\alpha_{k+1}} \le 1 + b \alpha_{k+1}.
\]
\end{lemma}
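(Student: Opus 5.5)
This is Lemma \ref{lem:alpha_delta}. The plan is to work directly with the explicit form $\alpha_k = c_0 (k+k_0)^{-\omega}$ and reduce the claim to an elementary inequality. We have
\[
\frac{\alpha_k}{\alpha_{k+1}} = \Bigl(\frac{k+1+k_0}{k+k_0}\Bigr)^{\omega} = \Bigl(1 + \frac{1}{k+k_0}\Bigr)^{\omega}.
\]
Since $\omega\in(1/2,1)$, concavity of $x\mapsto x^\omega$ (Bernoulli's inequality for exponents in $(0,1)$) gives $(1+x)^\omega \le 1+\omega x$. Hence
\[
\frac{\alpha_k}{\alpha_{k+1}} \le 1 + \frac{\omega}{k+k_0}.
\]
It therefore suffices to show that $\frac{\omega}{k+k_0} \le b\,\alpha_{k+1} = \frac{b c_0}{(k+1+k_0)^{\omega}}$.

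Next I will rewrite this target as
\[
\omega\,(k+k_0+1)^{\omega} \le b c_0\,(k+k_0).
\]
Set $m = k+k_0 \ge k_0$. Using $(m+1)^\omega \le 2^{\omega} m^{\omega} \le 2m^\omega$ (valid for $m\ge 1$, which holds since $k_0^{1-\omega}\ge 8/(bc_0)\ge 16$ forces $k_0\ge 1$), the target follows from $2\omega\, m^{\omega} \le b c_0\, m$, i.e.\ $m^{1-\omega} \ge 2\omega/(b c_0)$. As $m\ge k_0$ and $m\mapsto m^{1-\omega}$ is increasing, this is implied by $k_0^{1-\omega}\ge 8/(bc_0) \ge 2\omega/(bc_0)$, which is the standing assumption (recall $\omega<1$).

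Here $k$ ranges over nonnegative integers, so $m\ge k_0$ throughout. The hypothesis $bc_0\le 1/2$ is not needed for this step; it is only used to guarantee $k_0\ge 1$ via $8/(bc_0)\ge 16$. There is no real obstacle. The only point requiring care is the crude bound $(m+1)^\omega\le 2m^\omega$, which needs $m\ge1$, together with verifying that the constant $8$ in the assumption dominates the constant $2\omega$ that the argument requires.
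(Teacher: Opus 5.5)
Your proof is correct and complete: the ratio equals $(1+1/(k+k_0))^{\omega}\le 1+\omega/(k+k_0)$ by concavity, and $k_0^{1-\omega}\ge 8/(bc_0)\ge 2\omega/(bc_0)$ absorbs $\omega/(k+k_0)$ into $b\alpha_{k+1}$ via $(m+1)^{\omega}\le 2m^{\omega}$, using $k_0\ge 1$, which you correctly obtain from $bc_0\le 1/2$. The paper gives no proof of its own and defers to Appendix~I of \cite{samsonov2026statistical}; your argument is a self-contained, elementary verification of this standard step-size estimate.
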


\begin{lemma}\label{lem:P_alpha_ineq}
Let $b, c_0 > 0$ and \(\alpha_k = \frac{c_0}{(k + k_0)^{\omega}}\) with \(\omega \in (\frac{1}{2},1)\), $k_0 \geq 0$. Assume that $bc_0 \leq\frac{1}{2}$ and $k_0^{1-\omega}\geq 8/(bc_0)$. Then it holds that
\[
    \alpha_j \prod_{l=j+1}^{k} (1 - \alpha_l b) \le \alpha_k \eqsp.
\]
\end{lemma}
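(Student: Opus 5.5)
The plan is an induction on $k \ge j$ for fixed $j$, with the one-step ratio bound of \Cref{lem:alpha_delta} doing all the work. For the base case $k=j$ the product is empty and equals $1$ by convention, so the claim reads $\alpha_j \le \alpha_j$.

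First I record that every factor $1-\alpha_\ell b$ lies in $[1/2,1)$. Indeed $\alpha_\ell \le \alpha_0 \le c_0$, and $bc_0 \le 1/2$ gives $b\alpha_\ell \le 1/2$. So multiplying an inequality by such a factor preserves its direction.

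For the inductive step, assume $\alpha_j\prod_{l=j+1}^{k}(1-\alpha_l b)\le \alpha_k$. Multiplying by $1-\alpha_{k+1}b>0$ gives
\begin{align}
\alpha_j\prod_{l=j+1}^{k+1}(1-\alpha_l b)\le \alpha_k\,(1-b\alpha_{k+1}) \eqsp.
\end{align}
It therefore suffices to show $\alpha_k(1-b\alpha_{k+1})\le \alpha_{k+1}$, which is equivalent to
\begin{align}
\frac{\alpha_k}{\alpha_{k+1}} \le \frac{1}{1-b\alpha_{k+1}} \eqsp.
\end{align}
\Cref{lem:alpha_delta} gives $\alpha_k/\alpha_{k+1}\le 1+b\alpha_{k+1}$. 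Moreover, for $x=b\alpha_{k+1}\in[0,1)$ we have $(1+x)(1-x)=1-x^2\le 1$, hence $1+x\le 1/(1-x)$. Chaining these two inequalities closes the induction.

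The only nontrivial input is \Cref{lem:alpha_delta}, which the paper takes as given. If I had to verify it directly, I would write
\begin{align}
\frac{\alpha_k}{\alpha_{k+1}}=\Bigl(1+\frac{1}{k+k_0}\Bigr)^{\omega}\le 1+\frac{\omega}{k+k_0} \eqsp,
\end{align}
using concavity of $x\mapsto x^{\omega}$. It then remains to check
\begin{align}
\frac{1}{k+k_0}\le \frac{bc_0}{(k+k_0+1)^{\omega}} \eqsp.
\end{align}
Since $(k+k_0+1)^{\omega}\le 2(k+k_0)^{\omega}$ for $k+k_0\ge 1$, this reduces to $2\le bc_0(k+k_0)^{1-\omega}$. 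That holds because $(k+k_0)^{1-\omega}\ge k_0^{1-\omega}\ge 8/(bc_0)$.

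This verification, specifically making the constant in $k_0^{1-\omega}\ge 8/(bc_0)$ suffice, is the step I would expect to need care. Everything else is elementary.
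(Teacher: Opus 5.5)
Your proof is correct and is the standard argument the paper defers to (Appendix~I of \cite{samsonov2026statistical}): write $\alpha_j/\alpha_k=\prod_{l=j+1}^{k}\alpha_{l-1}/\alpha_l$ (equivalently, induct on $k$), then combine \Cref{lem:alpha_delta} with $(1+b\alpha_l)(1-b\alpha_l)\le 1$. Your one implicit step, $b\alpha_\ell\le bc_0\le 1/2$, holds because $k_0^{1-\omega}\ge 8/(bc_0)\ge 16$ forces $k_0\ge 1$, and your direct verification of \Cref{lem:alpha_delta} is also valid, with ample room in the constant.
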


\subsection{Matrix form of the projected soft Bellman equation}
\begin{lemma}
\label{lem:projected_bellman_matrix_form}
Assume \Cref{assum:features}. Let \(\theta^\star\in\mathbb{R}^d\) be a solution of the projected soft Bellman equation \eqref{eq:projected_bellman}.
Then \(\theta^\star\) satisfies
\begin{align}\label{eq:theta_star_normal_equation}
\Phi^\top D_\mu
\left(
r+\gamma \MKQ V_{\theta^\star}^{\lambda}
-\Phi\theta^\star
\right)
=0.
\end{align}
\end{lemma}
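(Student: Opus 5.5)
The plan is to rewrite the projected Bellman equation \eqref{eq:projected_bellman} as its normal equations, using that $\Pi_\mu$ is the $\mu$-weighted least-squares projection onto $\mathrm{span}(\Phi)$. Let $D_\mu=\operatorname{diag}\{\mu(s,a)\}$, where $\mu(s,a)$ is the $(s,a)$-marginal of the stationary law of $z_k$. The inner product inducing $\Pi_\mu$ is $\langle f,g\rangle_\mu=f^\top D_\mu g$.

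\textbf{Orthogonality characterization.} For any $f\in\rset^{SA}$, $\Pi_\mu f=0$ if and only if $f$ is $\mu$-orthogonal to every column of $\Phi$. In matrix form this reads $\Phi^\top D_\mu f=0$. This is the defining property of an orthogonal projection: $\Pi_\mu f$ is the unique element of $\mathrm{span}(\Phi)$ with $f-\Pi_\mu f\perp_\mu \mathrm{span}(\Phi)$. If $\Pi_\mu f=0$, then $f$ itself is orthogonal to the span.

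\textbf{Applying it.} Take $f=r+\gamma\MKQ V^\lambda_{\theta^\star}-Q_{\theta^\star}$ and substitute $Q_{\theta^\star}=\Phi\theta^\star$. This gives \eqref{eq:theta_star_normal_equation} directly. Comparing with \eqref{eq:F_matrix_repr} also shows that this equation is exactly $\bar F(\theta^\star)=0$.

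\textbf{Degenerate case.} The only subtlety is that $D_\mu$ may be singular if some pairs have $\mu(s,a)=0$, so that $\langle\cdot,\cdot\rangle_\mu$ is only a semi-inner product. By \Cref{assum:features}, $\Phi^\top D_\mu\Phi=\Sigma_\phi^{\pi_b}\succeq\kappa I$ is invertible, so $\Pi_\mu=\Phi(\Phi^\top D_\mu\Phi)^{-1}\Phi^\top D_\mu$ is well defined. Then $\Pi_\mu f=0$ means $\Phi(\Sigma_\phi^{\pi_b})^{-1}\Phi^\top D_\mu f=0$. To drop the left factor $\Phi(\Sigma_\phi^{\pi_b})^{-1}$, I need $\Phi$ to have full column rank. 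This holds because $\Phi v=0$ would imply $v^\top\Sigma_\phi^{\pi_b}v=0$, contradicting $\Sigma_\phi^{\pi_b}\succeq\kappa I$. So $\Phi$ is injective, $(\Sigma_\phi^{\pi_b})^{-1}$ is invertible, and the conclusion $\Phi^\top D_\mu f=0$ follows.
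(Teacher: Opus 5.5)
Your proof is correct and follows essentially the same route as the paper. Both arguments rest on the explicit formula $\Pi_\mu=\Phi(\Phi^\top D_\mu\Phi)^{-1}\Phi^\top D_\mu$, which is well defined because $\Phi^\top D_\mu\Phi=\Sigma_\phi^{\pi_b}\succeq\kappa I$, and on cancelling the injective left factor; the only difference is that the paper first uses $\Pi_\mu Q_{\theta^\star}=Q_{\theta^\star}$ to solve for $\theta^\star$ and then multiplies back by $\Phi^\top D_\mu\Phi$, whereas you apply the projection directly to the residual, a slightly more streamlined version of the same argument.
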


\begin{proof}
Since \(Q_{\theta^\star}=\Phi\theta^\star\in\mathrm{span}(\Phi)\), we have
\(
\Pi_\mu Q_{\theta^\star}=Q_{\theta^\star}.
\)
Therefore, \eqref{eq:projected_bellman} is equivalent to
\(
Q_{\theta^\star}
=
\Pi_\mu \mathcal{T}_\lambda Q_{\theta^\star}.
\)
Moreover, the \(D_\mu\)-orthogonal projection onto \(\mathrm{span}(\Phi)\) is given by
\begin{equation}\label{eq:projection_operator}
\Pi_\mu
=
\Phi(\Phi^\top D_\mu\Phi)^{-1}\Phi^\top D_\mu.
\end{equation}
Hence, by definition of the soft Bellman equation and \eqref{eq:projection_operator}
\[
\Phi\theta^\star
=
\Phi(\Phi^\top D_\mu\Phi)^{-1}
\Phi^\top D_\mu
\left(
r+\gamma \MKQ V_{\theta^\star}^{\lambda}
\right).
\]
Since \(\Phi^\top D_\mu\Phi\) is invertible, the columns of \(\Phi\) are linearly independent in the \(D_\mu\)-inner product. Thus the coordinates in the representation over \(\mathrm{span}(\Phi)\) are unique, and we obtain
\[
\theta^\star
=
\bigl(\Phi^\top D_\mu\Phi\bigr)^{-1}
\Phi^\top D_\mu
\left(
r+\gamma \MKQ V_{\theta^\star}^{\lambda}
\right).
\]
Multiplying both sides by \(\Phi^\top D_\mu\Phi\) gives
\[
\Phi^\top D_\mu\Phi\,\theta^\star
=
\Phi^\top D_\mu
\left(
r+\gamma \MKQ V_{\theta^\star}^{\lambda}
\right),
\]
which completes the proof.
\end{proof}
\end{document}